\documentclass{article} 
\usepackage{iclr2027_conference,times}

\usepackage[utf8]{inputenc} 
\usepackage[T1]{fontenc}    
\usepackage{hyperref}       
\usepackage{url}            
\usepackage{booktabs}       
\usepackage{amsfonts}       
\usepackage{nicefrac}       
\usepackage{microtype}      
\usepackage{xcolor}         
\usepackage{amsmath,amsthm}
\usepackage{enumitem}
\usepackage{thm-restate}
\usepackage{amssymb}
\usepackage{wrapfig}

\usepackage[most]{tcolorbox}
\newtheorem{theorem}{Theorem}[section]
\newtheorem{proposition}[theorem]{Proposition}

\newtheorem{remark}[theorem]{Remark}

\usepackage{tikz}
\usetikzlibrary{arrows.meta,positioning,calc,decorations.pathreplacing}

\definecolor{figblue}{RGB}{38,92,158}
\definecolor{figgreen}{RGB}{38,108,78}
\definecolor{figgray}{RGB}{112,120,132}

\newcommand{\cC}{\mathcal{C}}

\newcommand{\cR}{\mathcal{R}}
\newcommand{\cS}{\mathcal{S}}

\newcommand{\cX}{\mathcal{X}}

\newcommand{\R}{\mathbb{R}}

\newcommand{\E}{\mathbb{E}}
\newcommand{\Prob}{\mathbb{P}}

\DeclareMathOperator*{\argmin}{arg\,min}

\newcommand{\Var}{\mathrm{Var}}
\newcommand{\Cov}{\mathrm{Cov}}
\newcommand{\Tr}{\mathrm{Tr}}
\newcommand{\err}{\mathrm{err}}
\newcommand{\sign}{\textnormal{sign}}

\newcommand{\Ur}{\mathcal{U}_r}

\title{Which Tasks Survive Self-Supervised \\ Learning?}

\author{Achleshwar Luthra \hspace{1em} Lucas Bryant \hspace{1em} Tracy Zhu \hspace{1em} Tomer Galanti\thanks{Correspondence to \texttt{galanti@tamu.edu}}  \\
Department of Computer Science and Engineering\\
Texas A\&M University \\
}

\iclrfinalcopy 
\begin{document}

\maketitle

\begin{abstract}
Same-instance self-supervised learning (SSL) learns representations by enforcing consistency across two views of the same underlying instance. This principle alone, however, does not determine which downstream tasks remain recoverable from the learned representation. We study this question through \emph{semantic recoverability}, defined as the amount of a task's posterior score captured by the represented function space. We show that, for centered and whitened representations, recoverability exactly determines directional class-distance-normalized variance (CDNV), controls few-shot nearest-centroid classification, and governs the strength of task-relevant semantic directions. The population linear probe and centroid axis coincide, and multiple well-recovered tasks approach a factorial centroid geometry. We then analyze a canonical two-view SSL objective and show that its population optimum spans the leading cross-view-stable modes of the associated two-view operator. This yields a closed-form spectral characterization of semantic recoverability: a downstream task is preserved to the extent that its posterior lies in the selected spectral subspace. We validate these predictions on synthetic and real datasets across several SSL methods, testing the predicted relationships among recoverability, directional geometry, spectral structure, and few-shot transfer. Together, these results give a task-level account of what information survives same-instance SSL and how the retained information appears in downstream geometry and transfer.
\end{abstract}

\section{Introduction}
\label{sec:intro}

Self-supervised learning (SSL)~\citep{balestriero2023cookbookselfsupervisedlearning} learns representations without labels and performs well across vision, language, speech, and multimodal learning~\citep{pmlr-v119-chen20j,He_2020_CVPR,zbontar2021barlow,he2022masked,oquab2024dinov2learningrobustvisual,gao-etal-2021-simcse,reimers-gurevych-2019-sentence,schneider2019wav2vecunsupervisedpretrainingspeech,10.5555/3495724.3496768,10.1109/TASLP.2021.3122291,pmlr-v162-baevski22a,pmlr-v139-radford21a,pmlr-v139-jia21b,Zhai_2023_ICCV,tschannen2025siglip2multilingualvisionlanguage}. A common principle is simple: two views of the same instance should receive compatible representations. Yet two views can share many kinds of information: semantic content, style, background, nuisance structure, or accidental regularities. Agreement alone therefore does not reveal which downstream distinctions survive.

This leads to a basic task-level question: given a representation learned only from paired views, which semantic tasks can still be recovered from it? For example, a representation learned from color-augmented face images might preserve whether a person is smiling while losing information about hair color. Existing theory provides two pieces of the answer. Operator and spectral analyses characterize which cross-view-stable directions are preferred by idealized SSL objectives~\citep{zhai2025contextures,zhai2025contexturesmechanismrepresentationlearning,pmlr-v139-ermolov21a,weng2022an}, while geometric analyses show that SSL representations can develop strong label-aligned directional structure without global class collapse~\citep{luthra2025selfsupervisedcontrastivelearningapproximately,wang2020understanding,garrido2023on,balestriero2022contrastive,pmlr-v202-qiu23a}. We therefore ask:

\begin{tcolorbox}[colback=blue!5!white, colframe=blue!10!black, arc=1pt]
\centering
\textbf{\em Which downstream tasks survive same-instance SSL, and what geometry and few-shot behavior follow from their recoverability?}
\end{tcolorbox}

Our starting point is that this question is task-dependent. A representation can preserve one semantic partition while discarding another. For a fixed downstream task, the relevant object is the part of its posterior score that remains visible in the represented function space, as illustrated in Fig.~\ref{fig:Bf_illustration}. We call this quantity \emph{captured posterior energy}, \(B(F)\), and use it to measure semantic recoverability independently of how the representation was learned.

Our theory addresses two questions: why semantic recoverability matters, and when SSL makes a task recoverable. Our contributions are threefold. {\bf (i)} We formulate task survival through captured posterior energy, separating recoverability from any particular SSL objective. {\bf (ii)} For centered and whitened representations, we derive an exact relation between recoverability and directional class-distance-normalized variance (CDNV)~\citep{luthra2025selfsupervisedcontrastivelearningapproximately}, establish probe--centroid alignment, obtain a direct finite-sample guarantee for few-shot nearest-centroid classification, and characterize the joint centroid geometry of multiple well-recovered tasks. {\bf (iii)} For a canonical two-view population SSL objective, we show that the learned subspace is spanned by the leading cross-view-stable modes of the associated two-view operator. Consequently, semantic recoverability is exactly the spectral overlap between the task posterior and the selected subspace, yielding a criterion for which downstream tasks survive SSL. We further extend this subspace perspective to covariance-regularized and prediction-based SSL surrogates in the appendix.

\subsection{Related Work}\label{sec:related}

Our work connects two lines of SSL theory that are usually studied separately: task-relevant representation geometry and the population subspaces selected by two-view objectives.

{\bf Directional geometry and downstream transfer.\enspace}
SSL representations can support strong downstream classification and retrieval, while remaining globally anisotropic and retaining substantial variance in nuisance directions~\citep{pmlr-v119-chen20j,Caron_2021_ICCV,shaul2023reverse,weng2025clusteringpropertiesselfsupervisedlearning,oquab2024dinov2learningrobustvisual,wang2020understanding,wang2021understanding,chen2021intriguing,NEURIPS2021_27debb43,arora2019theoreticalanalysiscontrastiveunsupervised}. This motivates task-aware geometric measures rather than global collapse. In particular, directional CDNV isolates within-class variation along the semantic decision direction and has been shown to track linear-probe and few-shot transfer across SSL methods~\citep{luthra2025selfsupervisedcontrastivelearningapproximately}. Related work on neural collapse and few-shot transfer connects class geometry to generalization~\citep{doi:10.1073/pnas.2015509117,han2022neural,galanti2022on,galanti2022improved,galanti2023generalizationboundsfewshottransfer,10377311,pmlr-v119-goldblum20a}. Our goal is complementary: we ask what property of a label-free learned subspace predicts whether this favorable task geometry will appear.

{\bf Spectral and operator views of SSL.\enspace}
Many SSL methods combine cross-view agreement with mechanisms that prevent representational collapse~\citep{wang2020understanding,zbontar2021barlow,bardes2022vicreg,garrido2023on,balestriero2022contrastive,pmlr-v139-ermolov21a}. At the population level, whitening-style formulations connect naturally to CCA, maximal correlation, and conditional expectation operators~\citep{hotelling1936relations,Hannan_1961,breiman1985estimating,lai2000kernel,6788402,fukumizu07a,pmlr-v28-andrew13,pmlr-v48-michaeli16,asoodeh2015maximalcorrelationmutualinformation}. Recent analyses characterize the subspace selected by such objectives through the leading spectrum of an induced two-view operator~\citep{zhai2025contextures,zhai2025contexturesmechanismrepresentationlearning}. We use this characterization for computing which task posteriors are recoverable from the selected subspace.

{\bf Relation to broader SSL theory.\enspace}
A broader literature studies why SSL captures useful structure through mutual-information views~\citep{NEURIPS2019_ddf35421,pmlr-v108-mcallester20a,Tschannen2020On}, alignment and uniformity~\citep{wang2020understanding,wang2021understanding,chen2021intriguing}, latent-variable recovery and augmentation structure~\citep{arora2019theoreticalanalysiscontrastiveunsupervised,tosh2021contrastive,zimmermann2021contrastive,pmlr-v151-ash22a,NEURIPS2021_2dace78f,NEURIPS2021_27debb43,pmlr-v162-shen22d,pmlr-v162-awasthi22b,pmlr-v162-saunshi22a}, and optimization, projection heads, and sample complexity~\citep{haochen2023a,pmlr-v195-parulekar23a,pmlr-v139-wen21c,tian2023understanding,NEURIPS2020_4c2e5eaa,gupta2022understandingimprovingroleprojection,gui2023unravelingprojectionheadscontrastive,alon2024optimal}. Our question is narrower and task-conditional: given the information retained by a learned representation, which downstream posterior scores survive, and what geometric and few-shot consequences follow?

\section{Problem Setup}
\label{sec:setup}

We consider a latent instance that generates an observed sample and two conditionally independent views, together with a semantic task defined at the instance level. A representation is learned from paired views without access to the task labels and is then evaluated by how well the task can be recovered from a small labeled support set.

\subsection{Latent Instance Model and Two-View Generation}
\label{subsec:latent_instance_model}

Let \(C\) denote the \emph{latent instance} (for example, the underlying object, scene, or document) taking values in a measurable space \(\cC\), with distribution \(P_C\). The semantic label is determined at the instance level: $Y ~=~ y(C)\in\{\pm 1\}$, $\Prob(Y=+1)~=~\Prob(Y=-1)~=~\tfrac{1}{2}$. Thus \(C\) may contain much more information than \(Y\): many latent instances can share the same label.

We let \(X^{(0)}\) denote the sample associated with \(C\), drawn as \(X^{(0)}\sim P_0(\cdot\mid C)\). A positive pair is formed by drawing two independent views, \(X^{(1)},X^{(2)}\stackrel{\mathrm{i.i.d.}}{\sim}P(\cdot\mid X^{(0)})\). In vision, for example, \(C\) may encode image content, while each view is produced by a random augmentation such as cropping, color jitter, or blur. Let \(P_X\) denote the marginal law of a single view.

\subsection{Downstream Few-Shot Learning}
\label{subsec:directional_geometry}

Let \(F:\cX\to\R^r\) be a representation. We evaluate its downstream quality through the expected \(m\)-shot error of a classifier built on top of \(F\). Given a support set with \(m\) labeled examples per class, with all support examples sampled independently, $S_m=\{(X_{s,i},s): s\in\{\pm1\}$, $X_{s,i}\sim P(\cdot\mid Y=s)$, ($i\in[m]$)$\}$, let \(\hat g^{\mathrm{A}}_{S_m}\) denote the classifier returned by a learning rule \(\mathrm{A}\) from the embedded support set. We define $\textnormal{err}^{\mathrm{A}}_{m}(F)
:=
\E_{S_m}\!\left[
\Prob_{(X,Y)}\!\big(\hat g^{\mathrm{A}}_{S_m}(F(X))\neq Y\big)\right]$,
where \((X,Y)\) is drawn from the single-view distribution induced by the latent-instance model.

We focus on nearest-class-centroid (NCC) classification: $g^{\mathrm{NCC}}_{S_m}(z) := \argmin_{s\in\{\pm1\}} \|z-\widehat \mu_s\|_2$, where $\widehat \mu_s := \frac{1}{m}\sum_{i=1}^m F(X_{s,i})$. This gives an operational notion of task survival: a useful representation should make the task learnable from only a small labeled support set. The next section identifies a population quantity that controls when this happens.

\section{Semantic Recoverability and Its Consequences}
\label{sec:semantic_signal}

\definecolor{figgray}{RGB}{75,82,92}
\definecolor{figblue}{RGB}{25,95,210}



\begin{figure}[t]
\centering
\usetikzlibrary{shapes.arrows, calc}

\begin{tikzpicture}[
  font=\footnotesize,
  stage/.style={
    rounded corners=2.2pt,
    draw=figgray!22,
    line width=0.6pt,
    fill=figgray!5,
    align=center,
    inner xsep=4pt,
    inner ysep=5.5pt
  },
  key/.style={
    rounded corners=2.2pt,
    draw=figblue!22,
    line width=0.75pt,
    fill=figblue!6,
    align=center,
    inner xsep=4pt,
    inner ysep=6.5pt
  },
  resultbox/.style={
    rounded corners=2.2pt,
    draw=figblue!22,
    line width=0.65pt,
    fill=figblue!6,
    inner xsep=8pt,
    inner ysep=6pt
  },
  blockarrow/.style={
    single arrow,
    single arrow head extend=2.2pt,
    fill=figblue!85,
    draw=none,
    minimum width=6.5pt,   
    minimum height=1cm,    
    inner ysep=0pt,
    anchor=center
  },
  tag/.style={
    font=\tiny,
    text=figgray!85,
    align=center
  }
]

\node[stage] (A) at (0,0)
  {\textbf{Task stable}\\across views};

\node[key] (B) at (3.70,0)
  {\textbf{Recoverability}\\[4pt]
   $B_r=\sum_{j\le r}\beta_j^2$};

\node[resultbox, anchor=west] (C) at (5.80,0)
{%
\setlength{\tabcolsep}{0pt}%
\renewcommand{\arraystretch}{1.45}%
\begin{tabular}{@{}r@{\hspace{0.4em}}l@{\hspace{1.1em}}l@{}}
\textbf{(i)}   & low directional CDNV
  & $\widetilde V_F=\frac{1-B_r}{2B_r}$\\
\textbf{(ii)}  & few-shot NCC
  & $\mathrm{err}^{\mathrm{NCC}}_m\lesssim 1-B_r+\frac{r}{m}$\\
\textbf{(iii)} & multitask geometry
  & $m_Y\approx\bigl(\sqrt{B_r^{(t)}}\,Y_t\bigr)_{t\le k}$
\end{tabular}%
};

\node[blockarrow, minimum height=0.80cm]
  at ($(A.east)!0.5!(B.west)$) {};

\node[tag] at ($(A.east)!0.5!(B.west) + (0,0.58)$)
  {top-$r$ modes\\of $T$};

\node[blockarrow, minimum height=0.52cm]
  at ($(B.east)!0.5!(C.west)$) {};

\end{tikzpicture}

\caption{%
Conceptual flow of the theory. Same-instance SSL retains the most cross-view-stable
modes of the two-view operator $T$ (Prop.~\ref{prop:same_instance_ssl_posterior}), so a
task survives to the extent that its posterior overlaps that subspace
(Cor.~\ref{cor:posterior_decomp_ssl_basis}). Under centered whitening, $B_r$ fixes the
directional collapse law (Prop.~\ref{prop:Br_probe_dcdnv}) and few-shot nearest-centroid
guarantee (Thm.~\ref{thm:ncc_bound_direct_via_Br}); the task-specific $B_r^{(t)}$ determine
the multitask centroid geometry (Thm.~\ref{thm:axis_and_centroids}).}
\label{fig:theory_flow}
\end{figure}

\subsection{Captured Posterior Energy}
\label{subsec:posterior_projection}

For the balanced task, define the posterior score \(\eta(x):=\E[Y\mid X=x]\). The posterior contains the information about \(Y\) available from a single view. For a representation \(F=(F_1,\dots,F_r)\), let \(\cS_F:=\mathrm{span}\{F_1,\dots,F_r\}\subset L^2(P_X)\) denote its represented function space. We define the \emph{captured posterior energy} \(B(F):=\|\Pi_{\cS_F}\eta\|_{L^2(P_X)}^2\), where \(\Pi_{\cS_F}\) is the orthogonal projection onto \(\cS_F\).

This definition does not require SSL or whitening. It asks directly how much of the Bayes-relevant posterior score remains in the function space represented by \(F\). The dependence on the downstream task is intentional: the same representation may preserve one semantic partition and discard another. If one wants a literal fraction of the single-view task signal that is retained, it is \(B(F)/\|\eta\|_{L^2(P_X)}^2\). In particular, large \(B(F)\) means that most of the posterior is expressible through the representation, while small \(B(F)\) means that the task is largely absent from the represented subspace.

The quantity is invariant to invertible changes of coordinates in feature space because such changes leave \(\cS_F\) unchanged. Whitening enters below for another reason: it converts this intrinsic subspace quantity into exact Euclidean statements about feature geometry and few-shot classification.

\subsection{Why Recoverability Matters Under Whitening}
\label{sec:recoverability_consequences}

Assume for the remainder of this section that \(F\) is centered and whitened, so that \(\E[F(X)]=0\) and \(\E[F(X)F(X)^\top]=I_r\). The coordinate functions of \(F\) are then orthonormal in \(L^2(P_X)\), and the intrinsic recoverability quantity has the simple feature-space form \(B(F)=\|\E[YF(X)]\|_2^2\).

Let \(\mu_s:=\E[F(X)\mid Y=s]\), \(\Sigma_s:=\Cov(F(X)\mid Y=s)\), and \(\Delta:=\mu_+-\mu_-\). When \(\Delta\neq0\), let \(u:=\Delta/\|\Delta\|_2\). The CDNV (\(V_F\)) and directional CDNV (\(\tilde{V}_F\)) are
\begin{equation}
\label{eq:problem_dcdnv_vector_binary}
V_F := (\Tr(\Sigma_+)+\Tr(\Sigma_-))/\|\Delta\|_2^2,
\qquad
\tilde V_F := (u^\top \Sigma_+u+u^\top \Sigma_-u)/\|\Delta\|_2^2.
\end{equation}
When \(\Delta=0\), we set \(V_F=\tilde V_F=+\infty\). Under whitening, total within-class variation can remain large even when variation along the semantic direction is small.

\subsubsection{Exact directional-collapse law}
\label{subsec:sslnc1}

\begin{restatable}{proposition}{cdnv}
\label{prop:Br_probe_dcdnv}
Let \(F:\cX\to\R^r\) be centered and whitened, and let \(Y\in\{\pm1\}\) be balanced. Then \(V_F=(r-B(F))/(2B(F))\) and \(\tilde V_F=(1-B(F))/(2B(F))\), with both quantities equal to \(+\infty\) when \(B(F)=0\).
\end{restatable}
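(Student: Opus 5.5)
The plan is to express everything through the single vector \(v:=\E[YF(X)]\in\R^r\), using that \(B(F)=\|v\|_2^2\) under centering and whitening, as noted above. By balance, \(\E[F(X)]=\tfrac12(\mu_++\mu_-)=0\), so \(\mu_-=-\mu_+\). Moreover \(v=\tfrac12\mu_+-\tfrac12\mu_-=\mu_+\). Hence \(\Delta=2v\), \(\|\Delta\|_2^2=4B(F)\), and \(u=v/\|v\|_2\) whenever \(B(F)>0\). If \(B(F)=0\), then \(\Delta=0\), and both quantities are \(+\infty\) by convention. This handles the degenerate case immediately.

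For the numerators I will use the law of total covariance. Since the mean is zero,
\[
I_r=\E[F F^\top]=\tfrac12\bigl(\Sigma_++\mu_+\mu_+^\top\bigr)+\tfrac12\bigl(\Sigma_-+\mu_-\mu_-^\top\bigr)=\tfrac12(\Sigma_++\Sigma_-)+vv^\top .
\]
This gives \(\Sigma_++\Sigma_-=2(I_r-vv^\top)\). Taking the trace gives \(\Tr(\Sigma_+)+\Tr(\Sigma_-)=2(r-B(F))\). Taking the quadratic form with \(u\) gives \(u^\top(\Sigma_++\Sigma_-)u=2(1-\|v\|_2^2)=2(1-B(F))\), since \(u^\top vv^\top u=\|v\|_2^2\).

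Dividing each numerator by \(\|\Delta\|_2^2=4B(F)\) yields \(V_F=(r-B(F))/(2B(F))\) and \(\tilde V_F=(1-B(F))/(2B(F))\).

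The argument is essentially mechanical. The only point needing care is the identification \(B(F)=\|\E[YF]\|^2\) from orthonormality of the coordinates, which the paper already records, together with the use of balance to get \(\mu_-=-\mu_+\). I expect no real obstacle beyond keeping the factors of \(\tfrac12\) straight. As a side remark, the identity also shows \(B(F)\le1\), because \(I_r-vv^\top\succeq0\) as a sum of covariances.
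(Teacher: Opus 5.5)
Your proposal is correct and follows the paper's proof essentially step for step: balance and centering give \(\mu_-=-\mu_+=-\E[YF(X)]\), orthonormality of the coordinates gives \(\|\Delta\|_2^2=4B(F)\), and the law of total covariance gives \(\Sigma_++\Sigma_-=2(I_r-\mu_+\mu_+^\top)\), whose trace and \(u\)-quadratic form yield the two formulas. The one step you cite rather than prove, \(B(F)=\|\E[YF(X)]\|_2^2\), is verified inline in the paper's proof via \(\E[YF(X)]=\E[\eta(X)F(X)]\), so that \((\mu_+)_i=\langle\eta,f_i\rangle\) and hence \(\Pi_{\cS_F}\eta=\sum_i(\mu_+)_if_i\).
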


Prop.~\ref{prop:Br_probe_dcdnv} gives an exact population law: directional CDNV vanishes as recoverability approaches one and diverges as recoverability vanishes. Thus the representation need not collapse globally for the task to become geometrically simple; it only needs to collapse along the semantic direction that survives in the represented subspace.

\begin{remark}[Directional vs.\ global collapse]
\label{rem:directional_vs_global}
Even at \(B(F)=1\), the global CDNV is \(V_F=(r-1)/2\), large whenever \(r\ge3\). Whitening fixes the total representation variance, so within-class spread remains in directions orthogonal to the task axis. Global collapse is therefore neither expected nor necessary. When the label is fully determined by a single view, \(Y=g(X)\), we have \(B(F)=1\) if and only if \(\eta\in\cS_F\), giving perfect directional collapse despite potentially large global CDNV.
\end{remark}

\subsubsection{Probe--centroid alignment}
\label{subsec:sslnc3}

A second consequence of whitening is that the population squared-loss probe and the centroid decision axis coincide.

\begin{restatable}[MSE probe and decision-axis alignment]{corollary}{duality}
\label{cor:nc3_mse_duality}
Let \(F:\cX\to\R^r\) be centered and whitened, and let \(Y\in\{\pm1\}\) be balanced. Define \(\mu_s:=\E[F(X)\mid Y=s]\) for \(s\in\{\pm1\}\), and let \(\Delta:=\mu_+-\mu_-\). Then the unique population MSE probe, \((w^\star,b^\star)\in\argmin_{w\in\R^r,b\in\R}\E[(Y-(w^\top F(X)+b))^2]\), satisfies \(b^\star=0\) and \(w^\star=\E[YF(X)]=\tfrac12\Delta\). In particular, \(\mu_-=-\mu_+\), and \(\Delta=2\mu_+=2w^\star\).
\end{restatable}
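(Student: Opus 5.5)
The plan is to solve the population least-squares problem directly through its normal equations, which become trivial under centering and whitening, and then to read off the centroid relations from balancedness. Write \(\phi(X):=(F(X),1)\in\R^{r+1}\). The objective \(\E[(Y-w^\top F(X)-b)^2]\) is a convex quadratic in \((w,b)\) with Hessian \(2\E[\phi\phi^\top]\). Because \(\E[F]=0\) and \(\E[FF^\top]=I_r\), this Hessian equals \(2I_{r+1}\), which is positive definite. So the minimizer exists and is unique, which settles the uniqueness claim.

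Setting the gradient to zero gives two equations:
\[
\E[F(X)(Y-w^\top F(X)-b)]=0,
\qquad
\E[Y-w^\top F(X)-b]=0.
\]
The second reduces to \(b=\E[Y]-w^\top\E[F]=0\), using balancedness \(\E[Y]=0\) and centering. Substituting \(b=0\) into the first and using \(\E[FF^\top]=I_r\) yields \(w^\star=\E[YF(X)]\).

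Next we express \(\E[YF]\) through the class centroids. Conditioning on \(Y\) with \(\Prob(Y=\pm1)=\tfrac12\) gives
\[
\E[YF]=\tfrac12\mu_+-\tfrac12\mu_-=\tfrac12\Delta.
\]
By the same conditioning, centering gives \(0=\E[F]=\tfrac12(\mu_++\mu_-)\), so \(\mu_-=-\mu_+\). Hence \(\Delta=2\mu_+\), and therefore \(\Delta=2w^\star\).

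There is no real obstacle here. The only point needing care is that the population moments are well-defined, i.e.\ \(F\in L^2\), which is implicit in whitening. Uniqueness then comes for free from the identity Hessian.
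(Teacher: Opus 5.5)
Your proposal is correct and follows essentially the same route as the paper. The paper expands the loss as \(1-2w^\top\E[YF(X)]+\|w\|_2^2+b^2\), which is the same content as your identity Hessian plus normal equations. It then reads off \(\E[YF(X)]=\tfrac12\Delta\) and \(\mu_-=-\mu_+\) from balancedness and centering, exactly as you do.
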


Cor.~\ref{cor:nc3_mse_duality} shows that linear probing and NCC use the same separating direction. Moreover, \(\|w^\star\|_2^2=B(F)\), so the strength of the population probe is itself a measurement of semantic recoverability.

\subsubsection{Few-shot transfer guarantees}
\label{subsec:sslnc4}

The geometric identities above translate recoverability into an operational downstream guarantee. In the whitened setting, the same scalar that controls directional geometry also controls the expected error of an \(m\)-shot nearest-centroid classifier.

\begin{restatable}[Few-shot NCC bound via captured posterior energy]{theorem}{nccBound}
\label{thm:ncc_bound_direct_via_Br}
Let \(F:\cX\to\R^r\) be centered and whitened, and let \(Y\in\{\pm1\}\) be balanced. Then, for every \(m\ge1\),
\begin{small}
\[
\mathrm{err}^{\mathrm{NCC}}_{m}(F)
\le
1-B(F)+\frac{r-B(F)}{m}
+\frac{1-B(F)}{1-B(F)+2mB(F)}
=
\frac{2\tilde V_F}{1+2\tilde V_F}
+\frac{(r-1)+2r\tilde V_F}{m(1+2\tilde V_F)}
+\frac{\tilde V_F}{\tilde V_F+m},
\]
\end{small}
where the second expression is understood by continuity when \(B(F)=0\).
\end{restatable}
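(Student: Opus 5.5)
The plan is to reduce everything to first and second moments that whitening pins down exactly, and then to use a union bound with Markov/Chebyshev/Cantelli-type inequalities. The third term \((1-B)/(1-B+2mB)\) will come from a one-sided Chebyshev (Cantelli) bound on the estimated centroid gap projected onto the population axis \(u\). The first two terms will come from a second-moment bound on the test point's fluctuation and the centroid estimation noise.

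\textbf{Step 1 (exact moment structure).} Write \(w:=\E[YF(X)]\), so \(\|w\|_2^2=B:=B(F)\). By Cor.~\ref{cor:nc3_mse_duality}, \(\mu_\pm=\pm w\) and \(\Delta=2w\). Expanding \(I_r=\E[FF^\top]=\tfrac12\sum_s(\Sigma_s+\mu_s\mu_s^\top)\) gives the average within-class covariance
\[
\bar\Sigma:=\tfrac12(\Sigma_++\Sigma_-)=I_r-ww^\top .
\]
Hence \(\Tr\bar\Sigma=r-B\) and \(u^\top\bar\Sigma u=1-B\). These are the identities behind Prop.~\ref{prop:Br_probe_dcdnv}, and the second expression in the statement follows from the first by substituting \(B=1/(1+2\tilde V_F)\). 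Next write \(\widehat\mu_s=sw+\varepsilon_s\), where the \(\varepsilon_s\) are independent, mean zero, with \(\Cov(\varepsilon_s)=\Sigma_s/m\). Set \(\widehat\Delta=\widehat\mu_+-\widehat\mu_-=2w+\varepsilon_+-\varepsilon_-\).

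\textbf{Step 2 (the centroid-direction event).} The scalar \(u^\top\widehat\Delta\) has mean \(2\sqrt B\) and variance \((u^\top\Sigma_+u+u^\top\Sigma_-u)/m=2(1-B)/m\). Cantelli then gives
\[
\Prob\bigl(u^\top\widehat\Delta\le0\bigr)\le \frac{2(1-B)/m}{2(1-B)/m+4B}=\frac{1-B}{1-B+2mB},
\]
which is exactly the third term. This is the event that the support set points the classifier the wrong way along the semantic axis, and I will simply union-bound it away.

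\textbf{Step 3 (test-point error).} For a test pair \((X,Y)\), NCC errs iff
\[
Y\bigl(F(X)-\tfrac12(\widehat\mu_++\widehat\mu_-)\bigr)^\top\widehat\Delta\le0 .
\]
Write \(F(X)=Yw+\xi\), where \(\xi\) has conditional covariance \(\Sigma_Y\), and substitute the decomposition of \(\widehat\mu_\pm\). The margin becomes a signal part \(\approx\|w\|\cdot u^\top\widehat\Delta\) plus noise from \(\xi\) and from \(\varepsilon_\pm\). I will bound the error probability by a Chebyshev/Markov ratio of noise energy to squared signal, treating the two noise sources separately. The component of \(\xi\) along \(u\) contributes variance \(u^\top\bar\Sigma u=1-B\) relative to signal \(B\). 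After normalizing (using that the error indicator is at most \(1\)), this should give the term \(1-B\). The estimation noise, averaged over the support set, contributes total variance \(\Tr(\bar\Sigma)/m=(r-B)/m\). Orthogonal directions of \(\xi\) enter only through the random part of \(\widehat\Delta\), so their effect is also \(O(\Tr\bar\Sigma/m)\).

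\textbf{Main obstacle.} The difficulty is Step 3: making the constants come out exactly as \(1-B\) and \((r-B)/m\) with no extra factors, given that the signal strength \(u^\top\widehat\Delta\) is itself random. My first attempt is to condition on the support set on the event \(u^\top\widehat\Delta>0\). There I will apply a Markov bound of the form \(\Prob(\text{error})\le\E[\text{noise}^2]/(\text{signal})^2\), with the normalization chosen so that the bound is never worse than \(1\). I then take expectations, using independence of the test point from the support set and the identities \(u^\top\bar\Sigma u=1-B\) and \(\Tr\bar\Sigma=r-B\). If the ratio is awkward, the fallback is to use the crude bound \(\min(1,\cdot)\) in place of the exact Markov ratio, at the price of looser constants.
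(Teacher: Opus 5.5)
\textbf{There is a genuine gap in Step~3.} Your Steps~1 and~2 are correct and coincide with the paper. With $\hat w:=\widehat\Delta/2$, the event $\{u^\top\widehat\Delta\le 0\}$ is exactly the paper's $G^c=\{\langle\hat w,w\rangle\le 0\}$, and the same Cantelli computation gives the third term. Step~3, however, carries the whole content of the theorem and is not yet an argument. The tools you name also would not give the stated constants, for three reasons.
\begin{enumerate}
\item A Chebyshev/Markov ratio of noise energy to squared signal gives $(1-B)/B=2\tilde V_F$ rather than $1-B$, even in the ideal case $\widehat\mu_\pm=\pm w$. Capping at $1$ does not repair this: at $B=1/2$ you get $1$ instead of $1/2$.
\item Treating test-point noise and centroid-estimation noise separately and union-bounding forces you to split the margin between two events. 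That introduces multiplicative constants, which is how the earlier neural-collapse bounds with factors like $4$ and $8/\sqrt m$ arise.
\item Conditional on the support set the signal is random, so any conditional bound has a random denominator, and you do not say how to average it. Your fallback explicitly accepts looser constants, so it would not prove the stated inequality.
\end{enumerate}

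\textbf{The missing idea is to avoid the decomposition $F(X)=Yw+\xi$ altogether.} Work with the full margin $M:=2Y\langle\hat w,F(X)-c\rangle$, where $\hat w=(\widehat\mu_+-\widehat\mu_-)/2$ and $c=(\widehat\mu_++\widehat\mu_-)/2$. Since $(X,Y)$ is independent of $S_m$ and $Y^2=1$, whitening alone gives $\E[M\mid S_m]=2\langle\hat w,w\rangle$ and $\E[M^2\mid S_m]=4(\|\hat w\|_2^2+\langle\hat w,c\rangle^2)$. No class-conditional covariances are needed.

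On $G$, use Cantelli in its second-moment form, $\Prob(M\le 0\mid S_m)\le 1-(\E[M\mid S_m])^2/\E[M^2\mid S_m]$. This bounds the conditional error by $1-f(S_m)$, where $f:=\langle\hat w,w\rangle^2/(\|\hat w\|_2^2+\langle\hat w,c\rangle^2)\in[0,1]$ (with $f:=0$ if $\hat w=0$). This is where $1-B$, rather than $(1-B)/B$, comes from.

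Then linearize the random ratio using two facts:
\begin{itemize}
\item $x/(1+y)\ge x-y$ for $0\le x\le 1$ and $y\ge 0$;
\item $\langle\hat w,w\rangle^2/\|\hat w\|_2^2=\|P_{\mathrm{span}(\hat w)}w\|_2^2\ge B-\|\hat w-w\|_2^2$.
\end{itemize}
Together these give $f\ge B-\|\hat w-w\|_2^2-\|c\|_2^2$. Since $\E\|\hat w-w\|_2^2=\E\|c\|_2^2=(r-B)/(2m)$, you get $\E f\ge B-(r-B)/m$ with constant exactly $1$.

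Finally, absorb your Step~2 event via $f\mathbf 1_G\ge f-\mathbf 1_{G^c}$, which uses $f\le 1$. This gives $\mathrm{err}^{\mathrm{NCC}}_m(F)\le 1-\E f+\Prob(G^c)$. Combined with your Cantelli bound on $\Prob(G^c)$, that is exactly the theorem.
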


The bound separates two effects. The leading term reflects task information not captured by the representation, while the remaining terms arise from estimating class centroids from finitely many examples. Whitening is what makes this a one-scalar Euclidean statement: without it, coordinate rescaling could change \(\|\E[YF]\|_2^2\) and Euclidean NCC would no longer coincide with the natural population geometry. The coordinate-invariant analogue is obtained by rewhitening, or equivalently by using a Mahalanobis metric in the original feature space.

\subsubsection{Multitask semantic geometry}
\label{subsec:sslnc2}

Recoverability also determines how several semantic tasks are organized jointly. We use a superscript to index tasks, reserving the subscript \(r\) for representation rank. Consider \(k\) balanced binary tasks \(Y_1,\dots,Y_k\in\{\pm1\}\), and write \(Y:=(Y_1,\dots,Y_k)\). For each task \(t\), let \(\eta_t(x)=\E[Y_t\mid X=x]\), \(B^{(t)}(F)=\|\Pi_{\cS_F}\eta_t\|_{L^2(P_X)}^2\), and \(\varepsilon_t=\E[\Var(Y_t\mid X)]=1-\|\eta_t\|_{L^2(P_X)}^2\). Let \(w_t=\E[Y_tF(X)]\). Assuming \(B^{(t)}(F)>0\), define \(u_t=w_t/\|w_t\|_2\) and \(Z_t(X)=\langle u_t,F(X)\rangle\).

\begin{restatable}[Near-orthogonality and centroid geometry]{theorem}{rectangle}
\label{thm:axis_and_centroids}
Let \(F:\cX\to\R^r\) be centered and whitened. Assume each \(Y_t\in\{\pm1\}\) is balanced and \(B^{(t)}(F)>0\) for all \(t\); arbitrary dependence among \(Y_1,\dots,Y_k\) is allowed. For \(i\neq j\), define \(\rho_{ij}:=\E[Y_iY_j]\). Then, for any \(i\neq j\),
\[
|u_i^\top u_j|
~\le~
\left(|\rho_{ij}|
+\sqrt{\varepsilon_i\varepsilon_j}
+\sqrt{\big(\|\eta_i\|_{L^2}^2-B^{(i)}(F)\big)\big(\|\eta_j\|_{L^2}^2-B^{(j)}(F)\big)}\right)/
\sqrt{B^{(i)}(F)B^{(j)}(F)}.
\]
Moreover, letting \(m_Y:=\E[Z(X)\mid Y]\in\R^k\),
\[
\E\Big[\big\|m_Y-\big(\sqrt{B^{(1)}(F)}Y_1,\dots,\sqrt{B^{(k)}(F)}Y_k\big)\big\|_2^2\Big]
~\le~
\sum_{t=1}^k \big(1-B^{(t)}(F)\big).
\]
\end{restatable}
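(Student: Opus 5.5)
The plan is to work in $L^2(P_X)$ and use the fact, already noted before Proposition~\ref{prop:Br_probe_dcdnv}, that whitening makes the coordinates $F_1,\dots,F_r$ an orthonormal basis of $\cS_F$. Then for any $g\in L^2(P_X)$ the vector $\E[gF(X)]$ is the coordinate vector of $\Pi_{\cS_F}g$. Since $\E[Y_t\mid X]=\eta_t$, the tower property gives $w_t=\E[Y_tF(X)]=\E[\eta_t(X)F(X)]$. Hence $w_t$ represents $\Pi_{\cS_F}\eta_t$, so $\|w_t\|_2^2=B^{(t)}(F)$ and
\[
w_i^\top w_j=\langle \Pi_{\cS_F}\eta_i,\Pi_{\cS_F}\eta_j\rangle_{L^2}.
\]
This reduces both claims to Hilbert-space and conditional-expectation computations.

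For the near-orthogonality bound, I would write $u_i^\top u_j=w_i^\top w_j/\sqrt{B^{(i)}(F)B^{(j)}(F)}$ and bound the numerator in two steps. First, orthogonality of $\Pi_{\cS_F}$ and $I-\Pi_{\cS_F}$ gives
\[
\langle \Pi\eta_i,\Pi\eta_j\rangle=\langle\eta_i,\eta_j\rangle-\langle(I-\Pi)\eta_i,(I-\Pi)\eta_j\rangle .
\]
By Cauchy--Schwarz, the second term is at most $\sqrt{(\|\eta_i\|^2-B^{(i)})(\|\eta_j\|^2-B^{(j)})}$ in absolute value, because $\|(I-\Pi)\eta_t\|^2=\|\eta_t\|^2-\|\Pi\eta_t\|^2$. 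Second, I would relate $\langle\eta_i,\eta_j\rangle=\E[\eta_i\eta_j]$ to $\rho_{ij}$ through the law of total covariance:
\[
\E[Y_iY_j]=\E[\eta_i\eta_j]+\E[\Cov(Y_i,Y_j\mid X)].
\]
The conditional Cauchy--Schwarz inequality followed by the unconditional one gives $|\E[\Cov(Y_i,Y_j\mid X)]|\le\E[\sqrt{\Var(Y_i\mid X)\Var(Y_j\mid X)}]\le\sqrt{\varepsilon_i\varepsilon_j}$. Combining these with the triangle inequality yields the stated bound. The only point needing care is the direction of the total-covariance identity; the inequality is then immediate.

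For the centroid geometry, I would work coordinatewise, since the squared norm is a sum over $t$. Coordinate $t$ of $m_Y$ is $\E[Z_t(X)\mid Y]$, and the target $\sqrt{B^{(t)}}Y_t$ is $Y$-measurable. Conditional Jensen, i.e.\ the $L^2$-contraction property of conditional expectation, gives
\[
\E\big[(\E[Z_t\mid Y]-\sqrt{B^{(t)}}Y_t)^2\big]\le \E\big[(Z_t-\sqrt{B^{(t)}}Y_t)^2\big].
\]
The right-hand side can be expanded exactly:
\begin{itemize}
\item $\E[Z_t^2]=u_t^\top\E[FF^\top]u_t=1$ by whitening and $\|u_t\|_2=1$;
\item $\E[Z_tY_t]=u_t^\top w_t=\|w_t\|_2=\sqrt{B^{(t)}}$;
\item $\E[Y_t^2]=1$.
\end{itemize}
The right-hand side therefore equals $1-2B^{(t)}+B^{(t)}=1-B^{(t)}$. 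Summing over $t$ gives the claim.

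I expect no step to be a serious obstacle. The main conceptual step is recognizing that whitening turns $w_i^\top w_j$ into an $L^2$ inner product of projected posteriors. After that, the first part splits into a captured-versus-uncaptured decomposition plus a label-noise correction, and the second part is a one-line Jensen argument. The arbitrary dependence among the tasks is harmless, because it enters only through $\rho_{ij}$ and the conditional covariance.
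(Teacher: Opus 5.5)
Your proposal is correct and follows the paper's proof essentially step for step. The steps are: the identification $w_i^\top w_j=\langle\Pi_{\cS_F}\eta_i,\Pi_{\cS_F}\eta_j\rangle$; the captured/residual split with Cauchy--Schwarz on the residual term; and the label-noise correction, where your $\E[\Cov(Y_i,Y_j\mid X)]$ is exactly the paper's $\E[\xi_i\xi_j]$ with $\xi_t=Y_t-\eta_t(X)$. For the centroid bound you apply the $L^2$-contraction of $\E[\cdot\mid Y]$ directly to $Z_t-\sqrt{B^{(t)}(F)}\,Y_t$, while the paper expands the square and uses Jensen in the form $\E[(m_Y^{(t)})^2]\le\E[Z_t^2]$; this is the same computation arranged differently.
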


\looseness=-1 
Thm.~\ref{thm:axis_and_centroids} shows that well-recovered tasks induce strong semantic axes that become nearly orthogonal when task correlations and residual posterior uncertainty are small. Their joint centroids approach \(\big(\sqrt{B^{(t)}(F)}Y_t\big)_{t\le k}\), yielding an approximately axis-aligned hyperrectangle whose side lengths are set by the task-specific capture values. Thus recoverability controls the strength of each semantic direction, while cross-task dependence determines how closely the overall geometry approaches a factorial product.

The results so far deliberately make no claim about how \(F\) was learned. They say that \emph{if} a representation captures a task, then \(B(F)\) determines its directional geometry, its few-shot behavior, and its role in multitask centroid structure. We now turn to the complementary question: {\em why should a same-instance SSL objective produce a large value of \(B(F)\) for some tasks and not others?}

\section{Recoverability Under Same-Instance SSL}
\label{subsec:ssl_specialization}

The preceding results show that the recoverability quantity \(B(F)\) characterizes the task-relevant geometry of a representation and controls its few-shot behavior. The next question is whether, for a representation learned by same-instance SSL, this quantity can itself be identified explicitly from the SSL objective. We show that this is indeed the case: under the whitening-based population objective, the learned representation spans the leading eigenspace of the two-view operator, and \(B(F)\) becomes the spectral overlap between the downstream posterior and these view-stable modes.

\subsection{A Canonical Two-View Population Objective}
\label{sec:population_ssl_objective}

Many SSL methods~\citep{NEURIPS2020_f3ada80d,zbontar2021barlow,bardes2022vicreg,pmlr-v119-chen20j,Caron_2021_ICCV,pmlr-v139-ermolov21a,Tao_2022_CVPR} combine instance-level agreement with an anti-collapse mechanism that promotes global spread, decorrelation, or isotropy~\citep{wang2020understanding,garrido2023on,balestriero2022contrastive}. To obtain a clean population characterization of the represented subspace, we study the canonical whitening-based surrogate (W-MSE)~\citep{pmlr-v139-ermolov21a,weng2022an}
\begin{equation}
\label{eq:ssl_pop_obj_multi}
\max_{F:\cX\to\R^r}
\E\!\left[\langle F(X^{(1)}),F(X^{(2)})\rangle\right]
\quad
\text{subject to}
\quad
\E[F(X)]=0,
\qquad
\E[F(X)F(X)^\top]=I_r.
\end{equation}
The objective rewards agreement across two views of the same instance, while whitening rules out collapse and fixes the feature metric. Its role in our theory is specific: it gives an analyzable rule for which \(r\)-dimensional function space is selected from the two-view distribution.

Strict whitening is the cleanest setting for this identification and for the exact geometry above. App.~\ref{app:soft_whitening} studies a covariance-regularized surrogate related to Barlow Twins~\citep{zbontar2021barlow} and VICReg~\citep{bardes2022vicreg}, while App.~\ref{app:rrr_byol_jepa} gives a corresponding reduced-rank regression view of prediction-based objectives such as BYOL~\citep{NEURIPS2020_f3ada80d}, SimSiam~\citep{Tao_2022_CVPR}, and JEPA~\citep{DBLP:conf/cvpr/AssranDMBVRLB23}.

\begin{figure}
    \centering
    \includegraphics[width=0.95\linewidth]{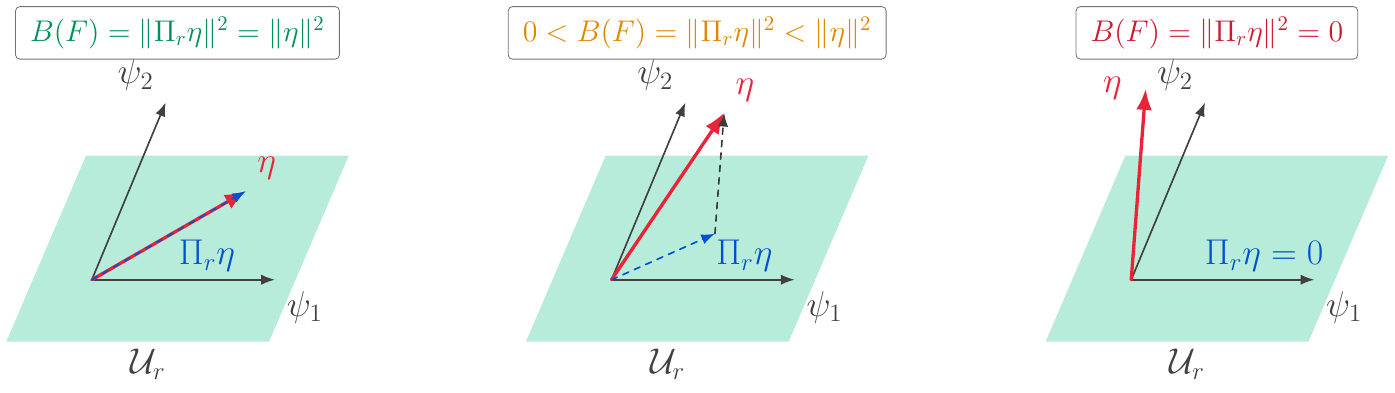}
\caption{\textbf{Geometric interpretation of semantic recoverability.}
The shaded plane is the SSL-selected subspace \(\mathcal U_r=\operatorname{span}\{\psi_1,\psi_2\}\) of leading two-view eigenfunctions. The task posterior \(\eta\) decomposes into its projection \(\Pi_r\eta\) onto this subspace and an orthogonal residual. From left to right, the posterior is fully captured, partially captured, or discarded. The captured posterior energy \(B_r=\|\Pi_r\eta\|_{L^2(P_X)}^2\) measures the retained task-relevant information.}
    \label{fig:Bf_illustration}
\end{figure}

\subsection{The Two-View Operator and the Selected Subspace}
\label{subsec:operator_viewpoint}

Define the two-view conditional expectation operator \(T:L^2(P_X)\to L^2(P_X)\) by $(Tf)(x):=\E[f(X^{(2)})\mid X^{(1)}=x]$. Thus \(T\) maps a function of one view to its conditional expectation from another view of the same instance. In the conditionally i.i.d. two-view model, exchangeability gives \(\langle f,Tg\rangle=\langle Tf,g\rangle\), while conditional independence given \(X^{(0)}\) gives \(\langle f,Tf\rangle=\E[(\E[f(X)\mid X^{(0)}])^2]\ge0\). Hence \(T\) is self-adjoint and positive semidefinite, and an eigenvalue measures the cross-view stability of the corresponding mode.

\begin{restatable}{proposition}{posterior}
\label{prop:same_instance_ssl_posterior}
Let \(L_0^2(P_X):=\{f\in L^2(P_X):\E[f(X)]=0\}\), and assume that \(T\) restricted to \(L_0^2(P_X)\) admits an orthonormal eigenbasis \((\psi_j)_{j\ge1}\), with eigenvalues \(\lambda_1\ge\lambda_2\ge\cdots\ge0\). Assume \(\lambda_r>0\) and \(\lambda_r>\lambda_{r+1}\). Then \(\psi_1,\dots,\psi_r\) are feasible for \eqref{eq:ssl_pop_obj_multi} and achieve the optimal value \(\sum_{j=1}^r \lambda_j\). Moreover, if \(F:\cX\to\R^r\) is any optimal solution of \eqref{eq:ssl_pop_obj_multi}, then its coordinates span the same \(r\)-dimensional subspace as \(\psi_1,\dots,\psi_r\).
\end{restatable}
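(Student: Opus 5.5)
The plan is to reduce \eqref{eq:ssl_pop_obj_multi} to a Ky Fan trace maximization for the self-adjoint PSD operator \(T\) restricted to \(L_0^2(P_X)\).

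\textbf{Step 1: rewrite the objective.} For any feasible \(F\), the constraints say exactly that the coordinates \(F_1,\dots,F_r\) form an orthonormal family in \(L_0^2(P_X)\). By conditioning on \(X^{(1)}\) and using the definition of \(T\),
\[
\E[\langle F(X^{(1)}),F(X^{(2)})\rangle]=\sum_{i=1}^r \E\big[F_i(X^{(1)})\,(TF_i)(X^{(1)})\big]=\sum_{i=1}^r\langle F_i,TF_i\rangle_{L^2(P_X)}.
\]
\(T\) maps \(L_0^2\) into itself, since \(\E[Tf]=\E[f]\). So the problem is to maximize \(\sum_i\langle f_i,Tf_i\rangle\) over orthonormal \(r\)-tuples in \(L_0^2\).

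\textbf{Step 2: feasibility and the value attained by the eigenfunctions.} The \(\psi_j\) are orthonormal and lie in \(L_0^2\), so \(F=(\psi_1,\dots,\psi_r)\) is feasible. Its objective value is \(\sum_{j\le r}\lambda_j\).

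\textbf{Step 3: upper bound (Ky Fan).} Expand each \(f_i=\sum_j a_{ij}\psi_j\). Then \(\sum_i\langle f_i,Tf_i\rangle=\sum_j \lambda_j c_j\), where \(c_j:=\sum_i a_{ij}^2=\|\Pi_{\mathcal U}\psi_j\|^2\) and \(\mathcal U=\mathrm{span}\{f_i\}\). These weights satisfy \(0\le c_j\le 1\) and \(\sum_j c_j=r\). A linear functional with nonincreasing coefficients \(\lambda_j\ge0\) under these constraints is maximized by \(c_j=\mathbf 1[j\le r]\). To see this, write
\[
\sum_{j\le r}\lambda_j-\sum_j\lambda_jc_j=\sum_{j\le r}\lambda_j(1-c_j)-\sum_{j>r}\lambda_jc_j\ \ge\ \lambda_r\sum_{j\le r}(1-c_j)-\lambda_{r+1}\sum_{j>r}c_j.
\]
The two sums in the last expression are equal, both being \(r-\sum_{j\le r}c_j\). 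Hence the gap is at least \((\lambda_r-\lambda_{r+1})\,\delta\), where \(\delta:=\sum_{j>r}c_j\ge0\). Convergence of the infinite sums is harmless, since \(\sum_j c_j=r\) and \(\lambda_j\le\lambda_1\).

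\textbf{Step 4: uniqueness of the subspace.} At an optimum the gap vanishes. Since \(\lambda_r>\lambda_{r+1}\), this forces \(\delta=0\), i.e.\ \(\|\Pi_{\mathcal U}\psi_j\|=0\) for all \(j>r\). Because the \(\psi_j\) form a basis, every \(f_i\) then lies in \(\mathrm{span}\{\psi_1,\dots,\psi_r\}\). Both spaces are \(r\)-dimensional (the \(f_i\) are orthonormal), so they coincide.

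The main obstacle is this equality case. One has to set up the gap inequality carefully, as above, rather than simply cite Ky Fan, because with ties among eigenvalues only the strict gap \(\lambda_r>\lambda_{r+1}\) pins down the subspace. The hypothesis \(\lambda_r>0\) is not needed for the argument itself; it only ensures the selected modes are genuinely cross-view stable.
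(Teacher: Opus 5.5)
Your proof is correct and follows the paper's argument. Both rewrite the objective as \(\sum_\ell\langle f_\ell,Tf_\ell\rangle\) over orthonormal families in \(L_0^2(P_X)\), expand in the eigenbasis to get \(\sum_j\lambda_j c_j\) with \(0\le c_j\le1\) and \(\sum_j c_j=r\), and use the eigengap to force \(c_j=0\) for \(j>r\). Your explicit bound \((\lambda_r-\lambda_{r+1})\sum_{j>r}c_j\) spells out the uniqueness step that the paper asserts in one line, and you are right that \(\lambda_r>0\) is not used (it already follows from \(\lambda_r>\lambda_{r+1}\ge0\)).
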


Prop.~\ref{prop:same_instance_ssl_posterior} says that the population objective keeps the \(r\) most cross-view-stable modes. The representation is determined only up to an orthogonal change of basis within this selected subspace. This result identifies what SSL preserves without referring to any downstream labels; the task enters only when we ask how much of its posterior lies in that subspace.

\subsection{Closed-Form Recoverability at the SSL Optimum}
\label{subsec:spectral_Br}

Let \(F^\star\) be any population-optimal solution of \eqref{eq:ssl_pop_obj_multi}, and define \(\Ur:=\mathrm{span}\{\psi_1,\dots,\psi_r\}\subset L^2(P_X)\), with \(\Pi_r\) denoting the orthogonal projection onto \(\Ur\). For a balanced binary label with posterior score \(\eta(x)=\E[Y\mid X=x]\), write \(\beta_j:=\langle\eta,\psi_j\rangle_{L^2(P_X)}\).

\begin{restatable}{corollary}{decomp}
\label{cor:posterior_decomp_ssl_basis}
Under the assumptions of Prop.~\ref{prop:same_instance_ssl_posterior}, \(\Pi_r\eta=\sum_{j=1}^r\beta_j\psi_j\), and $B(F^\star)
= \|\Pi_r\eta\|_{L^2(P_X)}^2 = \sum_{j=1}^r \beta_j^2 =:B_r$.
\end{restatable}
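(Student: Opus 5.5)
The proof is short and rests on Prop.~\ref{prop:same_instance_ssl_posterior} together with the coordinate-invariance of \(B(F)\) noted in Sec.~\ref{subsec:posterior_projection}. The approach has three steps. First, identify \(\cS_{F^\star}\) with \(\Ur\). Second, compute \(\Pi_r\eta\) in the orthonormal basis \(\psi_1,\dots,\psi_r\). Third, apply Parseval.

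\textbf{Step 1.} By the uniqueness part of Prop.~\ref{prop:same_instance_ssl_posterior}, the coordinates of any optimal \(F^\star\) span exactly \(\Ur\). Hence \(\cS_{F^\star}=\Ur\) and \(\Pi_{\cS_{F^\star}}=\Pi_r\). It follows from the definition \(B(F)=\|\Pi_{\cS_F}\eta\|^2_{L^2(P_X)}\) that \(B(F^\star)=\|\Pi_r\eta\|^2_{L^2(P_X)}\). This step does not depend on which optimal solution is chosen, because \(B\) depends only on the span.

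\textbf{Step 2.} The eigenfunctions \(\psi_1,\dots,\psi_r\) are orthonormal in \(L^2(P_X)\), since they are part of an orthonormal eigenbasis of \(T\) on \(L_0^2(P_X)\). The orthogonal projection onto their finite-dimensional span is therefore \(\Pi_r f=\sum_{j\le r}\langle f,\psi_j\rangle\psi_j\) for every \(f\in L^2(P_X)\). To justify this, one checks two facts:
\begin{itemize}
\item the right-hand side lies in \(\Ur\);
\item the residual \(f-\sum_{j\le r}\langle f,\psi_j\rangle\psi_j\) is orthogonal to each \(\psi_k\) with \(k\le r\).
\end{itemize}
Applying this with \(f=\eta\) gives \(\Pi_r\eta=\sum_{j\le r}\beta_j\psi_j\).

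One subtlety concerns where \(\eta\) lives. The eigenbasis is only asserted on \(L_0^2\), but \(\eta\) is itself centered, since \(\E[\eta(X)]=\E[Y]=0\) by balance. Even without this, the formula for \(\Pi_r\) is valid on all of \(L^2\) because \(\Ur\subset L_0^2\) is finite-dimensional. I expect this to be the only point needing care, and it amounts to a single remark.

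\textbf{Step 3.} By orthonormality, \(\|\sum_{j\le r}\beta_j\psi_j\|^2=\sum_{j\le r}\beta_j^2\). This gives \(B(F^\star)=\sum_{j\le r}\beta_j^2=B_r\).
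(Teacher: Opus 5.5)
Your proposal is correct and follows the paper's proof: identify \(\cS_{F^\star}=\Ur\) via Prop.~\ref{prop:same_instance_ssl_posterior}, then use orthonormality of \(\psi_1,\dots,\psi_r\) to obtain \(\Pi_r\eta=\sum_{j\le r}\beta_j\psi_j\) and \(B(F^\star)=\sum_{j\le r}\beta_j^2\). The only difference is cosmetic: the paper first expands \(\eta\) in the full eigenbasis (using \(\eta\in L_0^2(P_X)\)) and then truncates, whereas you apply the finite-dimensional projection formula directly, which does not need completeness of the eigenbasis.
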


For multiple downstream tasks, we write \(B_r^{(t)}\) for the rank-\(r\) capture of task \(t\); equivalently, \(B_r^{(t)}=B^{(t)}(F^\star)\) for an optimal rank-\(r\) representation.

This is the closed-form answer to when recoverability is large under the SSL model. The task is preserved when its posterior score is concentrated on the modes that are most stable across the two views, and it is discarded when its posterior lies primarily in view-variant directions outside the selected subspace. The two-view objective itself remains label-free; labels enter only through the overlap used to evaluate a particular downstream task.

Combining this spectral identity with the previous section immediately turns \(B_r\) into task-level predictions. A large spectral overlap implies low directional CDNV, strong few-shot NCC performance, and pronounced semantic axes; across several tasks, the values \(B_r^{(t)}\) determine the side lengths of the approximate hyperrectangle. In the recoverable regime where the augmentations preserve the task and the increasing view-stable subspaces capture its posterior, \(B_r\) approaches one as the representation dimension grows. Thus the overall mechanism is: SSL selects view-stable structure, a task survives according to its posterior overlap with that structure, and the amount that survives determines the downstream geometry and transfer behavior.

\section{Experiments}
\label{sec:exp}

\begin{figure}[t]
    \centering
    \begin{tabular}{cccc}
    \multicolumn{4}{c}{
            \includegraphics[width=0.70\linewidth]{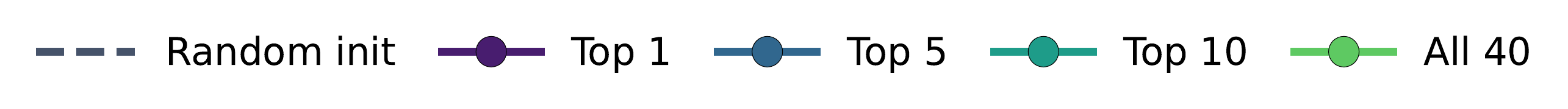}
        } \\[-1mm]
         \includegraphics[width=0.22\linewidth]{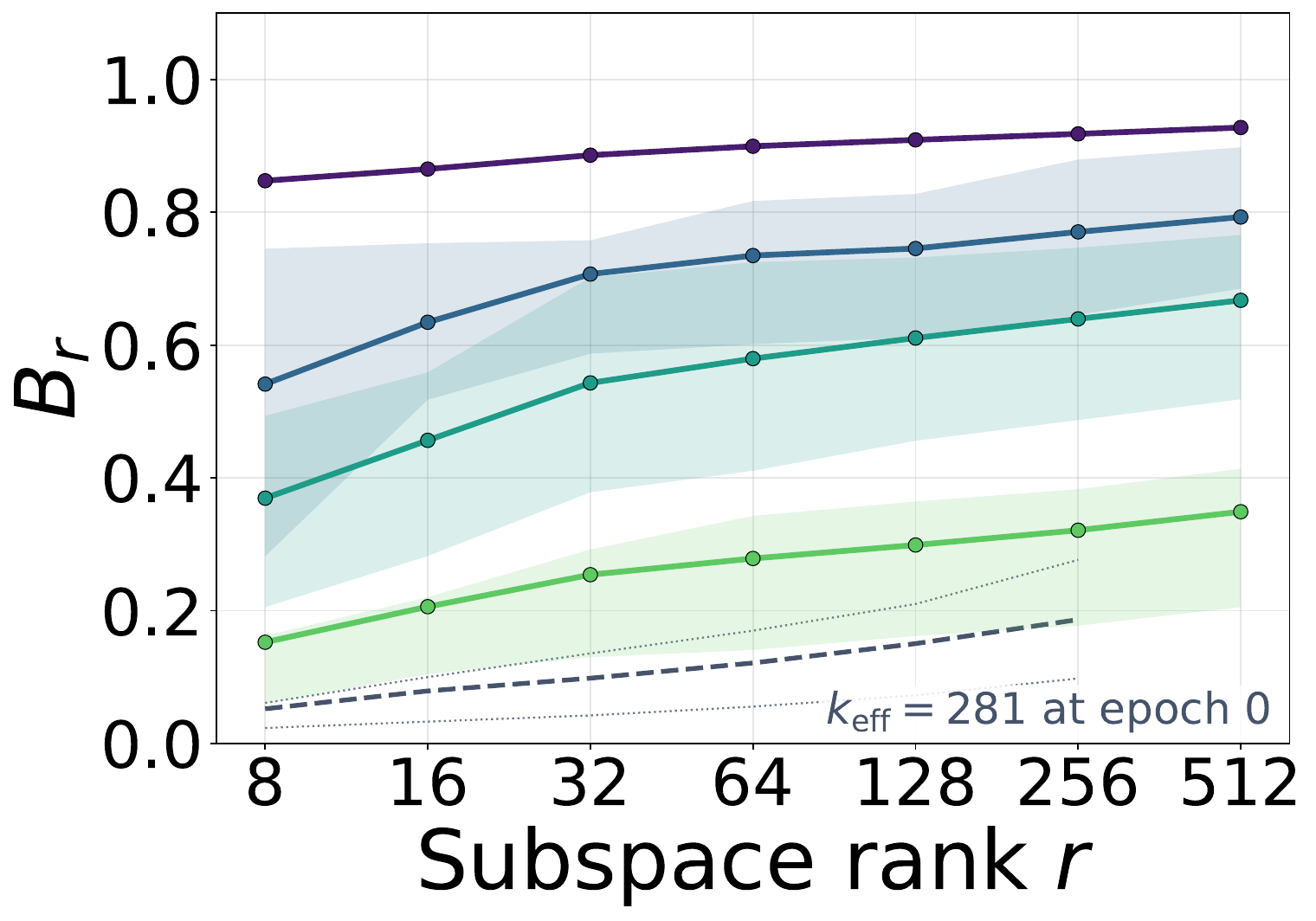} &
         \includegraphics[width=0.22\linewidth]{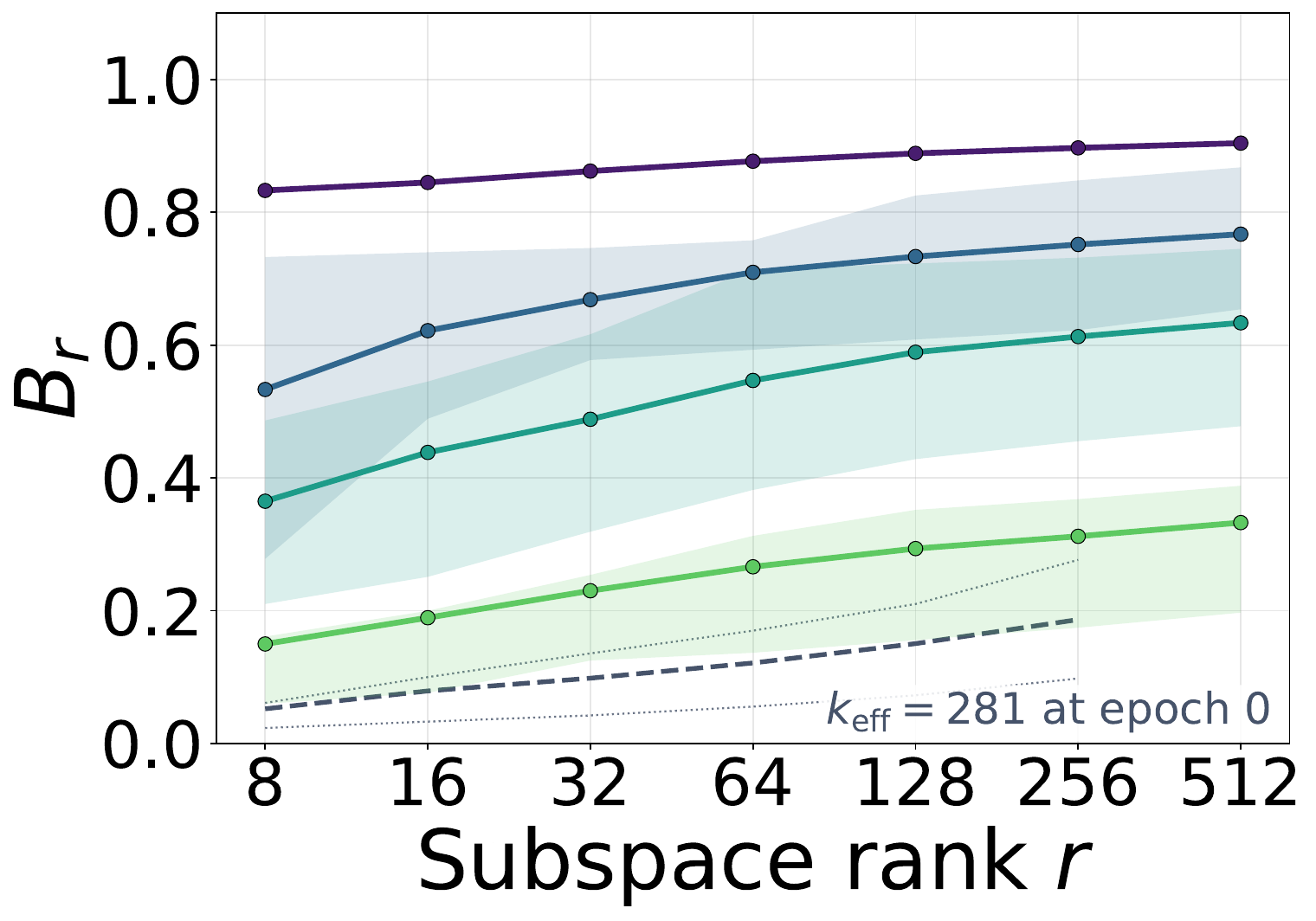} &
         \includegraphics[width=0.22\linewidth]{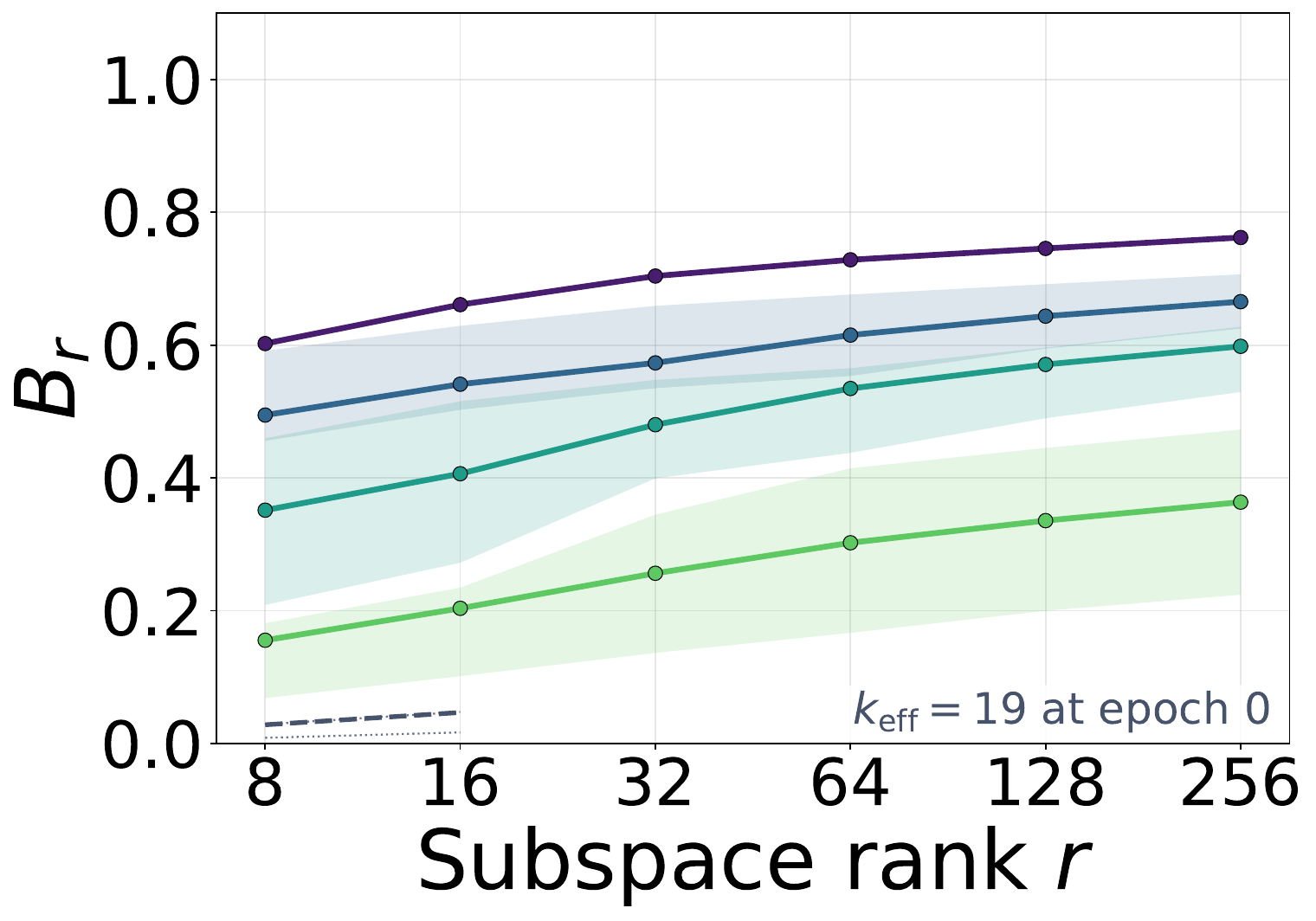} &
         \includegraphics[width=0.22\linewidth]{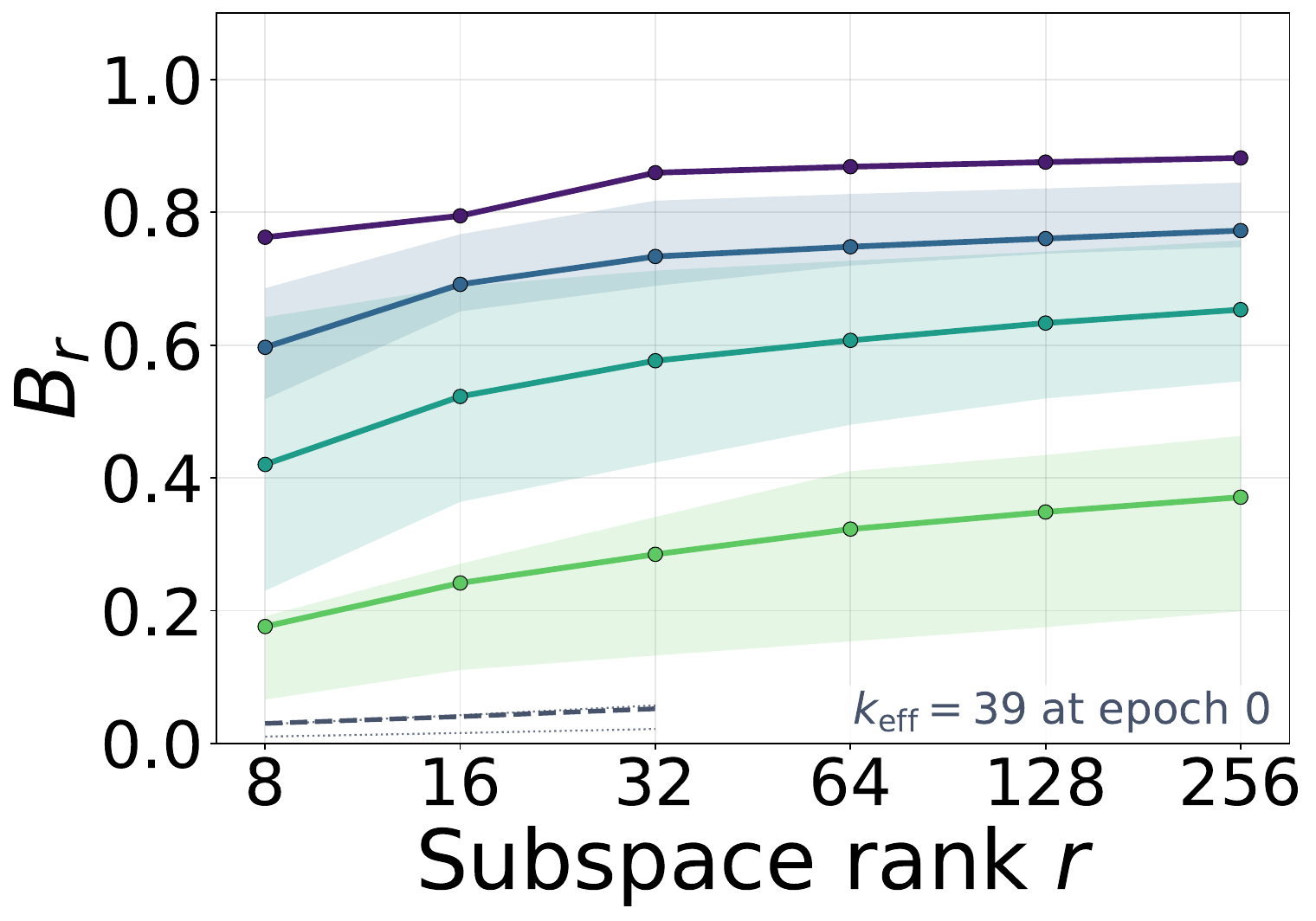} \\
         {\small {\bf (a) W-MSE-CelebA}} &
         {\small {\bf (b) VICReg-CelebA}} &
         {\small {\bf (c) Barlow-IM1K}} &
         {\small {\bf (d) I-JEPA-IM1K}}
    \end{tabular}
    \caption{{\bf Captured posterior energy on CelebA. \enspace}
    Mean \(B_r^{(t)}\) over the 1, 5, and 10 most recoverable CelebA attributes and for all 40 attributes, shown across spectral ranks \(r\). Shaded regions denote the corresponding interquartile ranges. The dashed curve is the mean for a matched randomly initialized encoder, with dotted curves showing its interquartile range.}
    \label{fig:captured_Br}
\end{figure}

\begin{figure}[t]
    \centering
    \begin{tabular}{cccc}
    \multicolumn{4}{c}{
            \includegraphics[width=0.70\linewidth]{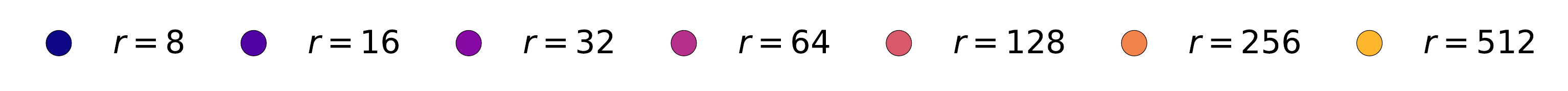}
        } \\[-1mm]
         \includegraphics[width=0.22\linewidth]{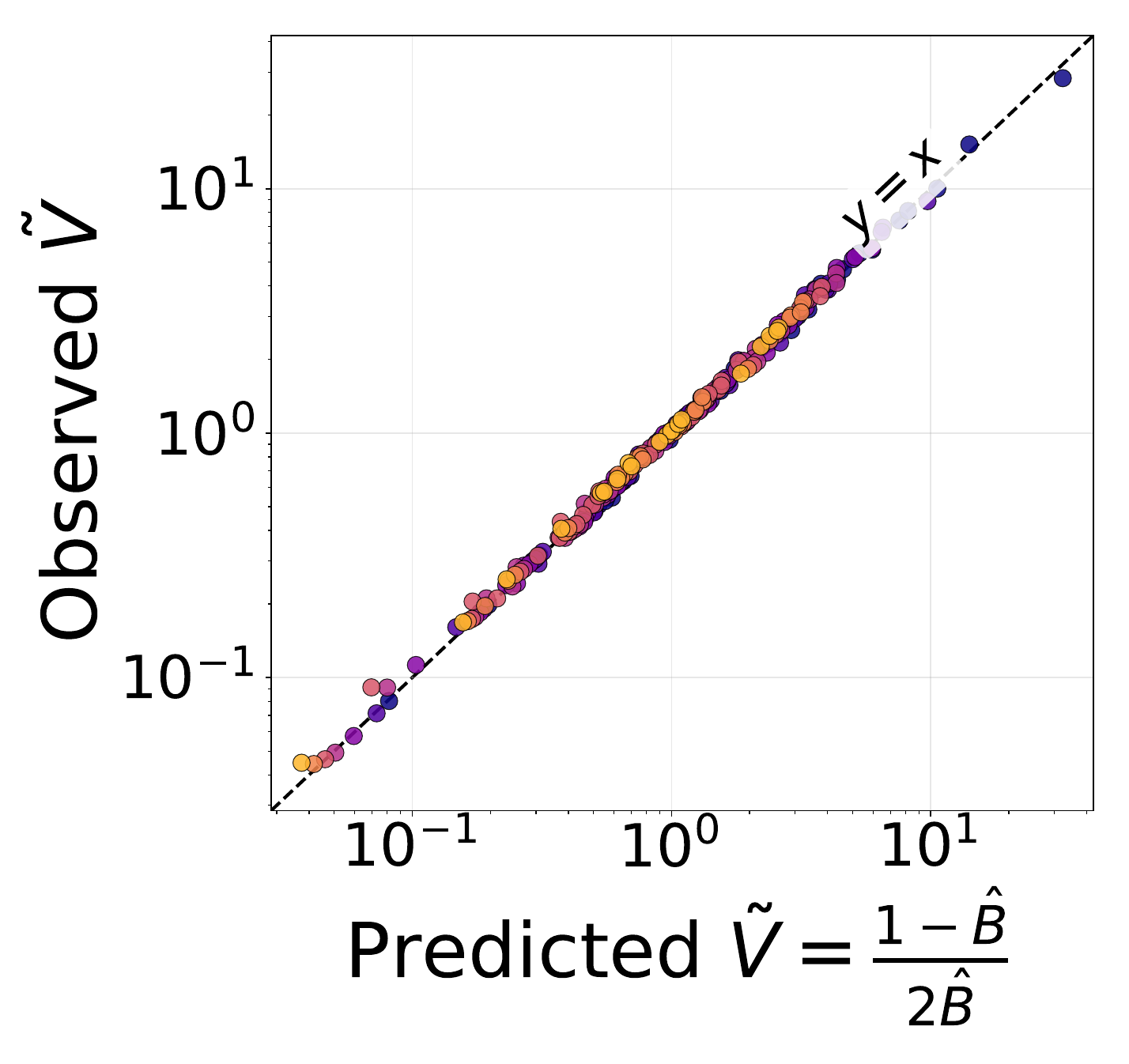} &
         \includegraphics[width=0.22\linewidth]{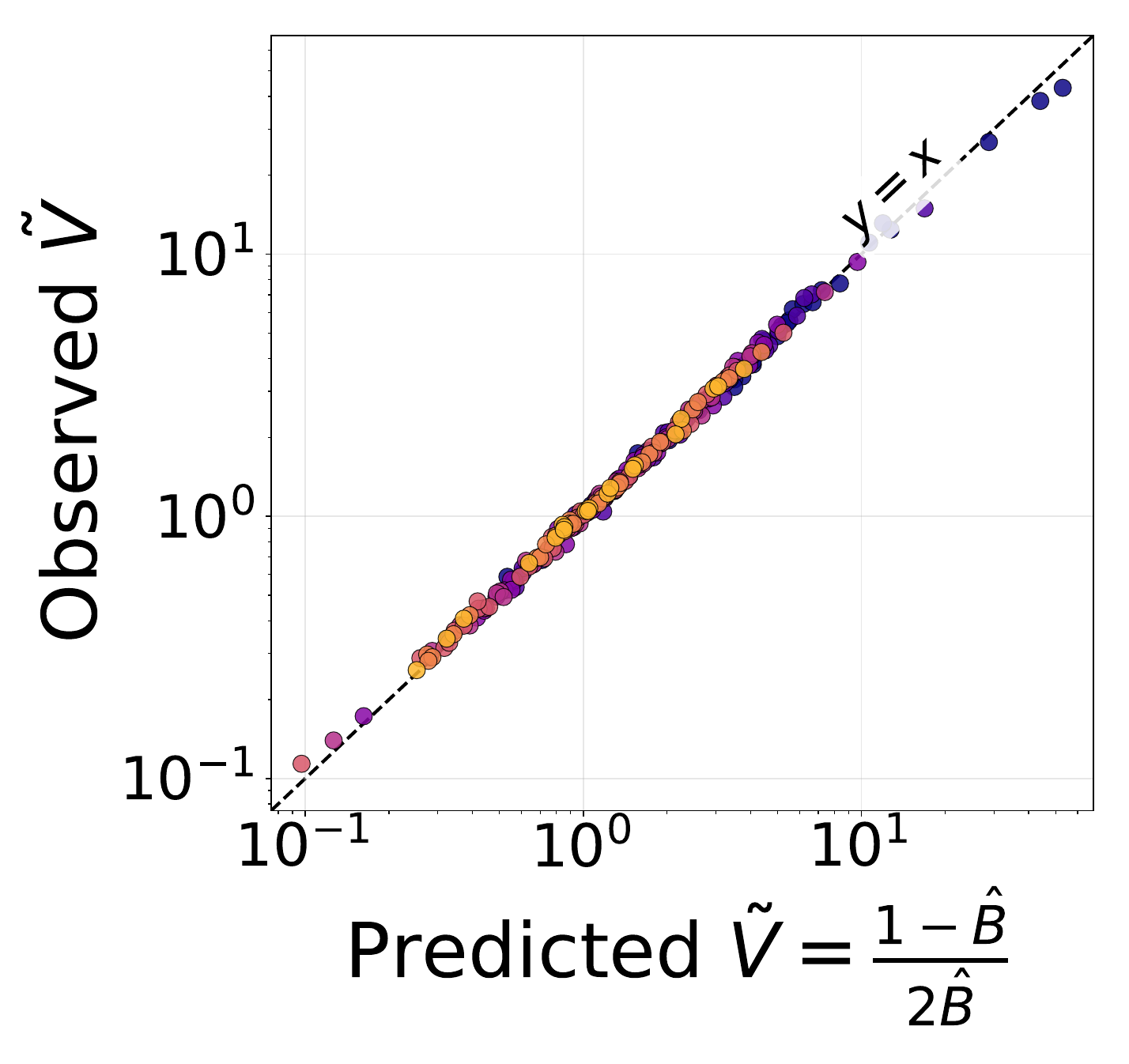} &
         \includegraphics[width=0.22\linewidth]{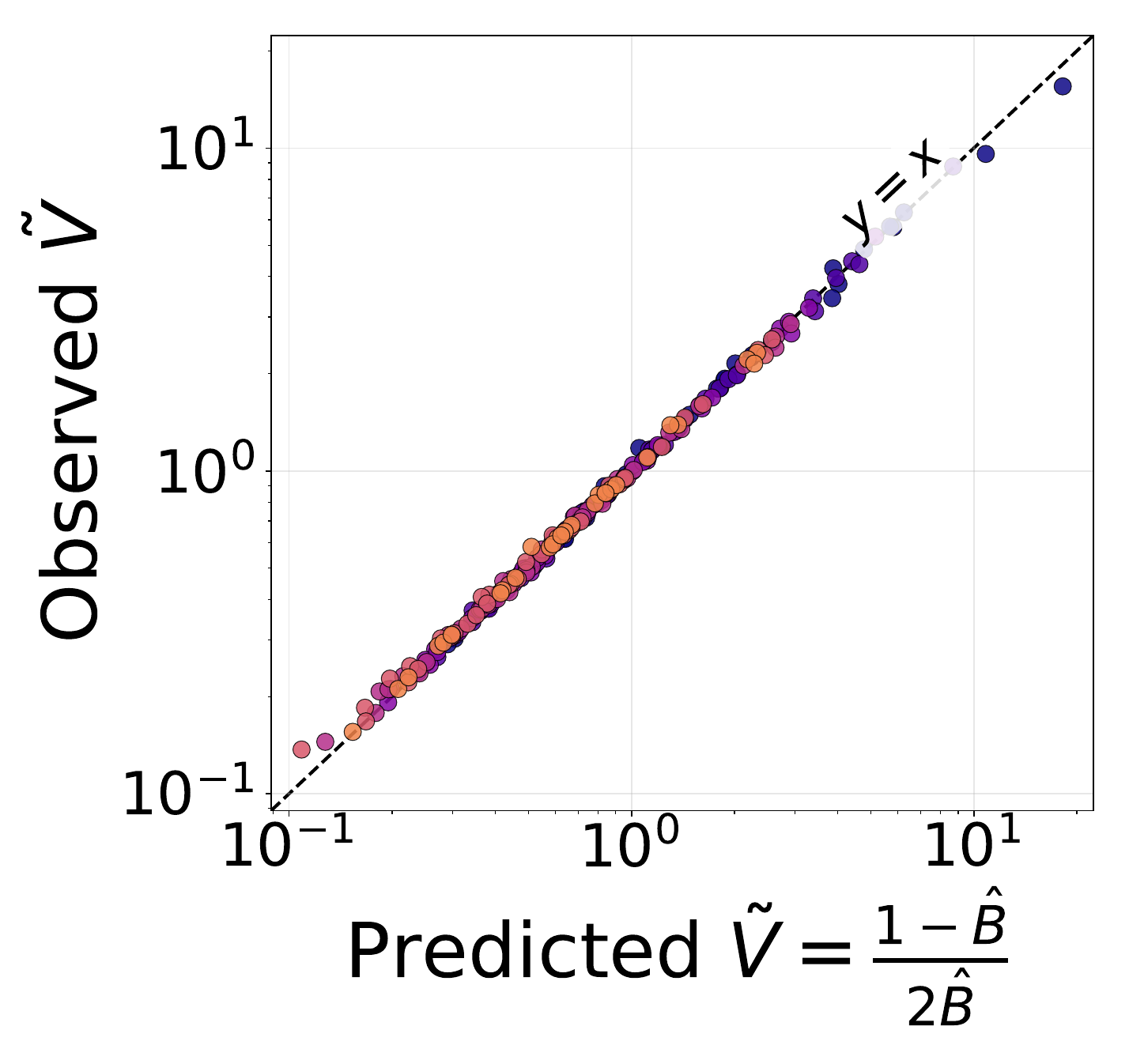} &
         \includegraphics[width=0.22\linewidth]{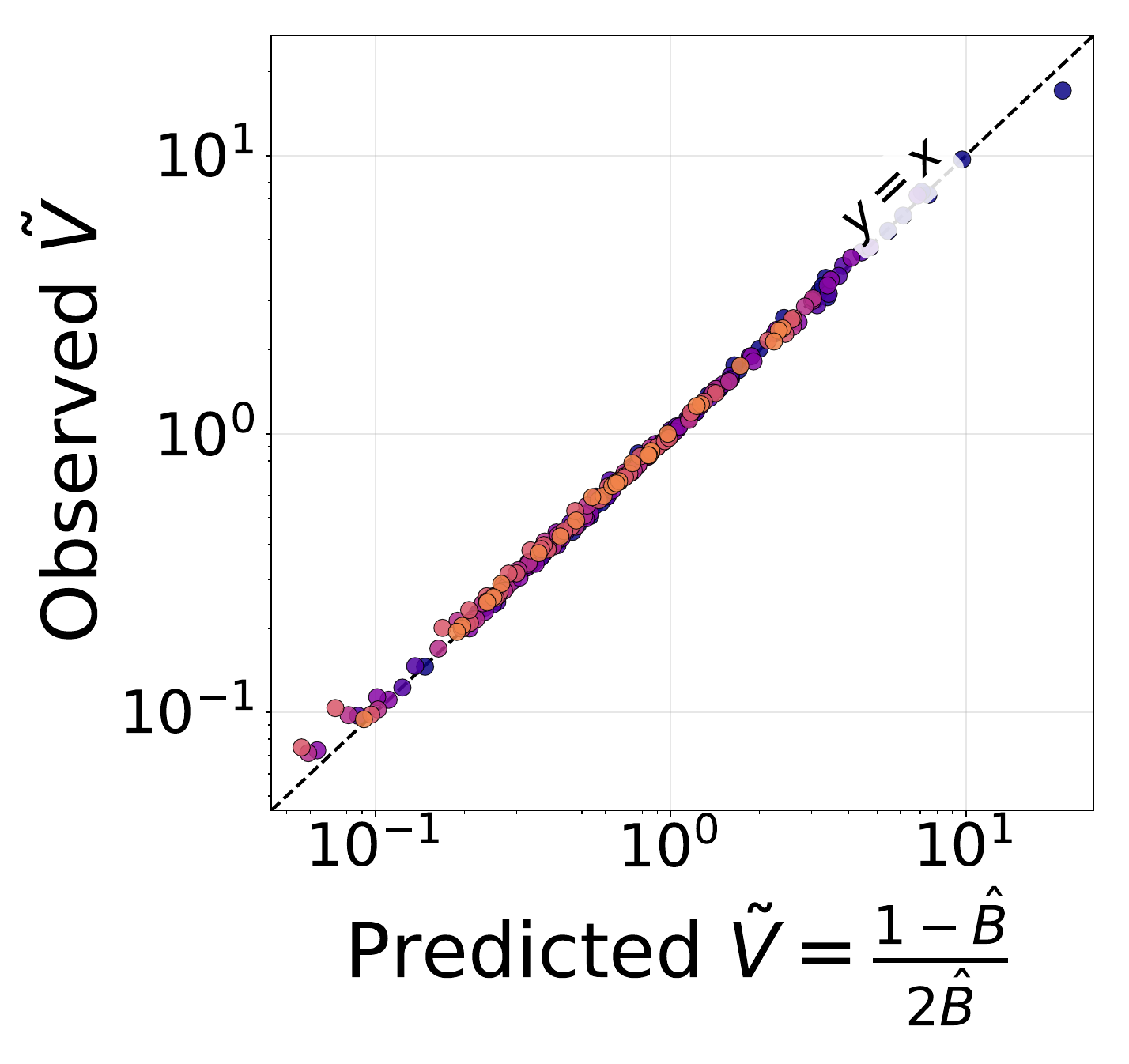} \\
         {\small {\bf (a) W-MSE-CelebA}} &
         {\small {\bf (b) VICReg-CelebA}} &
         {\small {\bf (c) Barlow-IM1K}} &
         {\small {\bf (d) I-JEPA-IM1K}}
    \end{tabular}
    \caption{{\bf Predicted versus observed directional CDNV on CelebA. \enspace}
    Each point is one CelebA attribute at a given rank \(r\). The prediction
    \(\widehat{\tilde V}_{\mathrm{pred}}=(1-\widehat B_A^{(t)})/(2\widehat B_A^{(t)})\)
    is computed on split A. Class centroids, the semantic direction, and observed directional CDNV are estimated independently on held-out split B. The dashed line is \(y=x\), and each panel contains all 40 attributes.}
    \label{fig:Br_predicts_geom}
\end{figure}

\begin{figure}[t]
    \centering
    \begin{tabular}{cc}
    \multicolumn{2}{c}{
            \includegraphics[width=0.70\linewidth]{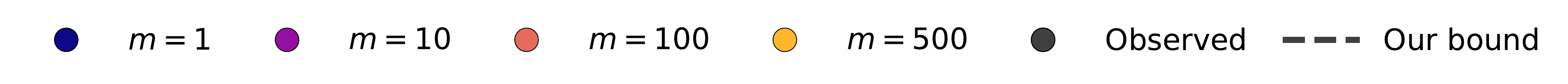}
        } \\[-1mm]
         \includegraphics[width=0.46\linewidth]{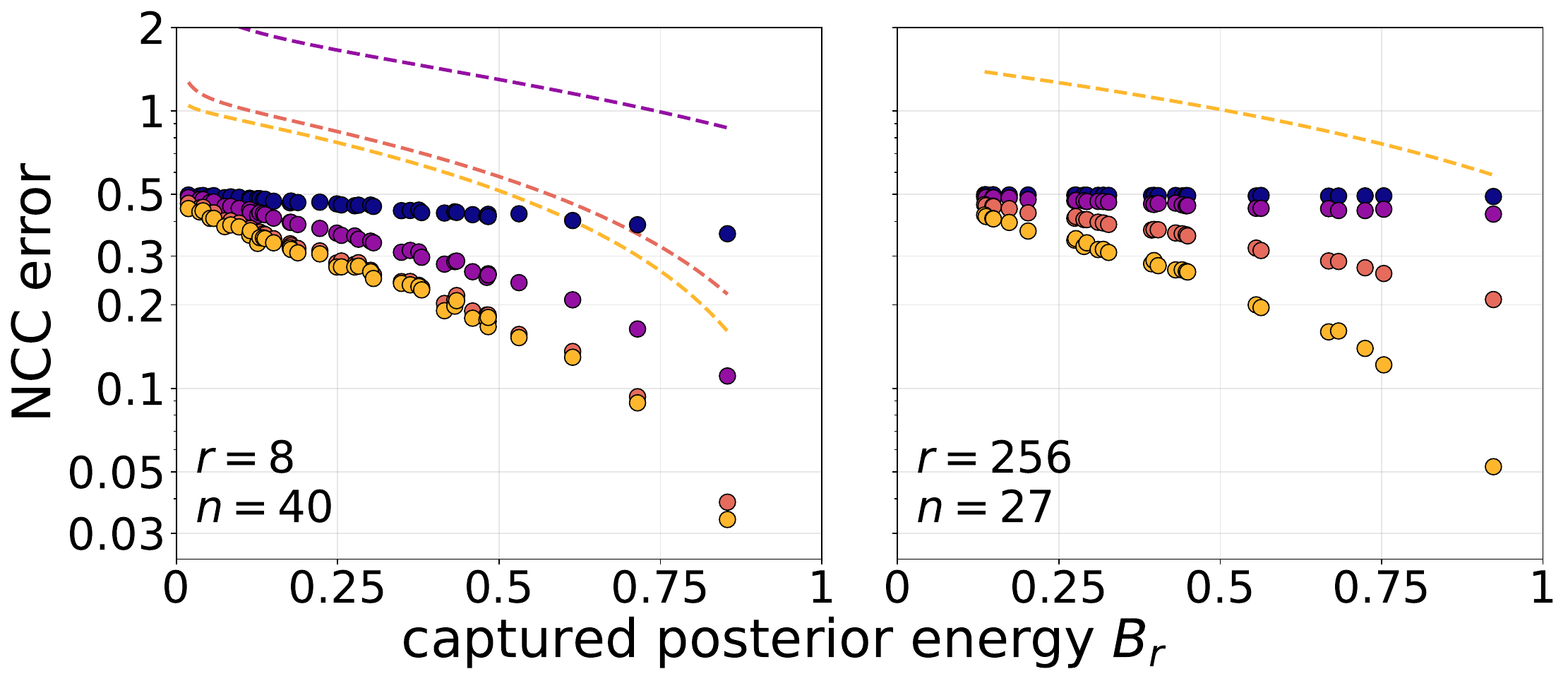} &
         \includegraphics[width=0.46\linewidth]{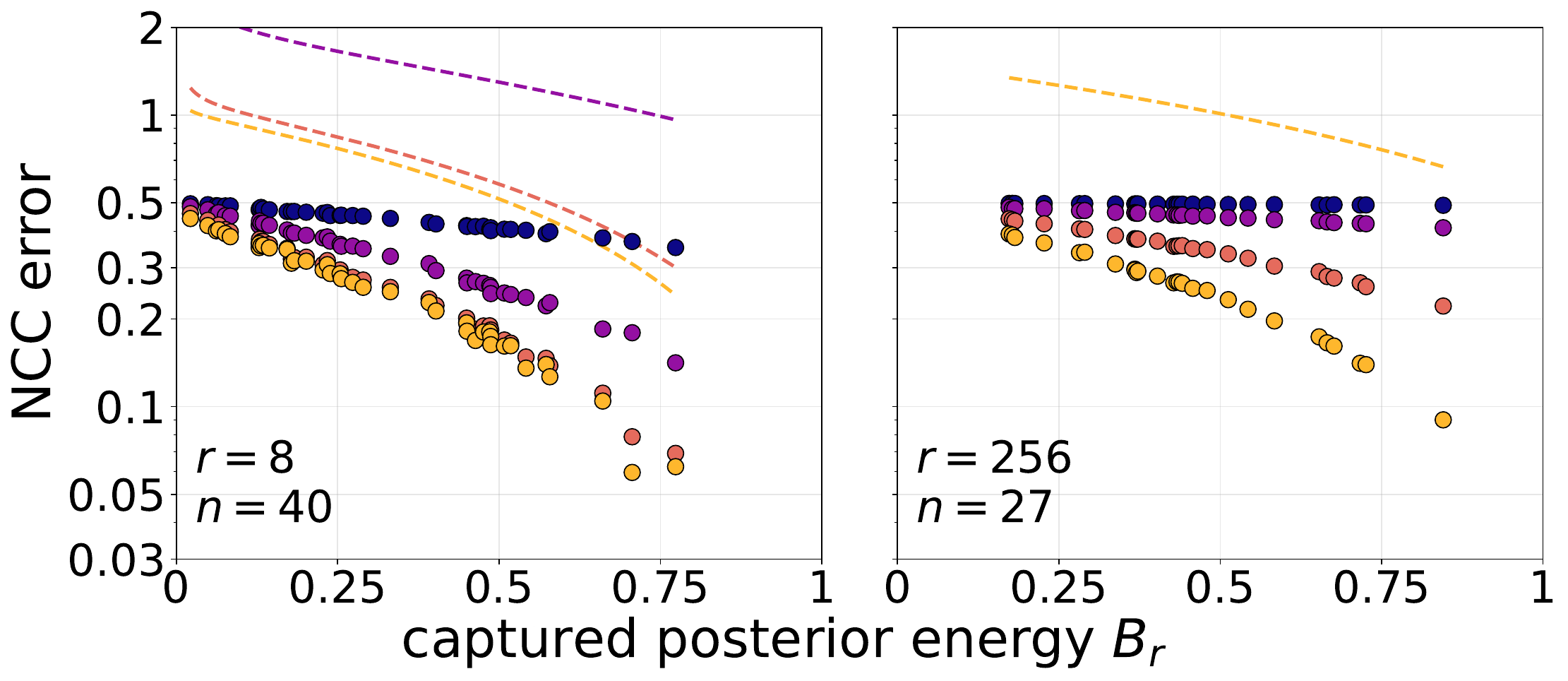} \\
         {\small {\bf (a) W-MSE-CelebA}} &
         {\small {\bf (b) I-JEPA-IM1K}}
    \end{tabular}
    \caption{{\bf Captured posterior energy controls few-shot NCC error. \enspace}
    Empirical \(m\)-shot NCC error versus task-specific \(B_r^{(t)}\) across CelebA attributes, for several ranks \(r\) (panels) and support sizes \(m\) (colors). Each point corresponds to a downstream attribute, and dashed curves are the corresponding upper bounds from Thm.~\ref{thm:ncc_bound_direct_via_Br}.}
    \label{fig:Br_predicts_ncc}
\end{figure}

\begin{figure}[t]
    \centering
    \setlength{\tabcolsep}{2pt}
    \begin{tabular}{@{}cccc@{}}
        \includegraphics[width=0.16\linewidth]{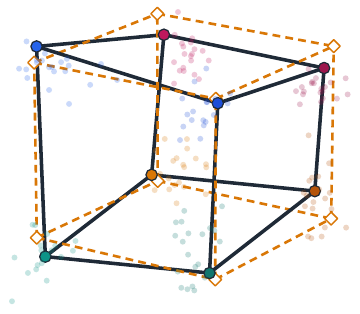} &
        \includegraphics[width=0.16\linewidth]{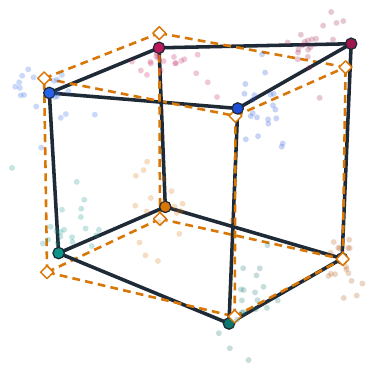} &
        \includegraphics[width=0.30\linewidth]{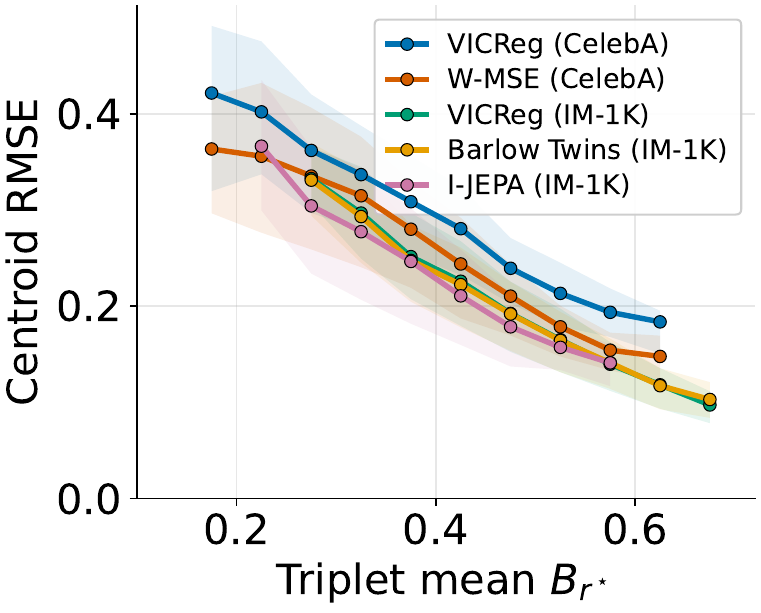} &
        \includegraphics[width=0.30\linewidth]{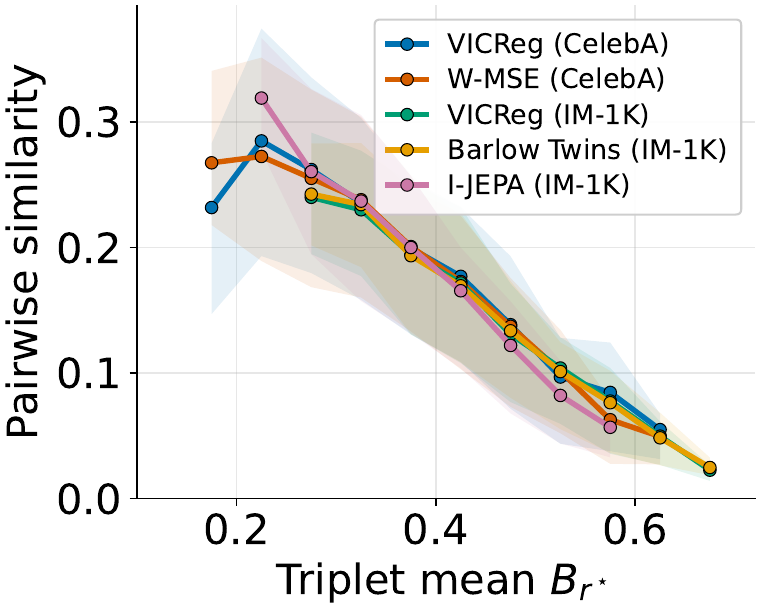} \\
        {\small {\bf (a) W-MSE-CelebA}} &
        {\small {\bf (b) I-JEPA-IM1K}} &
        {\small {\bf (c) Centroid RMSE}} &
        {\small {\bf (d) Task-axis overlap}}
    \end{tabular}

    \caption{{\bf Multitask centroid geometry on CelebA. \enspace}
(a--b) Illustrative held-out joint centroids for three attributes, compared with the hyperrectangle predicted by Thm.~\ref{thm:axis_and_centroids}; triplets are selected on the training split.
(c--d) Results over all 1646 eligible triplets per encoder, without recoverability or orthogonality screening. Larger mean task recoverability is associated with lower normalized centroid RMSE and smaller maximum pairwise task-axis overlap. Curves show bin means and shaded regions show interquartile ranges; selection and normalization details are in App.~\ref{app:celeba_extra}.}
    \label{fig:hyperrec_multitask}
\end{figure}

\begin{figure}[t]
    \centering
    \begin{tabular}{cccc}
        \includegraphics[width=0.22\linewidth]{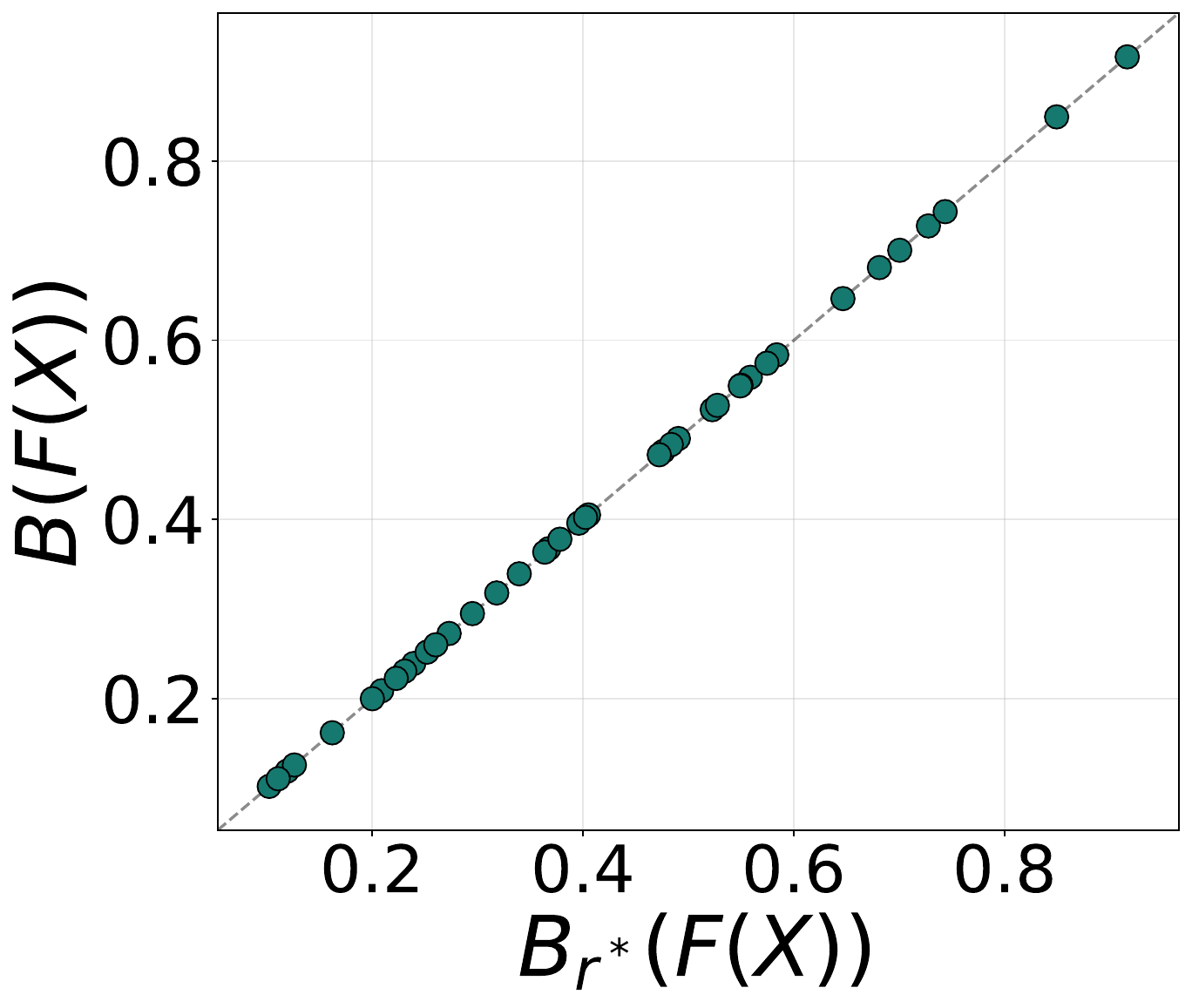} &
        \includegraphics[width=0.22\linewidth]{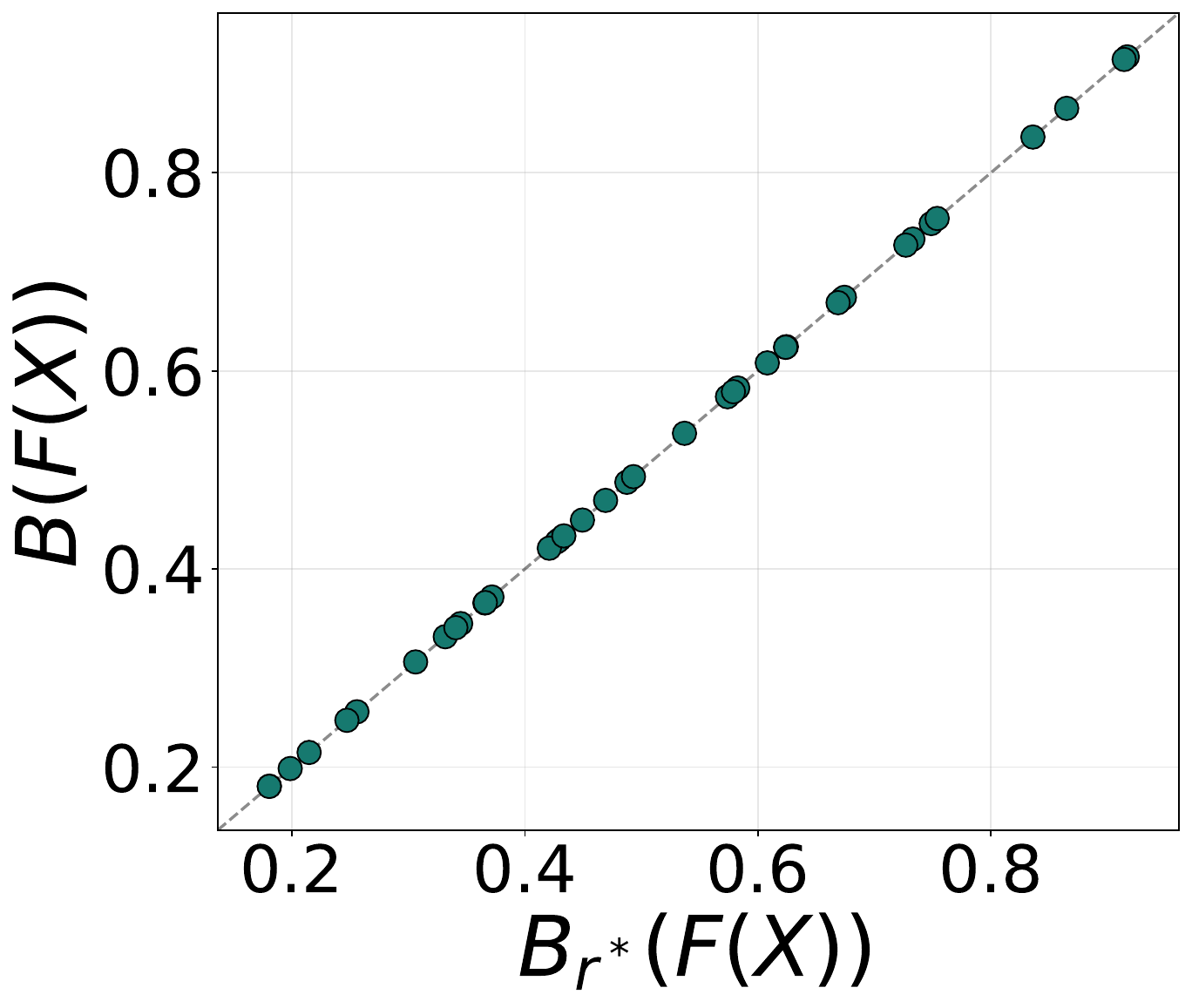} &
        \includegraphics[width=0.22\linewidth]{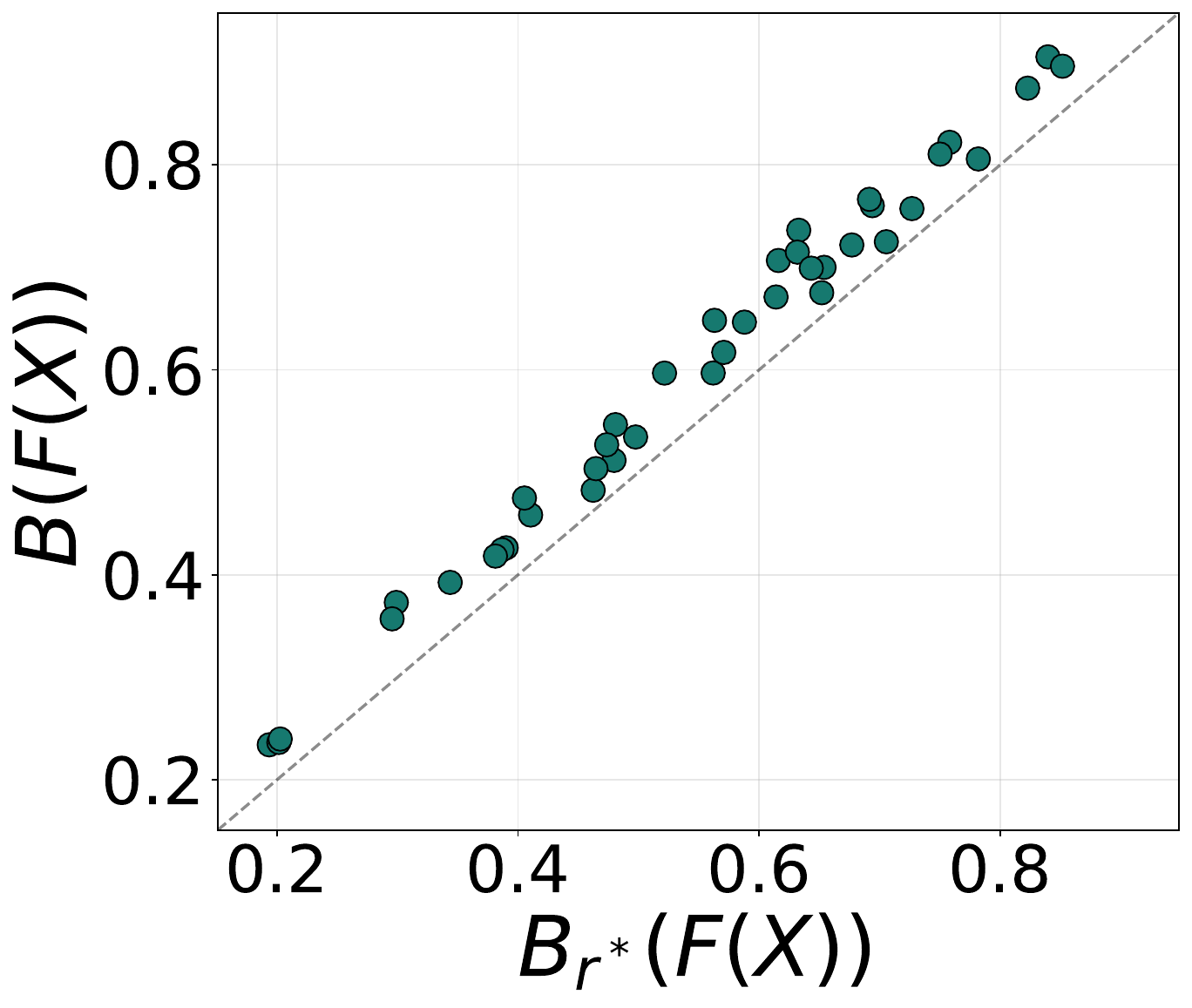} &
        \includegraphics[width=0.22\linewidth]{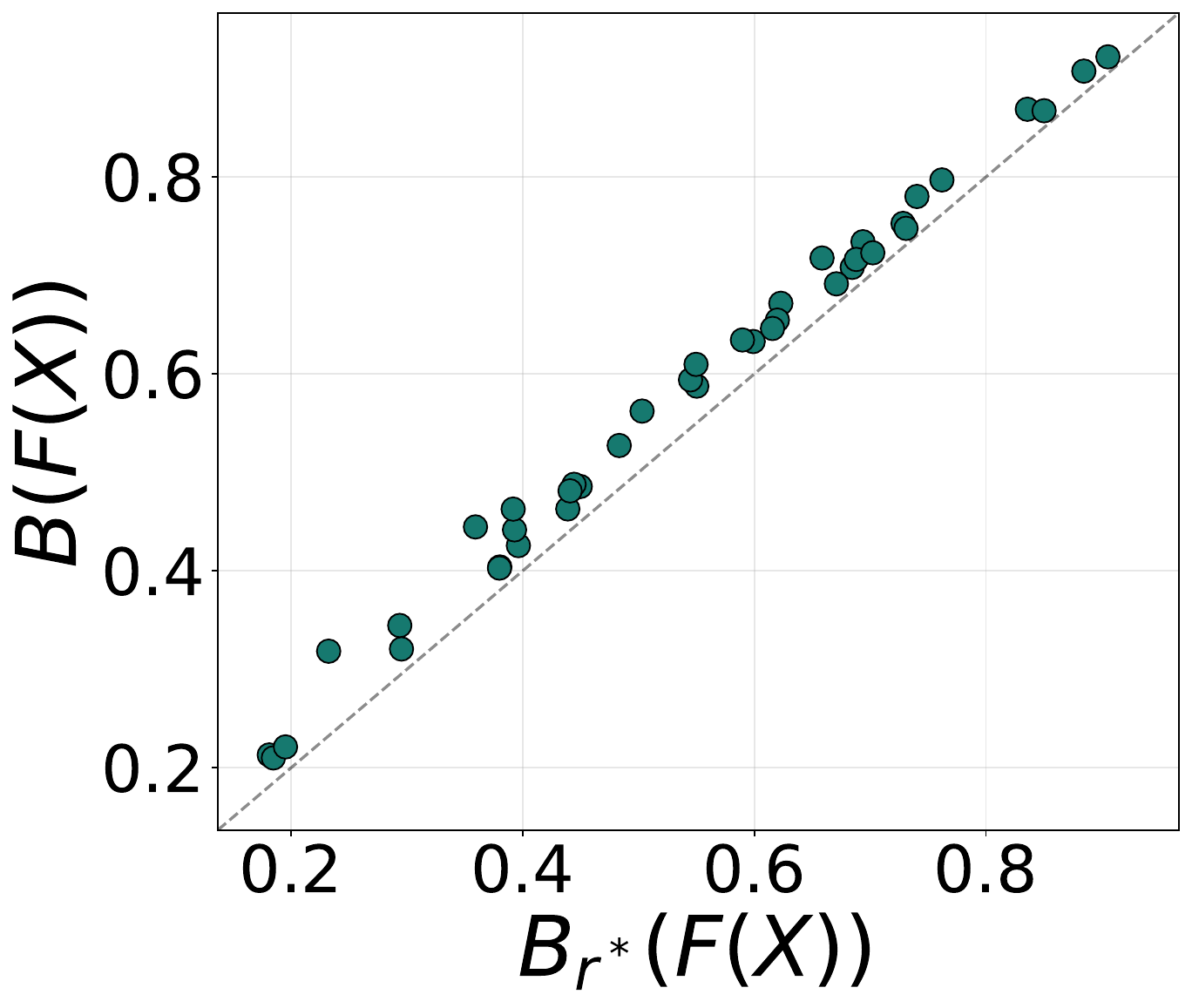} \\
        {\small {\bf (a) W-MSE-CelebA}} &
         {\small {\bf (b) VICReg-CelebA}} &
         {\small {\bf (c) Barlow-IM1K}} &
         {\small {\bf (d) I-JEPA-IM1K}}
    \end{tabular}
    \caption{{\bf Recoverability from the paired-view spectral subspace. \enspace}
    Directly measured task recoverability \(\widehat B^{(t)}(F)\) versus its reconstruction \(\widehat B_{r^\star}^{(t)}\) from the label-free paired-view spectral modes. Each point is one CelebA attribute; the dashed line is perfect agreement. For ImageNet-pretrained encoders, the retained subspace is reduced-rank and the median absolute discrepancy is below \(0.05\) across models; additional encoders are shown in App.~\ref{app:celeba_extra}.}
    \label{fig:bf_equals_br}
\end{figure}

\subsection{Settings}
\label{subsec:exp_setup}

We test the main empirical consequences of the theory on natural and synthetic data. Full dataset, preprocessing, and training details are given in App.~\ref{app:exp_details}.

{\bf Datasets.\enspace} The main experiments use CelebA~\citep{liu2015deeplearningfaceattributes}, whose 40 binary attributes provide a collection of downstream semantic tasks. For each attribute, we balance the two classes by subsampling so that the empirical problem matches the binary setting of Sec.~\ref{sec:setup}. Controlled experiments on dSprites~\citep{dsprites17} and 3DShapes~\citep{3dshapes18} are reported in App.~\ref{app:synthetic_results}.

{\bf SSL methods and rank-$r$ representations.\enspace}
We train W-MSE~\citep{pmlr-v139-ermolov21a} and VICReg~\citep{bardes2022vicreg} from scratch on CelebA with ResNet-50~\citep{He_2016_CVPR} backbones. We also evaluate publicly available ImageNet-1K~\citep{5206848}-pretrained VICReg and Barlow Twins~\citep{zbontar2021barlow} ResNet-50 encoders and an I-JEPA~\citep{DBLP:conf/cvpr/AssranDMBVRLB23} ViT-H/14 encoder. To vary representation rank, we estimate a paired-view spectral basis from unlabeled data and retain its leading $r$ directions. The retained features are then centered and whitened before the label-dependent measurements of \(B_r\) below.

{\bf Estimating semantic recoverability.\enspace}
For a balanced downstream task, let \(\hat\beta=\widehat{\E}[YF(X)] \) and \( \widehat G=\widehat{\E}[F(X)F(X)^\top]\).
We estimate captured posterior energy by $\widehat B(F)=\hat\beta^\top \widehat G^{-1}\hat\beta$. This form accounts for finite-sample deviations from exact whitening. When \(\widehat G=I_r\), it reduces to \(\widehat B(F)=\|\widehat{\E}[YF(X)]\|_2^2\). Unless noted otherwise, all reported recoverability values use this estimator.

\subsection{Results}
\label{subsec:exp_results}

{\bf Recoverability is task-dependent.\enspace}
Because the rank-$r$ subspaces are nested, \(B_r^{(t)}\) is nondecreasing in \(r\). Fig.~\ref{fig:captured_Br} nevertheless shows substantial variation across CelebA attributes: the most recoverable tasks approach \(B_r^{(t)}\approx1\) at moderate ranks, while the average over all 40 attributes remains lower. SSL representations also exceed random-initialization baselines across a broad range of ranks. Tasks differ markedly in both saturation rate and attainable recoverability. The same pattern appears for additional pretrained encoders in App.~\ref{app:celeba_extra} and on dSprites and 3DShapes in App.~\ref{app:synthetic_results}.

{\bf Recoverability predicts directional geometry.\enspace}
Prop.~\ref{prop:Br_probe_dcdnv} gives the exact relation \(\tilde V_F=(1-B(F))/(2B(F))\) for centered, whitened representations. We test it with a split-sample protocol: recoverability is estimated on split A and directional CDNV independently on split B. Fig.~\ref{fig:Br_predicts_geom} shows close agreement over more than two orders of magnitude, across ranks and SSL representations. Additional encoder results are given in App.~\ref{app:celeba_extra}.

{\bf Recoverability controls few-shot transfer.\enspace}
Thm.~\ref{thm:ncc_bound_direct_via_Br} predicts that, at fixed rank and shot count, larger \(B_r^{(t)}\) yields lower NCC error, while increasing the support size \(m\) reduces centroid-estimation error. Fig.~\ref{fig:Br_predicts_ncc} shows both trends across CelebA attributes: empirical error decreases with recoverability, and the bound tightens as \(m\) grows. The bound remains above the observed error throughout the evaluated configurations and becomes more informative for well-recovered tasks. App.~\ref{app:celeba_extra} reports the remaining encoders and Tab.~\ref{tab:bound_comparison_male} shows comparisons with earlier neural-collapse-based transfer bounds~\citep{galanti2023comparative, luthra2025selfsupervisedcontrastivelearningapproximately, luthra2026directionalneuralcollapseexplains}.

{\bf Multiple semantic factors.\enspace}
Thm.~\ref{thm:axis_and_centroids} predicts two related effects: well-recovered tasks should have nearly orthogonal semantic axes when cross-task correlations and residual uncertainty are small, and their joint centroids should approach the corresponding factorial hyperrectangle. Fig.~\ref{fig:hyperrec_multitask}(a--b) shows two illustrative triplets selected on the training split using fixed recoverability and near-orthogonality thresholds (see App.~\ref{app:celeba_extra}); the displayed centroids are then measured on held-out test images. Panels (c--d) use every eligible triplet, with no additional recoverability or orthogonality screening. Across encoders, higher mean task recoverability is associated with both lower centroid RMSE and smaller task-axis overlap, consistent with the geometry predicted by the theorem.

{\bf The spectral subspace recovers semantic energy.\enspace}
Cor.~\ref{cor:posterior_decomp_ssl_basis} identifies recoverability at the population optimum with posterior energy in the leading two-view spectral subspace. We test this empirically by comparing \(\widehat B^{(t)}(F)\), measured directly from the learned representation, with \(\widehat B_{r^\star}^{(t)}\), reconstructed from the estimated paired-view spectral modes at the label-free rank \(r^\star\) defined in App.~\ref{app:celeba_extra}. Fig.~\ref{fig:bf_equals_br} shows close agreement. For CelebA-trained W-MSE and VICReg, the retained basis spans nearly the full projector space, so near-exact agreement is expected. For ImageNet-pretrained encoders, the lower-dimensional spectral subspace still recovers most of the full-representation semantic energy, with median absolute discrepancies below \(0.05\) across models. The subspace is estimated without downstream labels. For I-JEPA, the paired-view operator is a surrogate analysis kernel rather than the masking distribution used in pretraining; more details can be found in App.~\ref{app:exp_details}.

\section{Discussion and Limitations}
\looseness=-1 Our results identify {\em semantic recoverability} as the key quantity linking same-instance SSL to downstream geometry and transfer. The task-specific captured posterior energy \(B_r^{(t)}\) provides a unified account of directional neural collapse, probe--centroid alignment, multitask centroid structure, and few-shot performance. These results are derived primarily for population-optimal, centered and whitened representations. We also analyze a covariance-regularized surrogate, but extending the theory to finite-sample training and imperfect optimization remains an important direction for future work. Our main analysis also focuses on balanced binary tasks and conditionally i.i.d.\ two-view generation. Extending the framework to multiclass tasks and more general view dependencies is another natural direction.

\newpage

\section*{AI use statement}

In this work, we used generative AI tools (ChatGPT and Claude Code) to assist with developing and refining theoretical frameworks, formulating mathematical claims, discussing proof strategies, and providing feedback on experimental methodology. These tools also assisted with experimental design and implementation, including code generation and refinement, as well as research brainstorming, literature exploration, manuscript organization, drafting and editing, and LaTeX presentation.

All AI-assisted theoretical arguments, mathematical derivations, experimental designs, and generated or modified code were reviewed by the authors for correctness and consistency. Relevant literature was independently examined, and all experimental results and their interpretation were evaluated by the authors. The authors take full responsibility for the final content of the paper, including its text, claims, code, results, and other artifacts produced with the assistance of generative AI.

\section*{Ethics statement}

This work studies theoretical and empirical properties of representations learned through self-supervised learning. It does not involve new human-subject studies or the collection of new personal data. The experiments use existing datasets and pretrained models for research purposes. The work does not develop or evaluate a high-risk application.

\section*{Reproducibility statement}

We state the assumptions and formal results in the main text and provide complete proofs in the appendix. The experimental sections and supplementary material document the datasets, pretrained representations, evaluation protocols, preprocessing, training procedures, and implementation details needed to reproduce our empirical results. We also specify the procedures used to estimate semantic recoverability, directional CDNV, spectral structure, and few-shot transfer performance.

\bibliography{refs}
\bibliographystyle{iclr2027_conference}

\newpage
\appendix
\section{Implementation Details}
\label{app:exp_details}

{\bf Datasets.\enspace}
CelebA~\citep{liu2015deeplearningfaceattributes} contains 202,599 RGB face images at resolution \(178\times218\), covering 10,177 identities and 40 binary attributes. We use the standard identity-disjoint split of 162,770 training, 19,867 validation, and 19,962 test images.

The dSprites dataset contains 737,280 binary \(64\times64\) images generated from six latent factors. We retain two of the three shapes, the four most extreme scale levels, and the eight most extreme values of each position coordinate, leaving orientation unrestricted. This gives 20,480 images, split randomly into 18,432 training and 2,048 held-out test images. Binary downstream tasks are formed by thresholding selected generative factors.

3DShapes contains 480,000 RGB \(64\times64\) images generated from six latent factors. We keep six of the ten object-hue levels and the four most extreme scale levels, giving 144,000 images. We use 129,600 images for training and 14,400 for held-out testing. As with dSprites, binary downstream tasks are obtained by thresholding selected latent factors; the factors used in each analysis are indicated in the corresponding figure.

{\bf Evaluation protocol.\enspace}
The held-out test split is used for the saturation curves and the centroid-geometry evaluations. For the recoverability and directional-CDNV analyses, we construct disjoint subsets A and B from the training data. The subsets contain 81,385 images each for CelebA, 2,048 images each for dSprites, and 14,400 images each for 3DShapes. The paired-view spectral basis, whitening transform, and recoverability estimates are computed on A; directional CDNV is then measured independently on B.

{\bf Evaluation preprocessing.\enspace}
All label-dependent quantities (recoverability, directional CDNV, NCC error, and linear-probe quantities) are measured on clean, single-view images. CelebA images are center-cropped to \(160\times160\) and resized to \(128\times128\) for the models trained from scratch. dSprites and 3DShapes are kept at their native \(64\times64\) resolution and are bilinearly upsampled to the input resolution of externally pretrained backbones when needed. Inputs are normalized with ImageNet-1K channel statistics. On CelebA, augmented pairs are used only to estimate the paired-view spectral basis. On dSprites and 3DShapes, the same deterministic resize-and-normalize transform is used both for paired-view spectral estimation and for clean feature extraction.

{\bf VICReg and W-MSE augmentations.\enspace}
We use the same view distribution for the CelebA-trained VICReg and W-MSE models. Each view is generated by a random resized crop with scale \((0.2,1.0)\) and aspect ratio \((3/4,4/3)\), resized to \(128\times128\), followed by horizontal flipping with probability \(0.5\). We apply color jitter with brightness \(0.4\), contrast \(0.4\), saturation \(0.2\), and hue \(0.05\) with probability \(0.8\), grayscale conversion with probability \(0.1\), and Gaussian blur with probability \(0.5\) using a \(7\times7\) kernel and \(\sigma\sim\mathcal U[0.1,2.0]\). We use a minimum crop scale of \(0.2\), rather than the ImageNet value \(0.08\), to preserve more facial structure. Unlike the original VICReg pipeline~\citep{bardes2022vicreg}, we do not use solarization and use the same blur probability for both views.

{\bf Pretrained encoders.\enspace}
For ImageNet-1K-pretrained VICReg, Barlow Twins, and I-JEPA, CelebA images are resized to a shorter side of 256 pixels and center-cropped to \(224\times224\). To estimate the paired-view spectral basis, we use an ImageNet-scale version of the augmentation pipeline above: random resized crops with scale \((0.08,1.0)\), color-jitter hue \(0.1\), grayscale probability \(0.2\), and a \(23\times23\) Gaussian blur kernel. For I-JEPA, these augmented pairs define a surrogate two-view kernel for analysis; they are not the masking-based views used during I-JEPA pretraining. For dSprites and 3DShapes, externally pretrained encoders receive bilinearly upsampled \(224\times224\) inputs.

{\bf Training details.\enspace}
We train VICReg~\citep{bardes2022vicreg} and W-MSE~\citep{pmlr-v139-ermolov21a} from scratch on the CelebA training split for 1000 epochs. Both use AdamW~\citep{loshchilov2018decoupled} with \(\beta_1=0.9\), \(\beta_2=0.999\), \(\epsilon=10^{-8}\), and weight decay \(10^{-6}\). The learning rate is scaled linearly with global batch size, \(\eta=\eta_{\mathrm{base}}B/256\), with a 10-epoch linear warm-up~\citep{goyal2017accurate} followed by cosine decay~\citep{loshchilov2016sgdr}.

For VICReg, the projection head is
\texttt{Linear(2048 $\rightarrow$ 2048, bias=False) $\rightarrow$ BN $\rightarrow$ ReLU $\rightarrow$ Linear(2048 $\rightarrow$ 2048)}.
The invariance, variance, and covariance coefficients are \(\lambda=25\), \(\mu=25\), and \(\nu=1\). We use a global batch size of 1024 across two NVIDIA H100 GPUs and \(\eta_{\mathrm{base}}=1.5\times10^{-4}\).

For W-MSE, the projection head is
\texttt{Linear(2048 $\rightarrow$ 2048, bias=False) $\rightarrow$ BN $\rightarrow$ ReLU $\rightarrow$ Linear(2048 $\rightarrow$ 128)}.
Following~\citep{pmlr-v139-ermolov21a}, embeddings from each view are split into sub-batches of 256 and whitened independently using the inverse Cholesky factor of the empirical covariance, with no covariance regularization (\(\epsilon=0\)). The loss is normalized mean-squared error between corresponding whitened embeddings. The two views are processed in one concatenated forward pass. We use a global batch size of 512 on one NVIDIA H100 and \(\eta_{\mathrm{base}}=10^{-3}\).

{\bf Architectures.\enspace}
Both CelebA-trained models use a standard ResNet-50~\citep{He_2016_CVPR}. At \(128\times128\) resolution, the final residual stage produces a \(4\times4\) feature map, which is globally averaged to a 2048-dimensional representation. Unless stated otherwise, representation analyses use these pooled backbone features rather than projection-head outputs.

We also evaluate public ImageNet-1K-pretrained VICReg and Barlow Twins~\citep{zbontar2021barlow} ResNet-50 encoders and an I-JEPA~\citep{DBLP:conf/cvpr/AssranDMBVRLB23} ViT-H/14 encoder, without fine-tuning. The ResNet-50 encoders produce 2048-dimensional pooled features. I-JEPA has 32 transformer layers, 16 attention heads, hidden dimension 1280, and MLP dimension 5120. At \(224\times224\), it produces 256 patch tokens and no classification token; we average the final-layer-normalized patch tokens to obtain a 1280-dimensional representation.

\section{Additional Results}
\label{app:add_results}

\subsection{Additional CelebA Results}
\label{app:celeba_extra}

This section reports the remaining CelebA results using the same protocol as Sec.~\ref{sec:exp}.

{\bf Semantic recoverability.\enspace}
Fig.~\ref{fig:app_vicreg_recoverability} (a) shows the rank-dependent recoverability curves for ImageNet-1K-pretrained VICReg. It reproduces the main-text pattern: recoverability varies substantially across attributes, the strongest tasks saturate at lower rank, and the learned representations remain above their random-initialization baselines.


\begin{figure}[t]
    \centering
    \begin{tabular}{ccc}
        \includegraphics[width=0.31\linewidth]{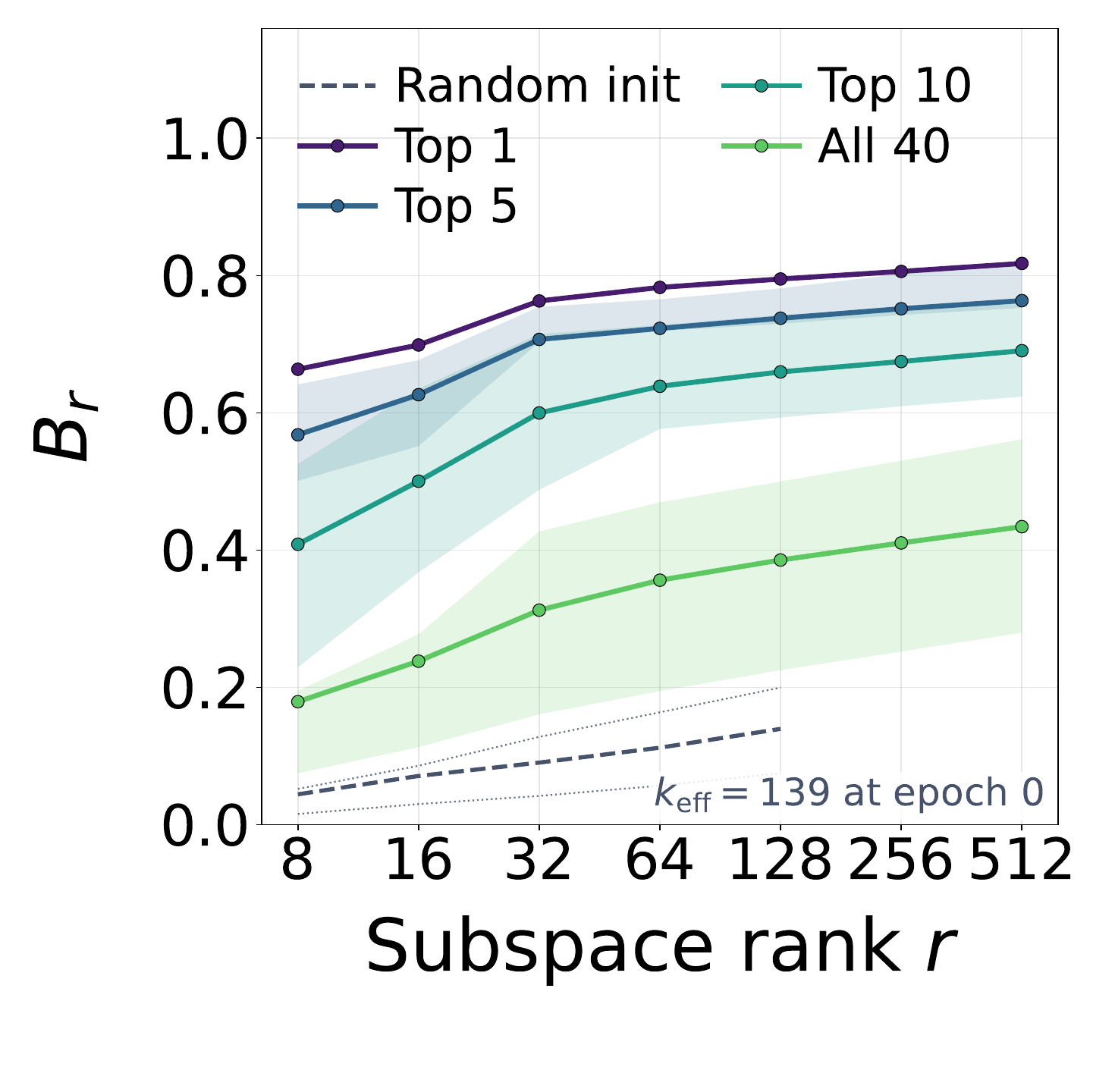} &
        \includegraphics[width=0.31\linewidth]{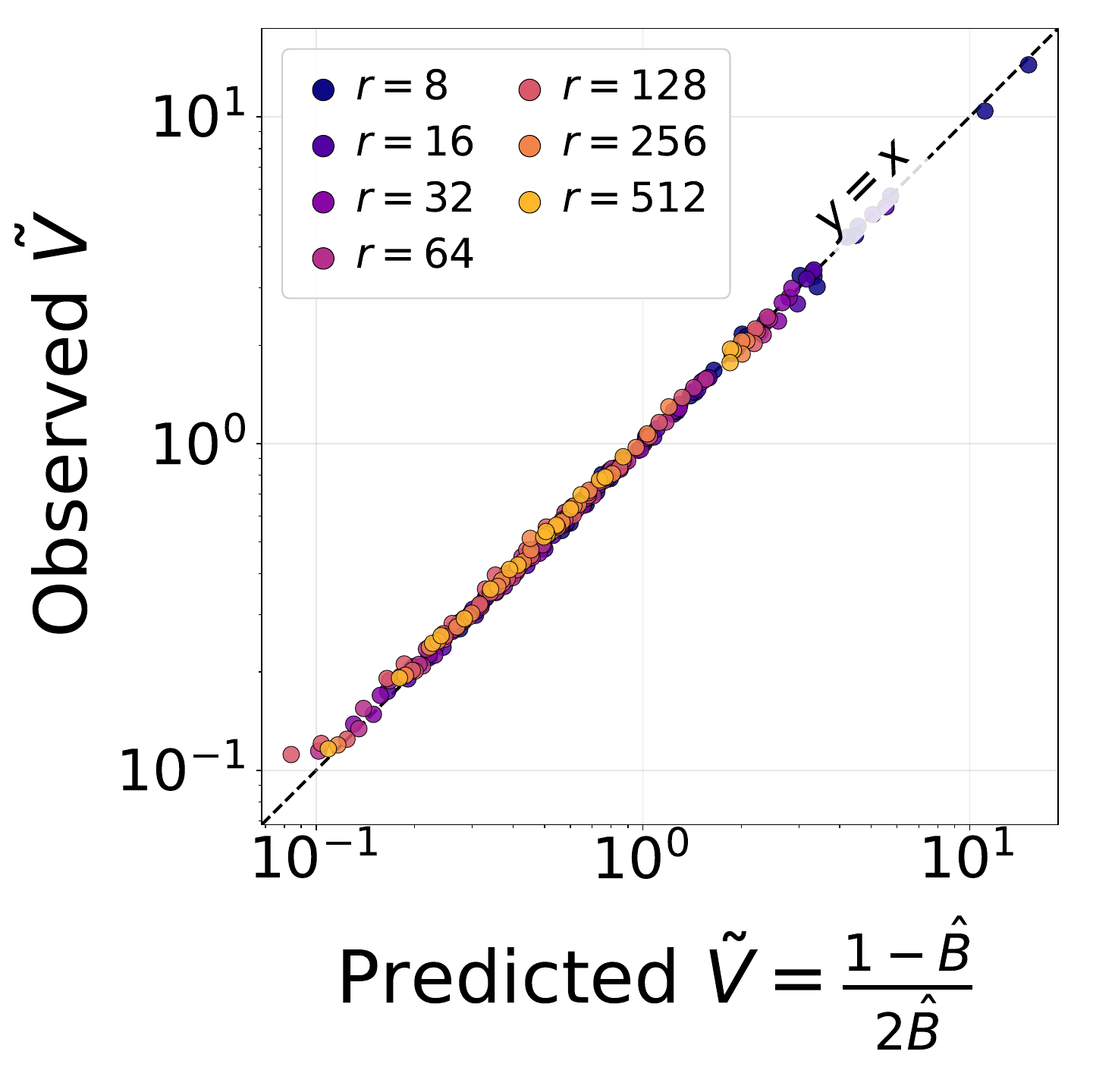} &
        \includegraphics[width=0.31\linewidth]{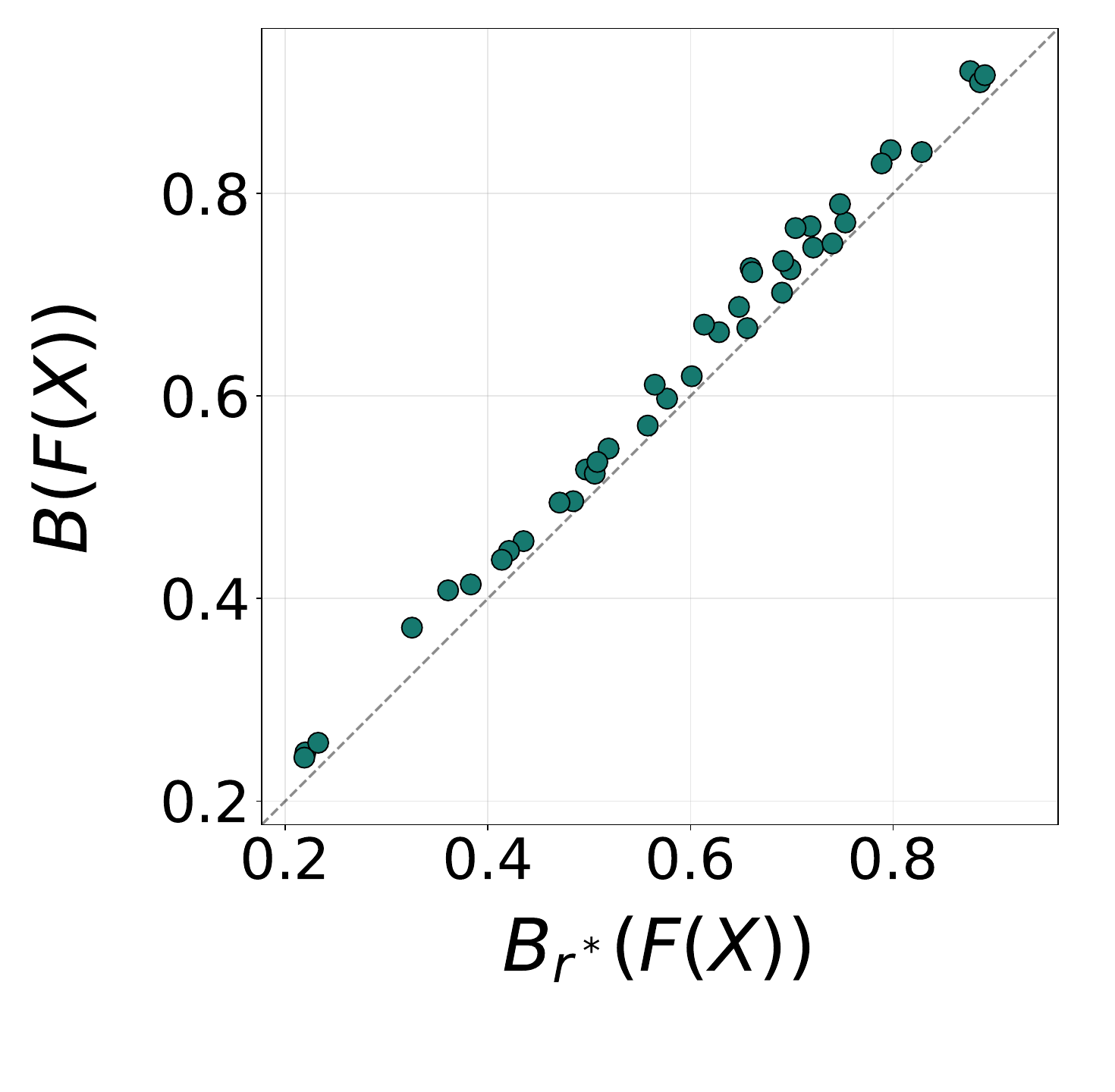} \\
        {\small {\bf (a) Captured posterior energy}} &
        {\small {\bf (b) Predicted vs.\ observed $\widetilde V$}} &
        {\small {\bf (c) Spectral identification}}
    \end{tabular}
    \caption{{\bf Additional recoverability results for VICReg-IM1K on CelebA. \enspace}
    (a) Captured posterior energy across representation ranks, using the same attribute groups and random-initialization reference as Fig.~\ref{fig:captured_Br}.
    (b) Directional CDNV predicted from recoverability estimated on split A versus directional CDNV measured independently on held-out split B; the dashed line is \(y=x\).
    (c) Recoverability measured directly from the learned representation versus its reconstruction from the paired-view spectral subspace; the dashed line denotes perfect agreement.}
    \label{fig:app_vicreg_recoverability}
\end{figure}

{\bf Directional geometry.\enspace}
Fig.~\ref{fig:app_vicreg_recoverability} (b) gives the held-out test of Prop.~\ref{prop:Br_probe_dcdnv}. Predictions are computed from split A and observed directional CDNV from split B. The agreement with \(y=x\) remains tight across attributes and ranks.


{\bf Few-shot transfer.\enspace}
Fig.~\ref{fig:app_Br_predicts_ncc} reports the remaining few-shot results. Across models, ranks, and shot counts, empirical NCC error decreases with recoverability and the bound tracks the same rank--shot tradeoff seen in the main text.

\begin{figure}[t]
    \centering
    \setlength{\tabcolsep}{2pt}
    \begin{tabular}{@{}ccc@{}}
         \includegraphics[width=0.31\linewidth]{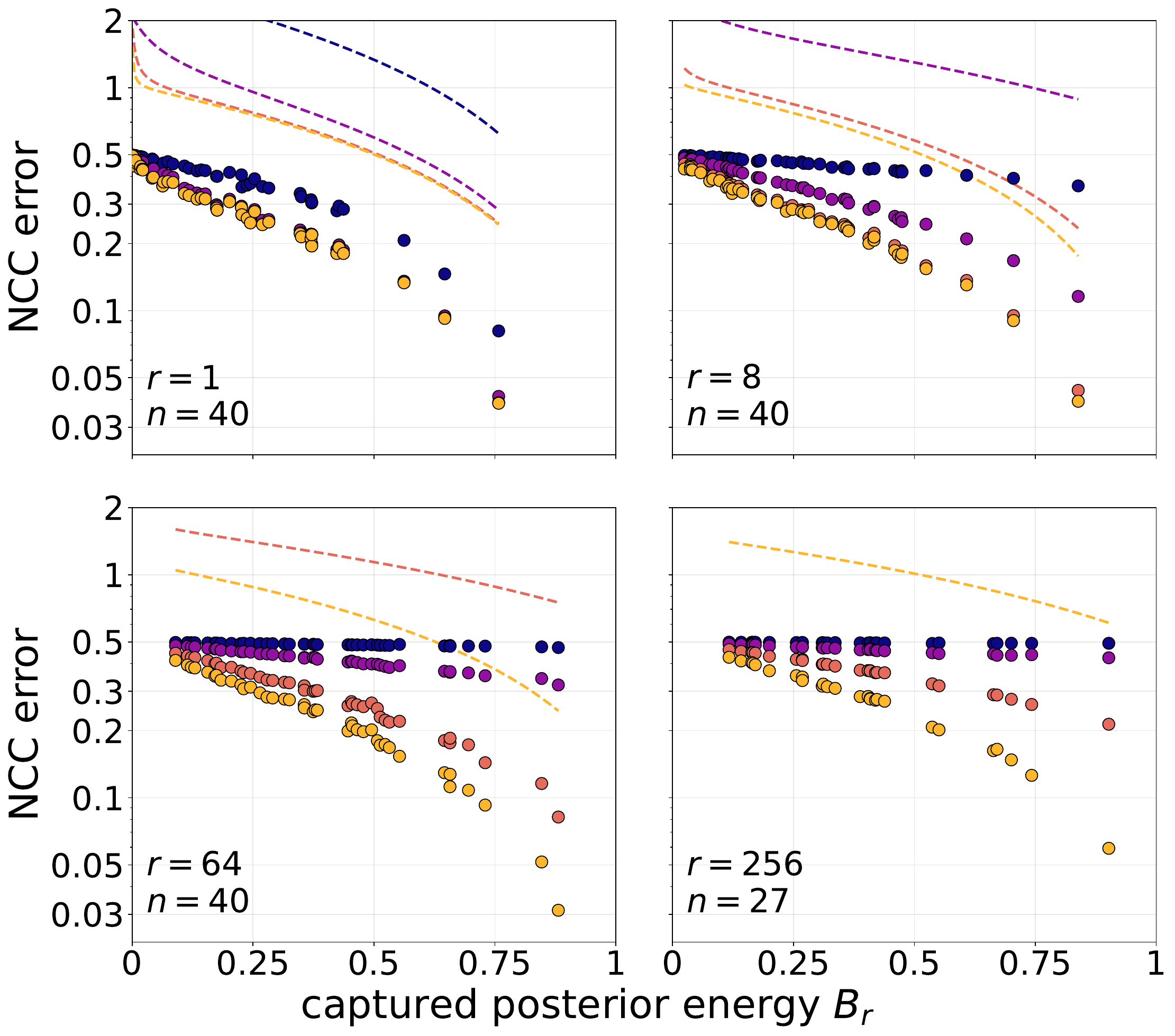} &
         \includegraphics[width=0.31\linewidth]{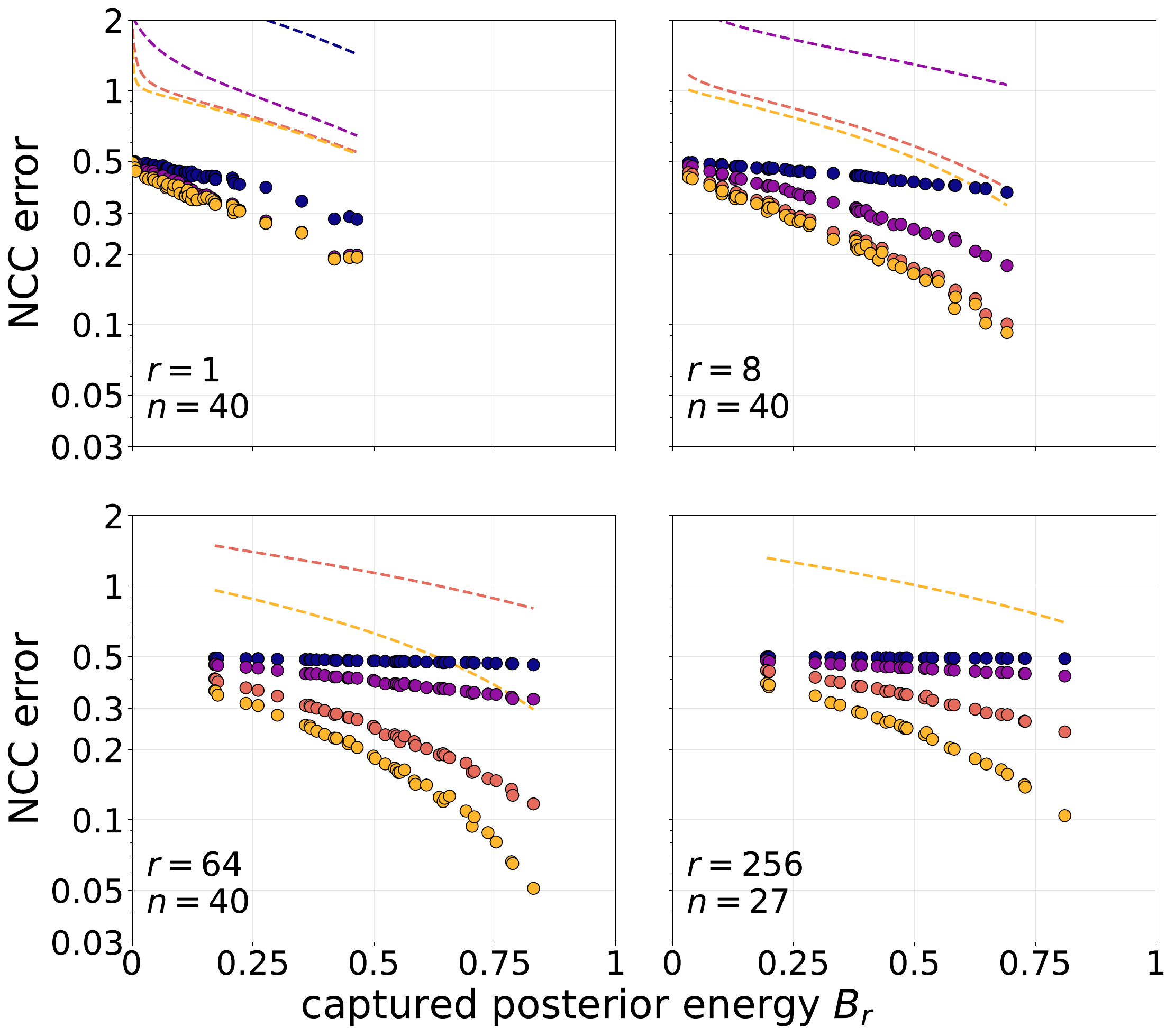} &
         \includegraphics[width=0.31\linewidth]{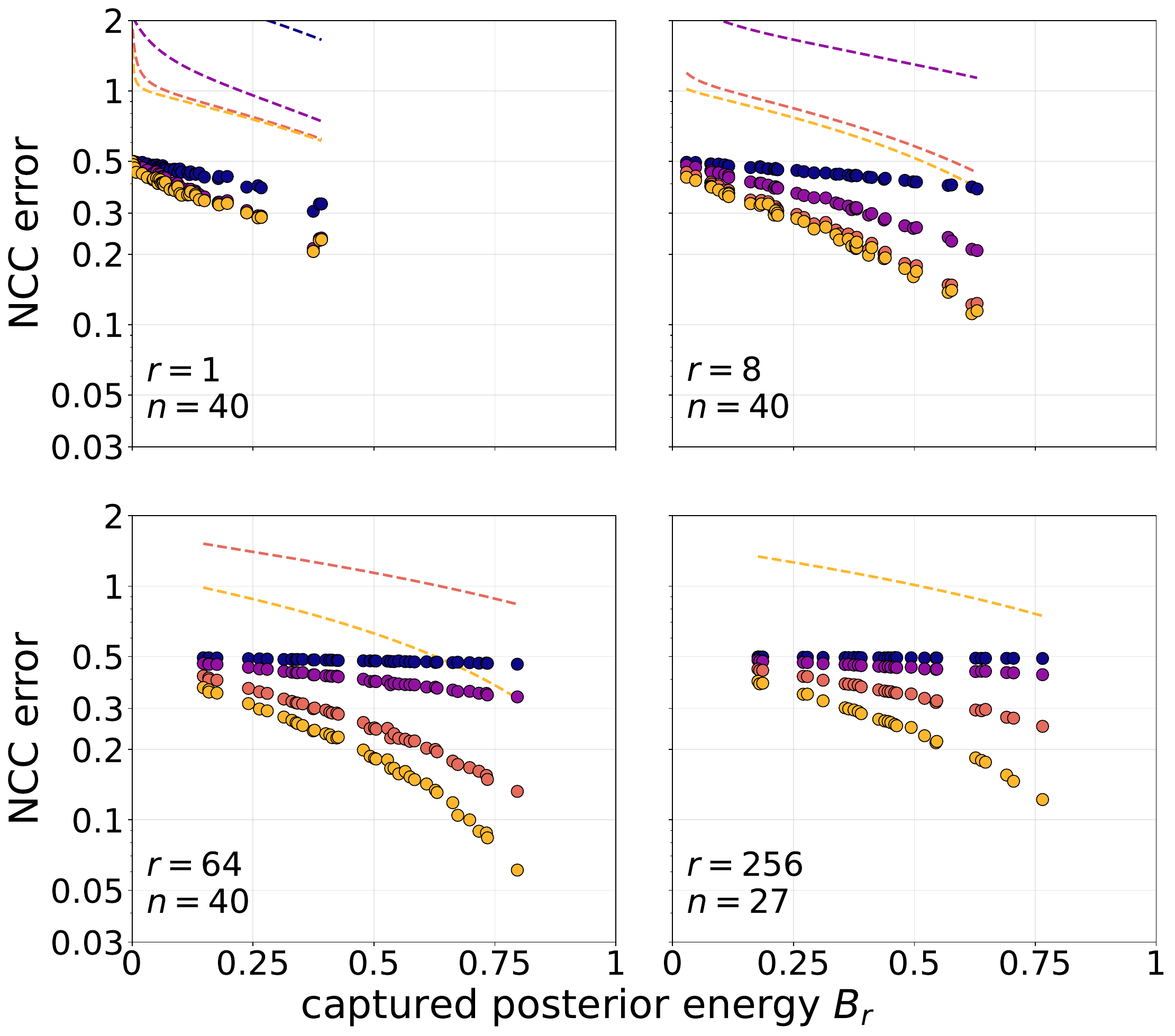} \\
         {\small {\bf (a) VICReg-CelebA}} &
         {\small {\bf (b) VICReg-IM1K}} &
         {\small {\bf (c) Barlow-IM1K}}
    \end{tabular}
    \caption{{\bf Captured posterior energy and few-shot NCC error. \enspace}
    Results for VICReg-CelebA, VICReg-IM1K, and Barlow Twins-IM1K across ranks \(r\) and support sizes \(m\). Dashed curves are the upper bound from Thm.~\ref{thm:ncc_bound_direct_via_Br}.}
    \label{fig:app_Br_predicts_ncc}
\end{figure}

{\bf Multitask geometry and spectral identification.\enspace}
Each encoder is evaluated at a label-free rank \(r^\star\), defined as the number of paired-view spectral directions whose eigenvalue is at least \(10^{-3}\) times the largest eigenvalue. This gives \(r^\star=946\) for W-MSE, 2046 for VICReg-CelebA, 763 for VICReg-IM1K, 806 for Barlow-IM1K, and 263 for I-JEPA-IM1K.

For each attribute triplet, the task-specific capture values \(B_{r^\star}^{(t)}\) and the task axes are estimated on the training split, weighting the eight joint-label cells equally; the centroids are then measured on the held-out test split. We write \(\overline B_{r^\star}:=\frac13\sum_{t=1}^3 B_{r^\star}^{(t)}\) for the triplet mean recoverability used in Fig.~\ref{fig:hyperrec_multitask}(c--d). Normalized centroid RMSE is the root-mean-square distance between observed and predicted centroids divided by the root-mean-square radius of the predicted box. Fig.~\ref{fig:hyperrec_multitask}(c--d) use all triplets drawn from the 26 attributes for which each binary class contains at least 10\% of the data and every joint-label cell contains at least 1000 training and 100 test examples. This yields 1646 triplets per encoder. No recoverability or orthogonality screening is applied to these quantitative panels. The curves are means in bins of width 0.05 containing at least 12 triplets; shaded regions show interquartile ranges. The illustrative cubes in Figs.~\ref{fig:hyperrec_multitask}(a--b) and \ref{fig:app_hyperrec_spectral_id}(a--b) are selected on the training split only from triplets satisfying \(\min_t B_{r^\star}^{(t)}\ge0.10\) and pairwise \(|\cos(u_i,u_j)|\le0.12\).

Fig.~\ref{fig:app_hyperrec_spectral_id} adds held-out centroid visualizations for VICReg-IM1K and Barlow Twins-IM1K, and Fig.~\ref{fig:app_vicreg_recoverability} (c) and Fig.~\ref{fig:bf_equals_br} (c) reports the corresponding spectral-identification plots for VICReg-IM1K and Barlow Twins-IM1K respectively. The latter compare recoverability measured in the full feature space with recoverability reconstructed from the label-free paired-view spectral subspace.

\begin{figure}[t]
    \centering
    \begin{tabular}{cc}
        \includegraphics[width=0.29\linewidth]{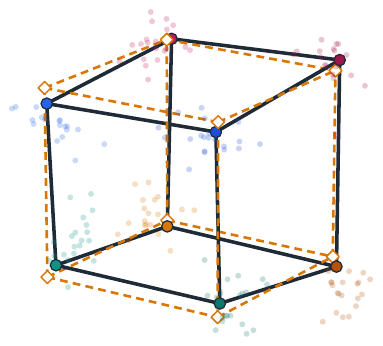} &
        \includegraphics[width=0.29\linewidth]{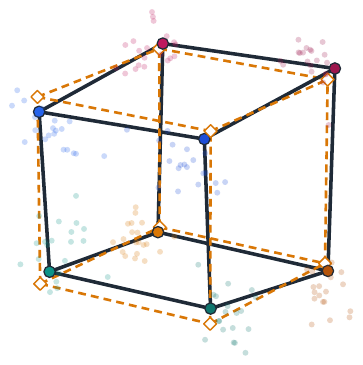} \\
        {\small {\bf (a) VICReg-IM1K}} &
        {\small {\bf (b) Barlow-IM1K}} \\[1mm]
    \end{tabular}
    \caption{{\bf Multitask centroid geometry on CelebA. \enspace}
    Held-out joint centroids and the hyperrectangles predicted by Thm.~\ref{thm:axis_and_centroids} for ImageNet-1K-pretrained VICReg and Barlow Twins. Experiment setup is same as Fig.~\ref{fig:hyperrec_multitask}}
    \label{fig:app_hyperrec_spectral_id}
\end{figure}

{\bf Comparison with prior transfer bounds.\enspace}
Table~\ref{tab:bound_comparison_male} compares Thm.~\ref{thm:ncc_bound_direct_via_Br} with earlier neural-collapse-based transfer bounds~\citep{galanti2023comparative,luthra2025selfsupervisedcontrastivelearningapproximately,luthra2026directionalneuralcollapseexplains}. On W-MSE trained on CelebA, the earlier bounds remain above \(0.5\) for all 40 attributes and all five shot counts shown. Our bound is non-vacuous in several rank--shot configurations and falls below \(0.5\) already at \(m=1\). The minimizing rank increases as more support examples become available, matching the rank--shot tradeoff in Thm.~\ref{thm:ncc_bound_direct_via_Br}.

\begin{table}[t]
    \centering
    \caption{\textbf{Comparison of few-shot transfer bounds on CelebA.}
    Bounds on NCC classification error for W-MSE trained on CelebA (``Male/Female'' attribute) . Our bound is minimized over the evaluated ranks separately for each shot count \(m\). Prior bounds use \(\ell_2\)-normalized full-dimensional features, whereas ours uses whitened top-\(r\) features. Values below \(0.5\) are non-vacuous relative to chance classification.}
    \label{tab:bound_comparison_male}
    \small
    \setlength{\tabcolsep}{5pt}
    \begin{tabular}{lccccc}
        \toprule
        \textbf{Bound} &
        $\boldsymbol{m=1}$ &
        $\boldsymbol{m=5}$ &
        $\boldsymbol{m=10}$ &
        $\boldsymbol{m=100}$ &
        $\boldsymbol{m=500}$ \\
        \midrule
        \citep{galanti2023comparative}
            & 2923.1 & 709.9 & 433.2 & 184.3 & 162.1 \\
        \citep{luthra2025selfsupervisedcontrastivelearningapproximately}
            & 138.1 & 32.1 & 20.6 & 6.70 & 3.53 \\
        \citep{luthra2026directionalneuralcollapseexplains}
            & 1544.4 & 282.5 & 126.7 & 10.27 & 2.05 \\
        \midrule
        \textbf{Our bound}
            & \textbf{0.490} & \textbf{0.253} & \textbf{0.223}
            & \textbf{0.190} & \textbf{0.161} \\
        \bottomrule
    \end{tabular}
\end{table}

\subsection{Results on Synthetic Datasets}
\label{app:synthetic_results}

We use dSprites and 3DShapes to check that the same recoverability--geometry relationships are visible in controlled factorized data. Figs.~\ref{fig:app_synth_captured_Br}--\ref{fig:app_synth_hyperrec} report rank-dependent recoverability, the directional-CDNV identity, and held-out multitask centroid geometry. The displayed encoders are ImageNet-1K-pretrained VICReg, I-JEPA, and Barlow Twins. Across both datasets, the qualitative behavior matches the CelebA results.

\begin{figure}[t]
    \centering
    \setlength{\tabcolsep}{2pt}
    \begin{tabular}{@{}ccc@{}}
    \multicolumn{3}{c}{
            \includegraphics[width=0.70\linewidth]{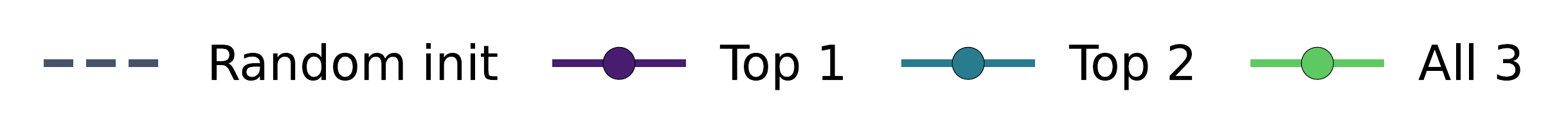}
        } \\[-1mm]
        \includegraphics[width=0.31\linewidth]{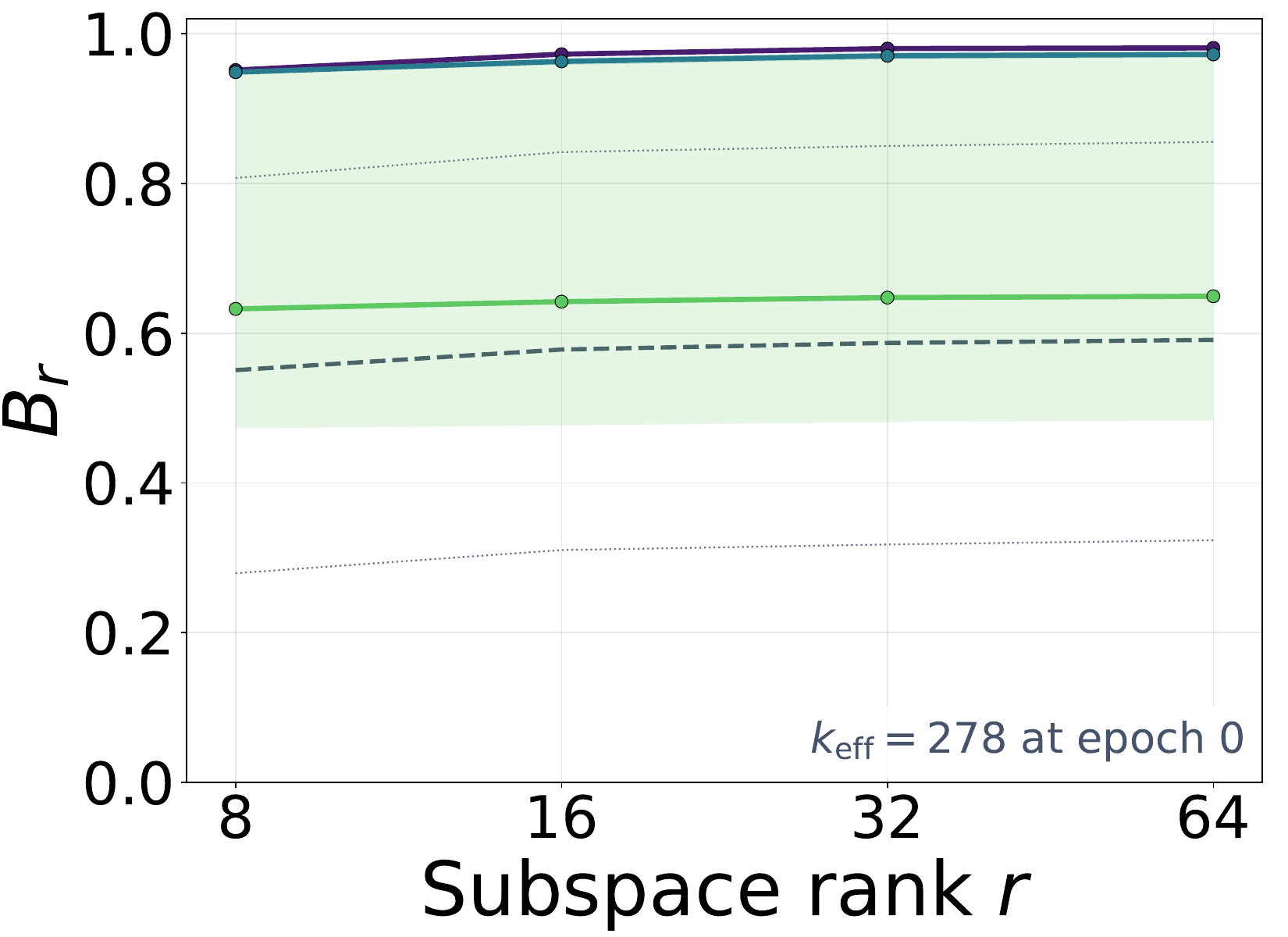} &
        \includegraphics[width=0.31\linewidth]{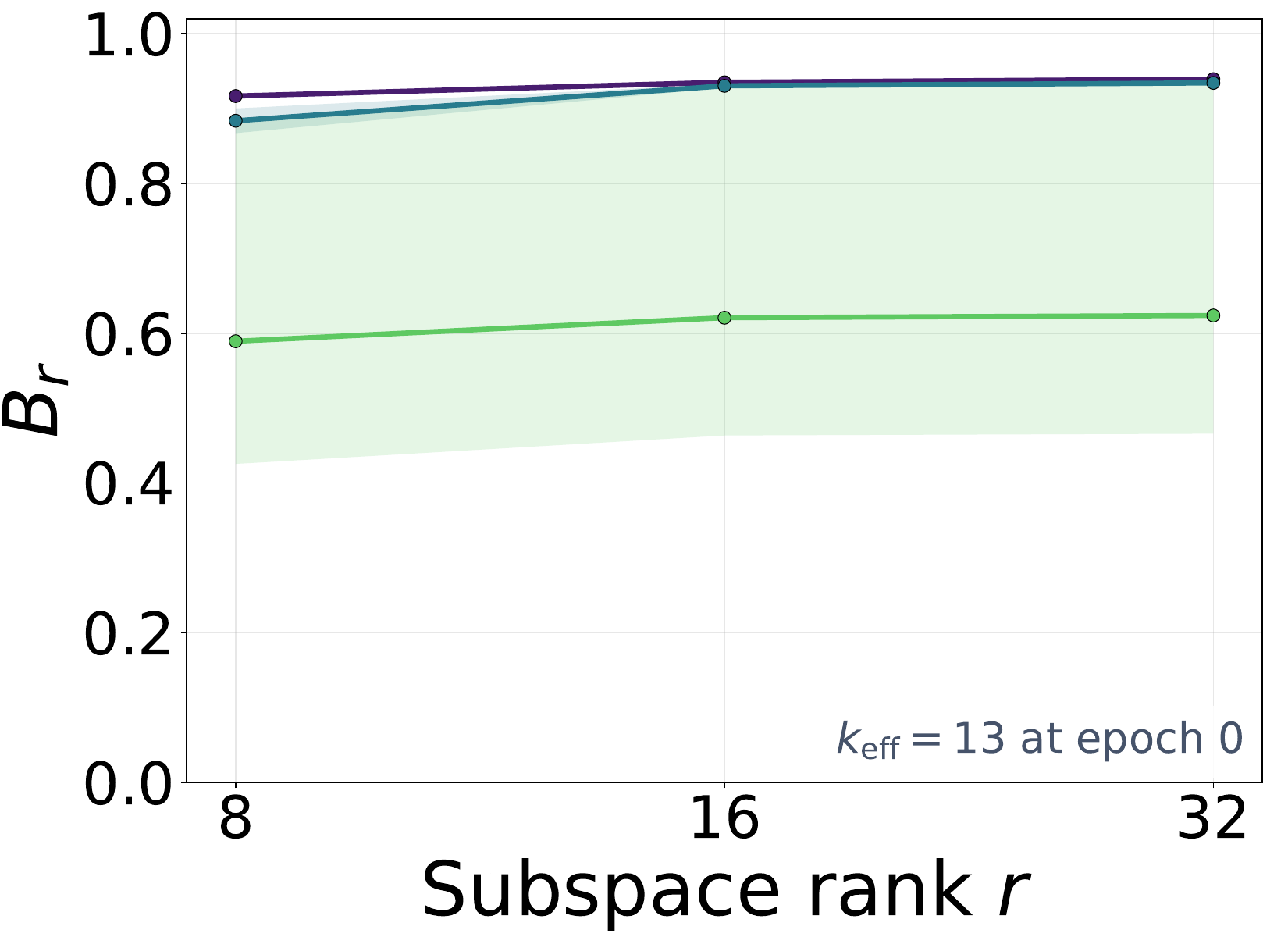} &
        \includegraphics[width=0.31\linewidth]{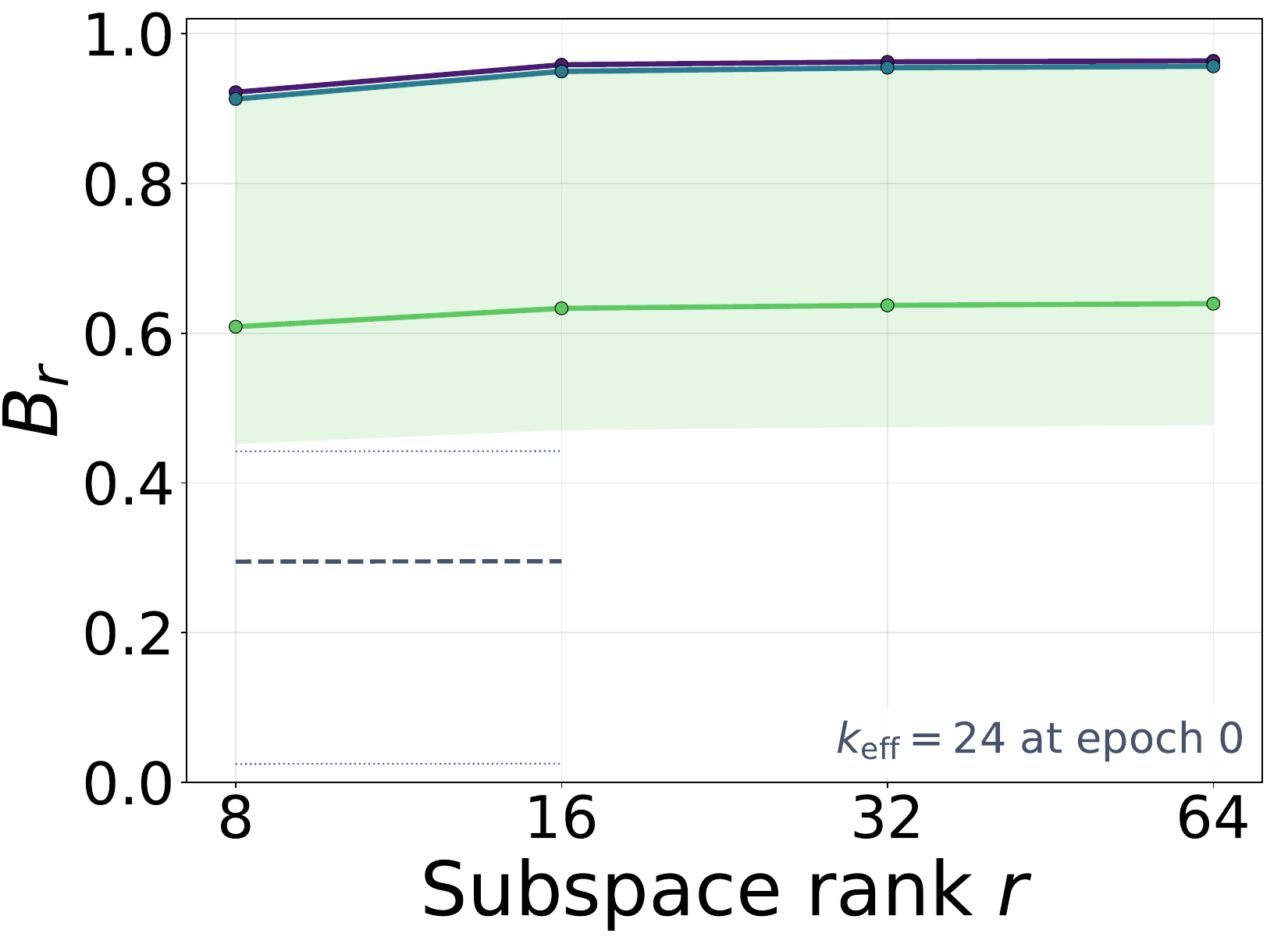} \\
        {\small {\bf (a) VICReg (dSprites)}} &
        {\small {\bf (b) I-JEPA (dSprites)}} &
        {\small {\bf (c) Barlow Twins (dSprites)}} \\[1mm]

        \multicolumn{3}{c}{
            \includegraphics[width=0.70\linewidth]{figures/Br_saturation/dSprites/Br_saturation_legend.pdf}
        } \\[-1mm]
        \includegraphics[width=0.31\linewidth]{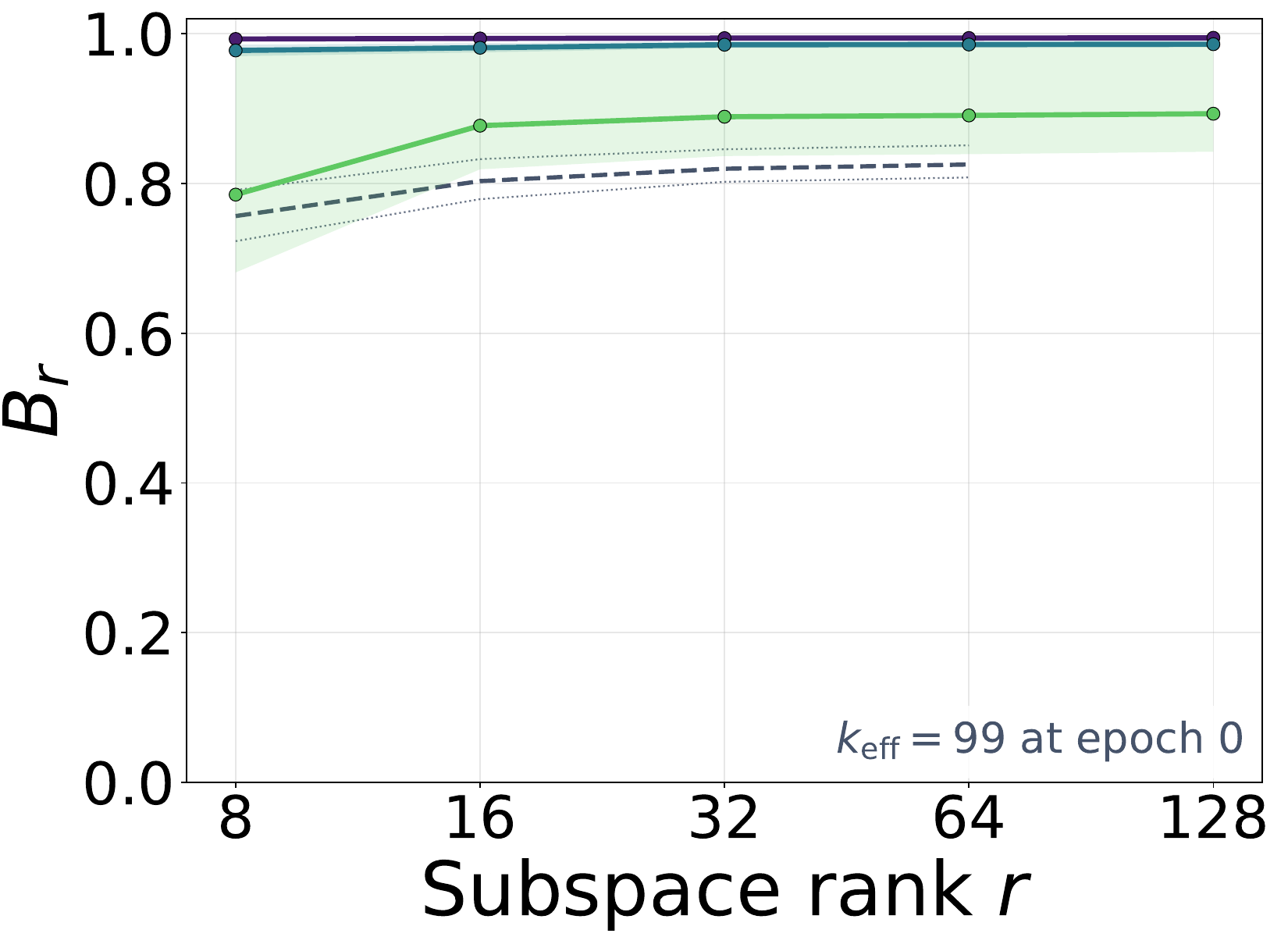} &
        \includegraphics[width=0.31\linewidth]{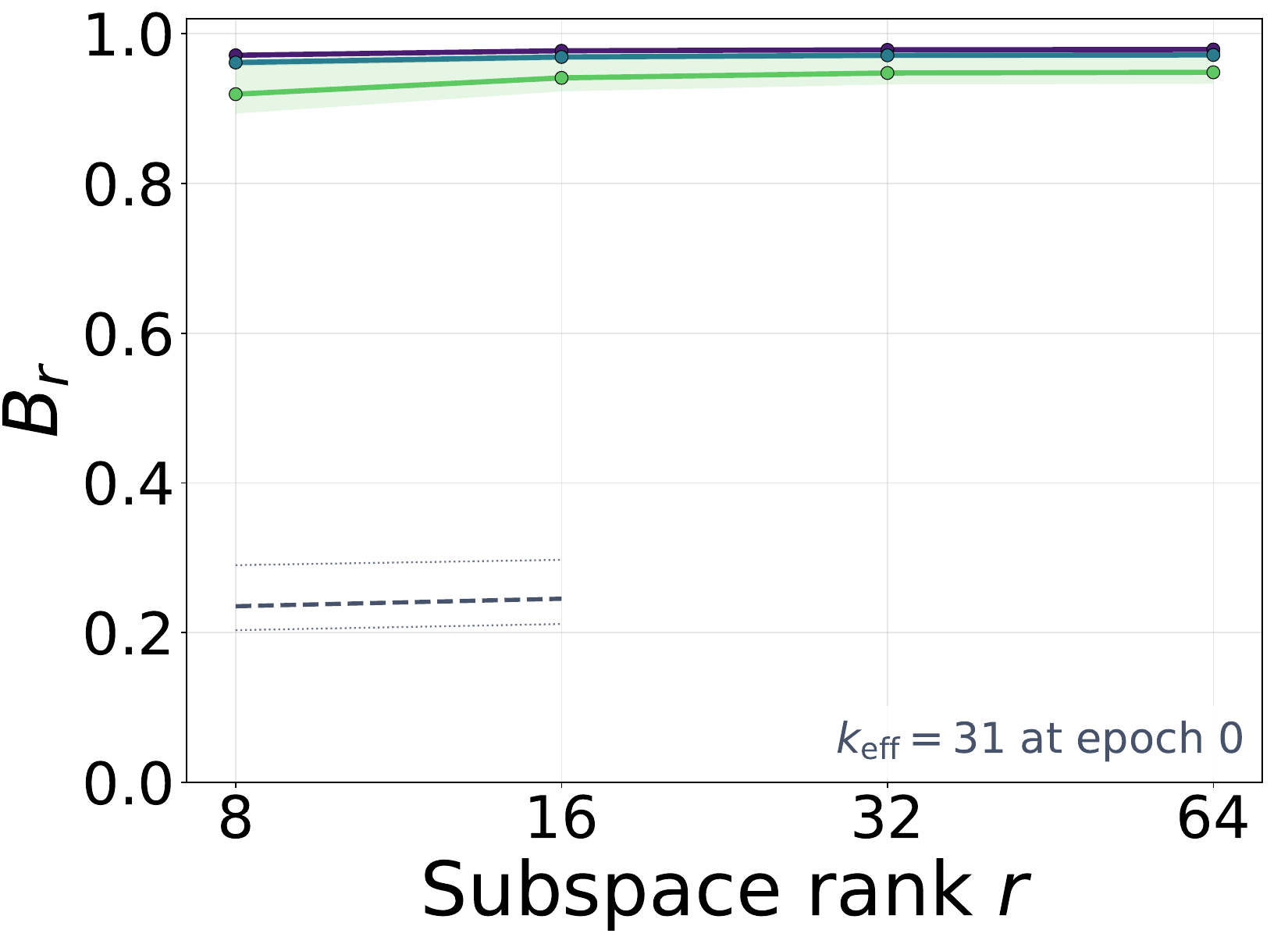} &
        \includegraphics[width=0.31\linewidth]{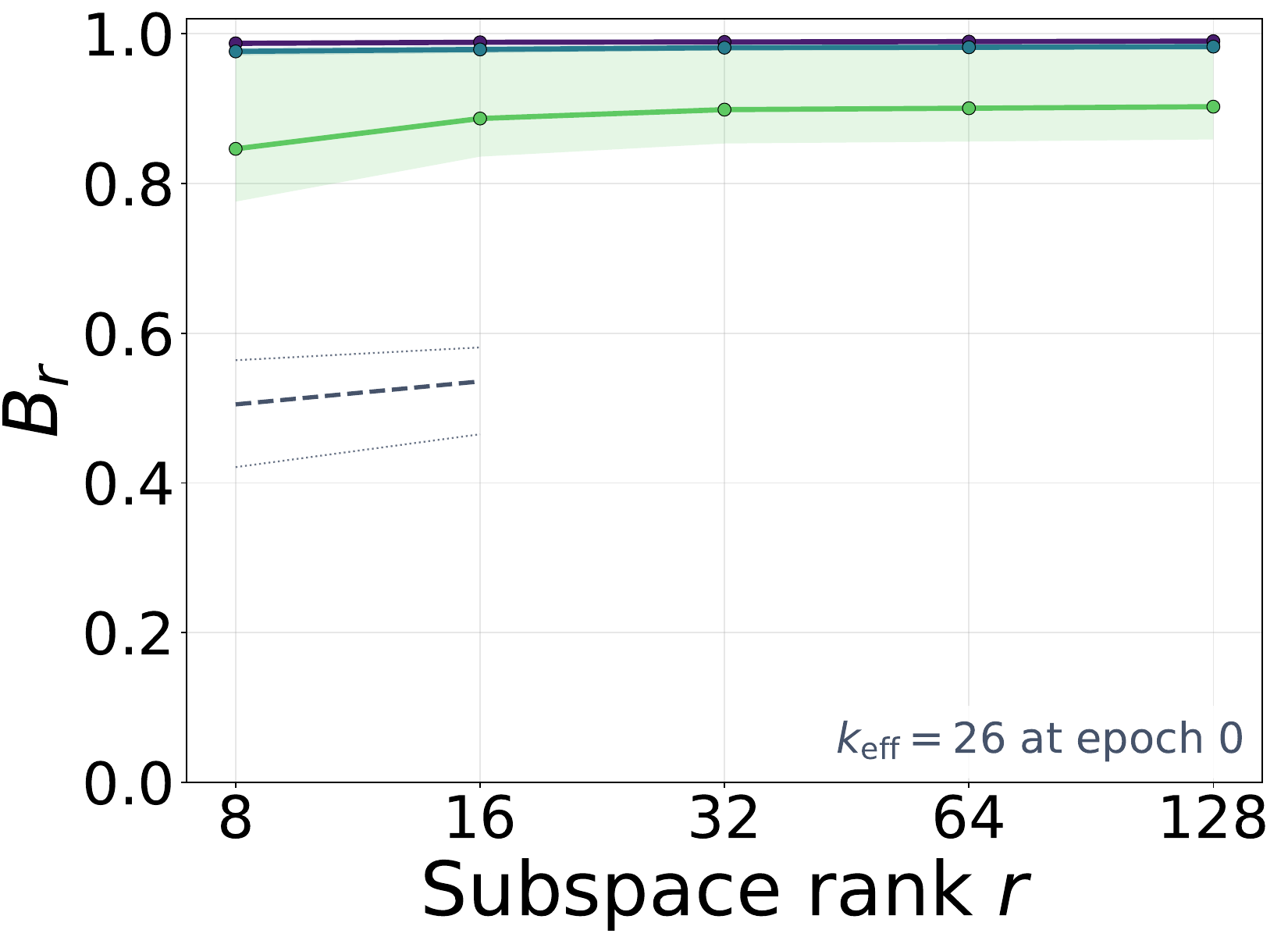} \\
        {\small {\bf (d) VICReg (3DShapes)}} &
        {\small {\bf (e) I-JEPA (3DShapes)}} &
        {\small {\bf (f) Barlow Twins (3DShapes)}}
    \end{tabular}
    \caption{{\bf Semantic recoverability on dSprites and 3DShapes. \enspace}
    Task-specific captured posterior energy \(B_r^{(t)}\) across representation ranks for ImageNet-1K-pretrained VICReg, I-JEPA, and Barlow Twins.}
    \label{fig:app_synth_captured_Br}
\end{figure}

\begin{figure}[t]
    \centering
    \setlength{\tabcolsep}{2pt}
    \begin{tabular}{@{}ccc@{}}
        \multicolumn{3}{c}{\includegraphics[width=0.62\linewidth]{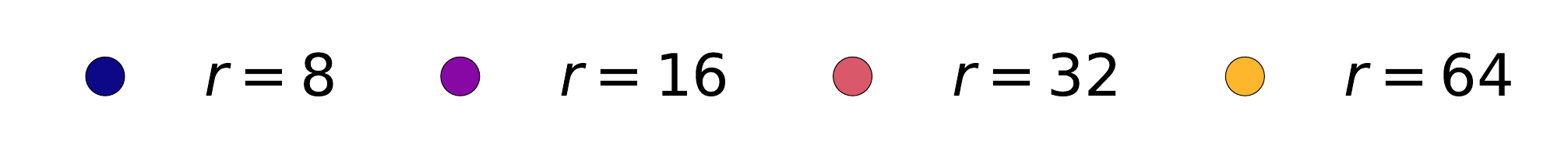}} \\[1mm]
        \includegraphics[width=0.31\linewidth]{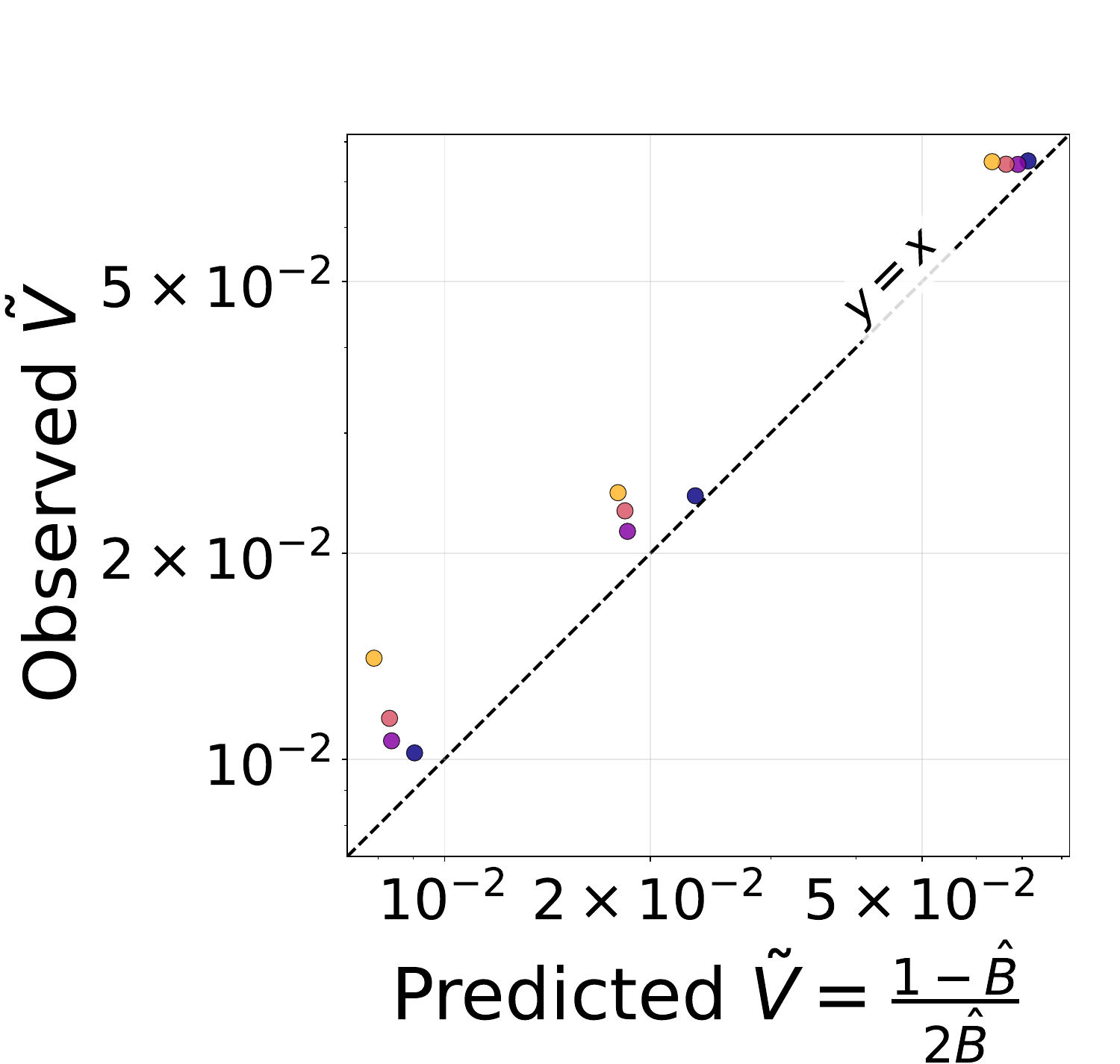} &
        \includegraphics[width=0.31\linewidth]{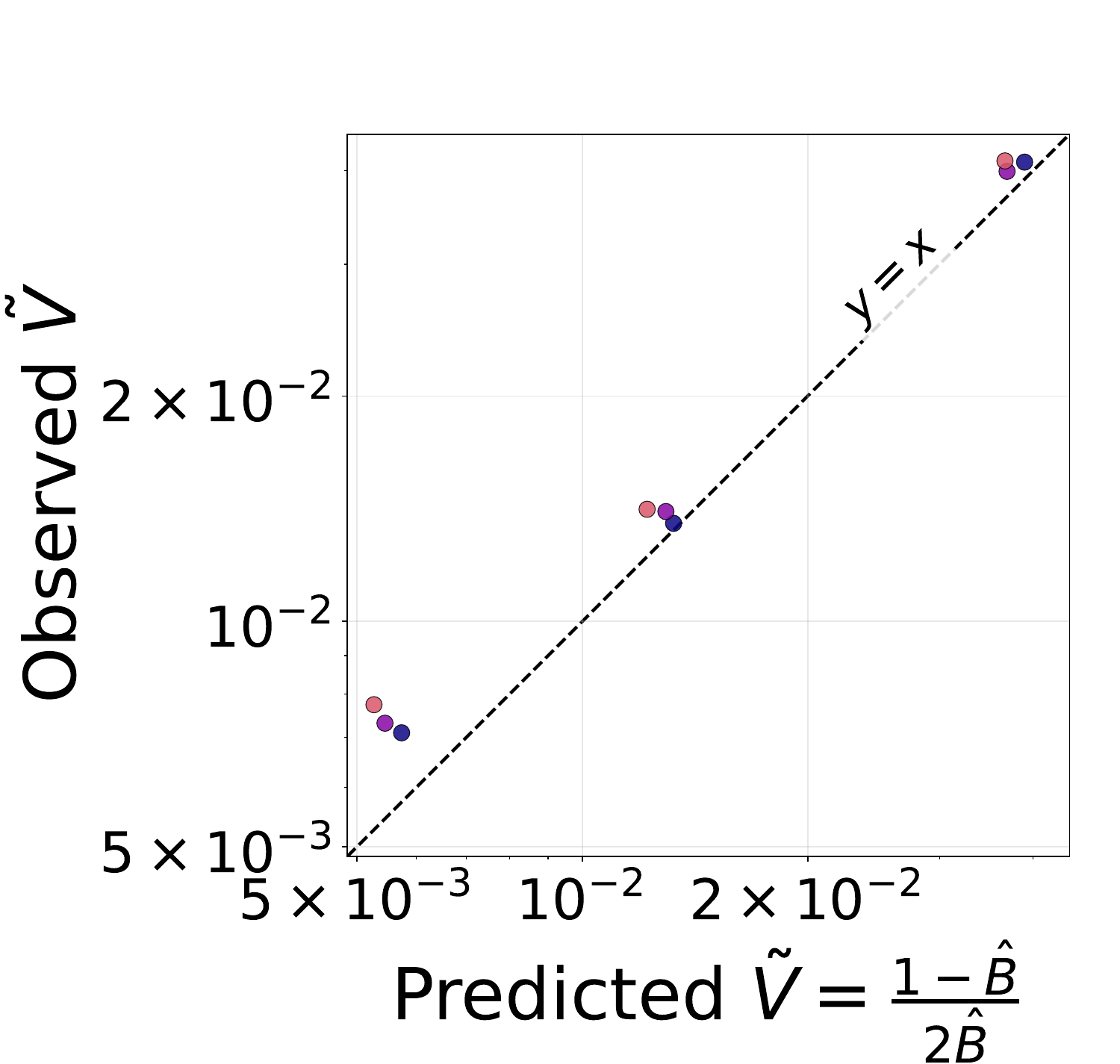} &
        \includegraphics[width=0.31\linewidth]{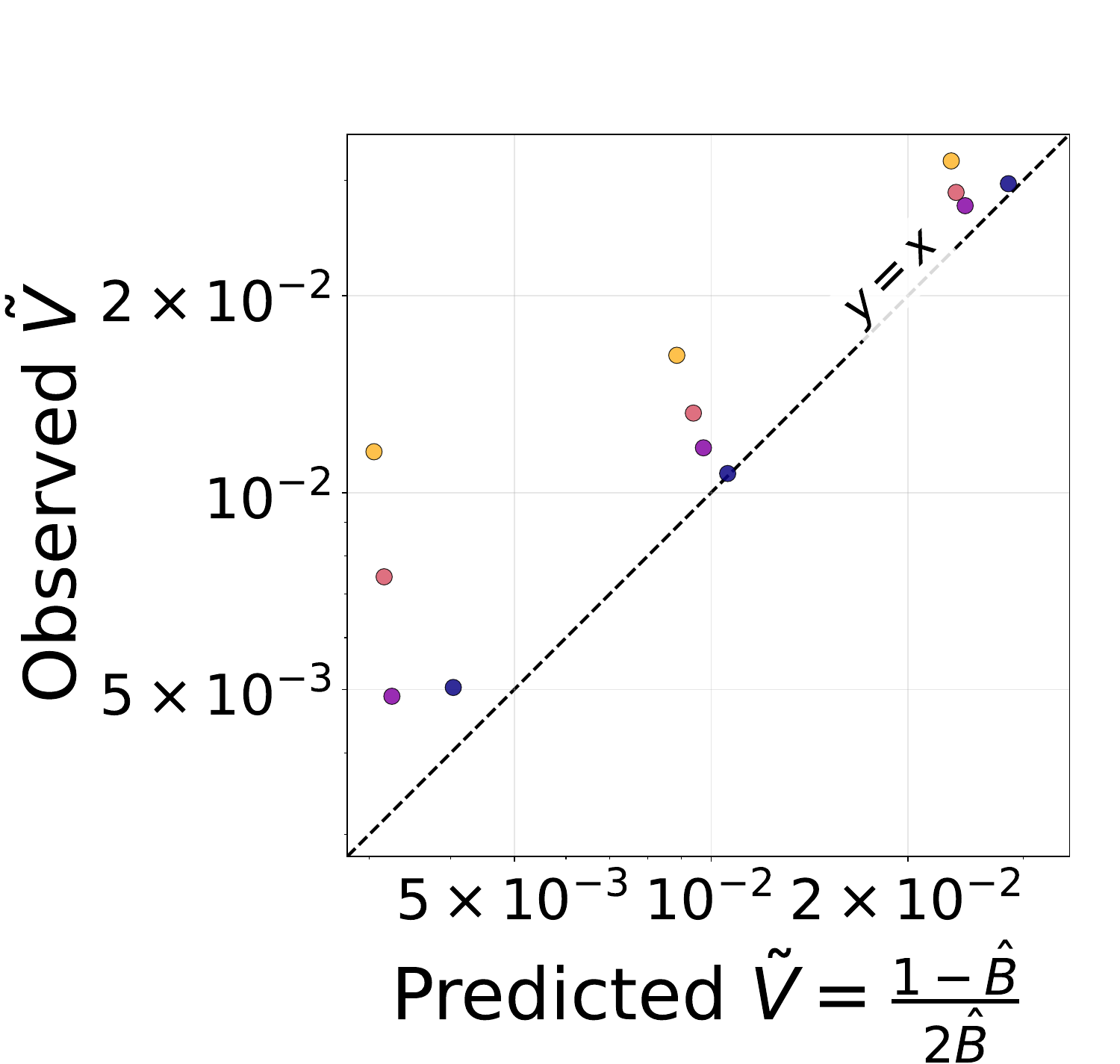} \\
        {\small {\bf (a) VICReg (dSprites)}} &
        {\small {\bf (b) I-JEPA (dSprites)}} &
        {\small {\bf (c) Barlow Twins (dSprites)}} \\[3mm]
        \multicolumn{3}{c}{\includegraphics[width=0.72\linewidth]{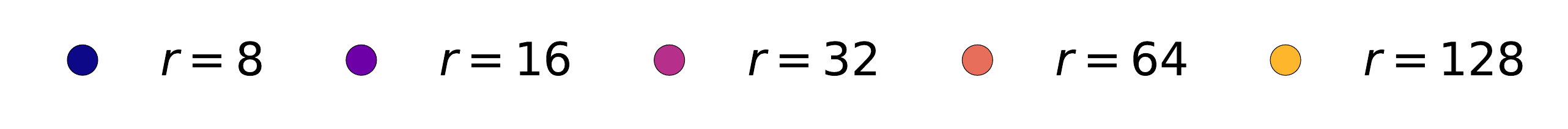}} \\[1mm]
        \includegraphics[width=0.31\linewidth]{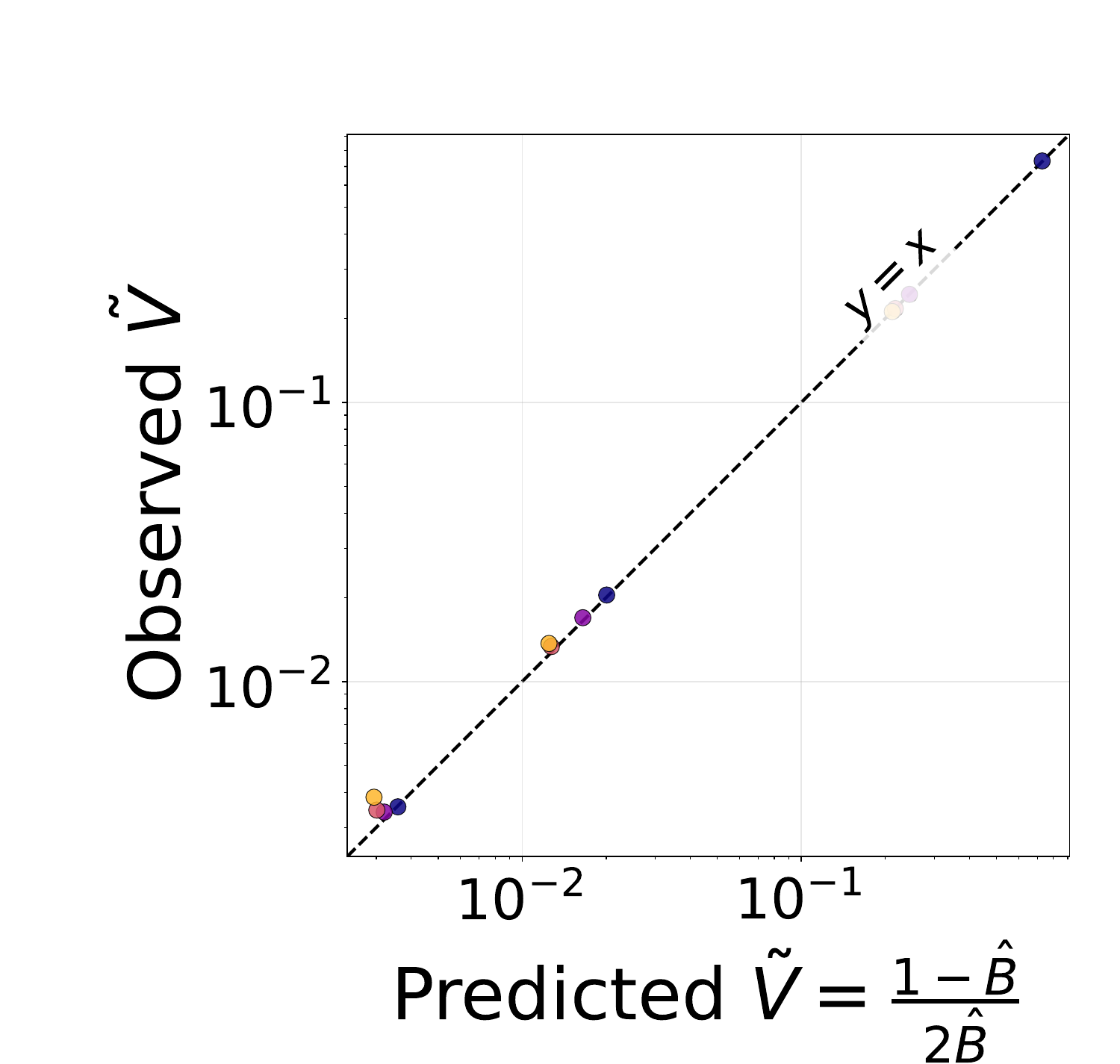} &
        \includegraphics[width=0.31\linewidth]{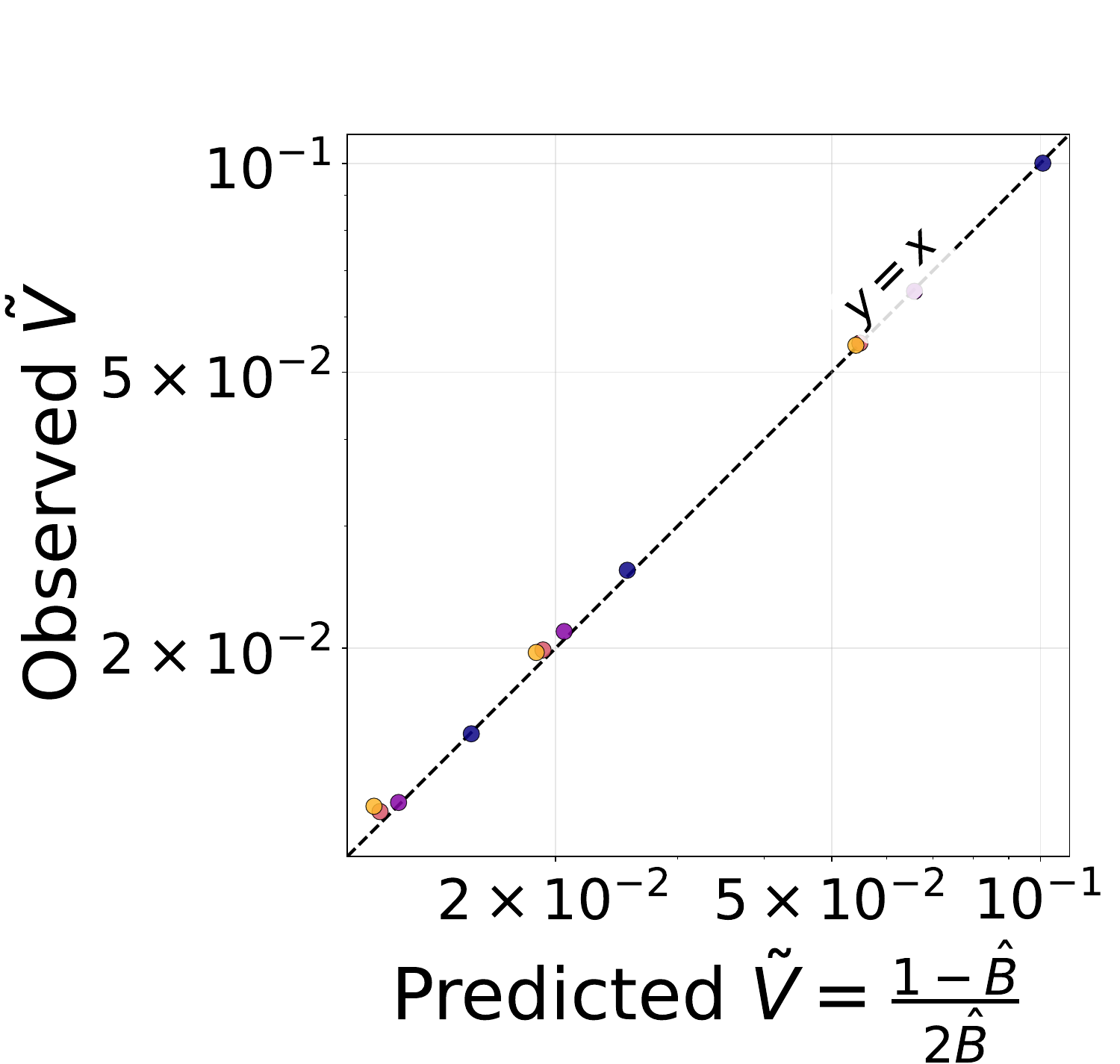} &
        \includegraphics[width=0.31\linewidth]{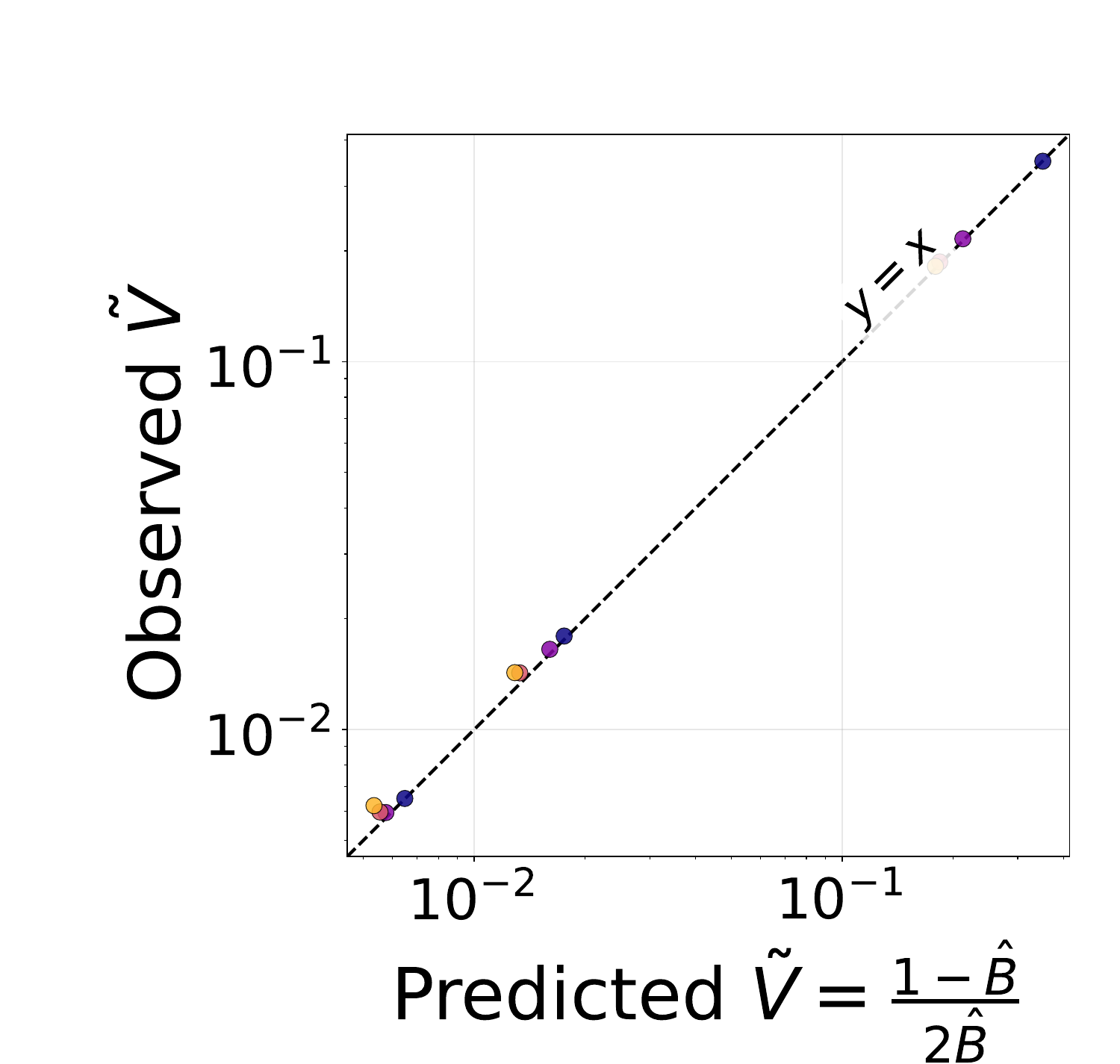} \\
        {\small {\bf (d) VICReg (3DShapes)}} &
        {\small {\bf (e) I-JEPA (3DShapes)}} &
        {\small {\bf (f) Barlow Twins (3DShapes)}}
    \end{tabular}
    \caption{{\bf Predicted versus observed directional CDNV on synthetic datasets.\enspace}
    Results for VICReg, I-JEPA, and Barlow Twins on dSprites and 3DShapes. Each point is
    one attribute at one rank $r$; the prediction is computed on split A and the observed
    directional CDNV on held-out split B. Dashed lines show the theoretical prediction $y=x$.}
    \label{fig:app_synth_dircollapse}
\end{figure}

\begin{figure}[t]
    \centering
    \setlength{\tabcolsep}{2pt}
    \begin{tabular}{@{}ccc@{}}
        \includegraphics[width=0.31\linewidth]{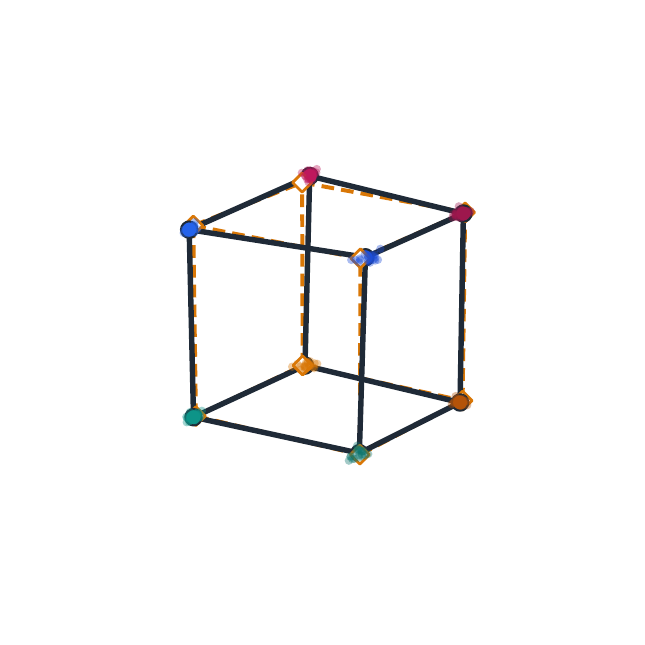} &
        \includegraphics[width=0.31\linewidth]{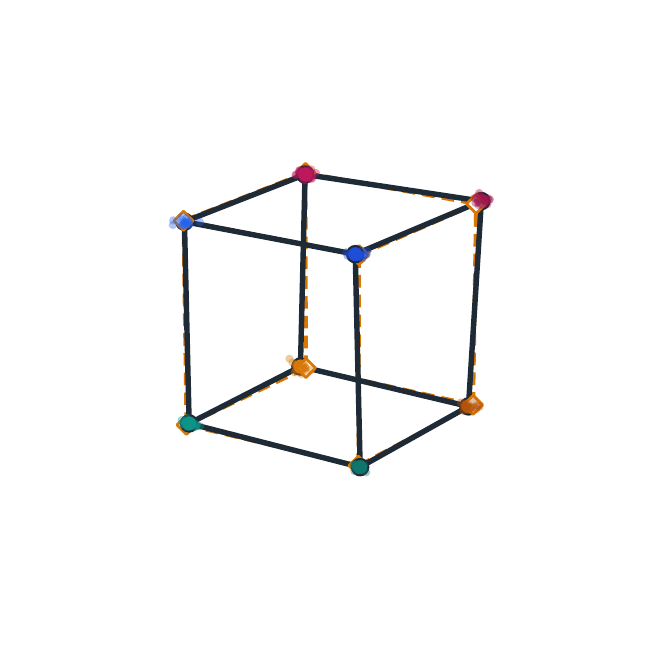} &
        \includegraphics[width=0.31\linewidth]{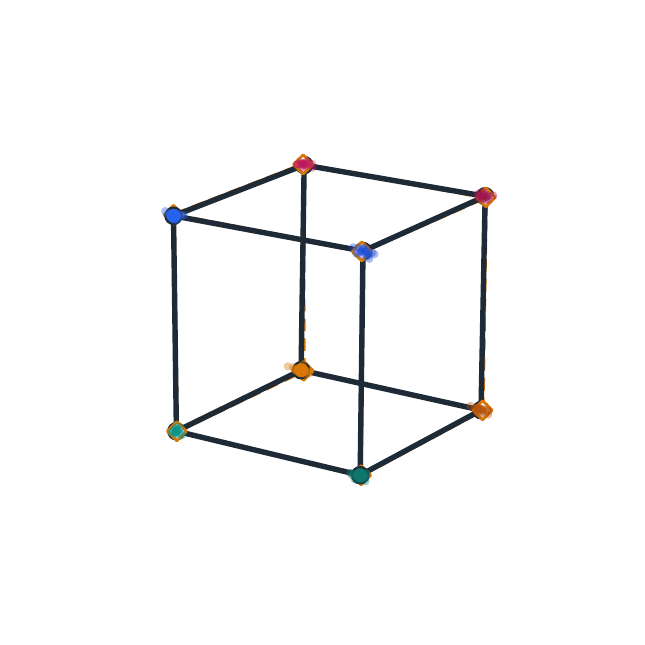} \\
        {\small {\bf (a) VICReg (dSprites)}} &
        {\small {\bf (b) I-JEPA (dSprites)}} &
        {\small {\bf (c) Barlow Twins (dSprites)}} \\[1mm]
        \includegraphics[width=0.31\linewidth]{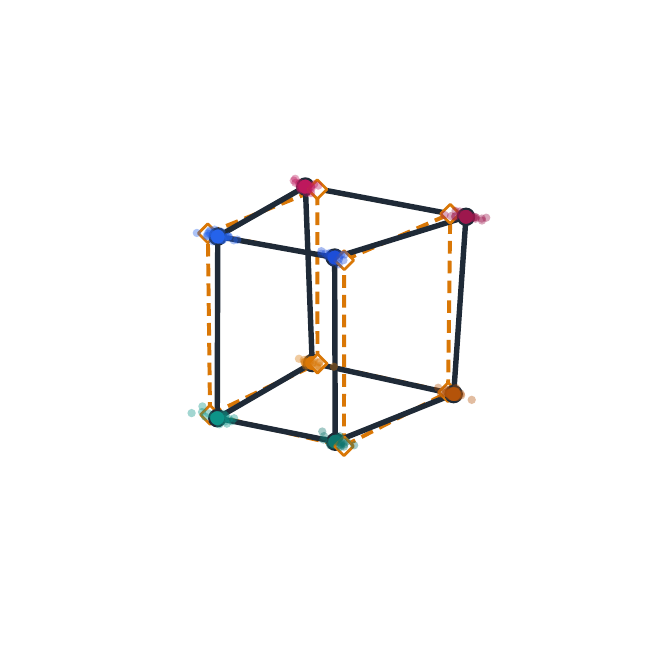} &
        \includegraphics[width=0.31\linewidth]{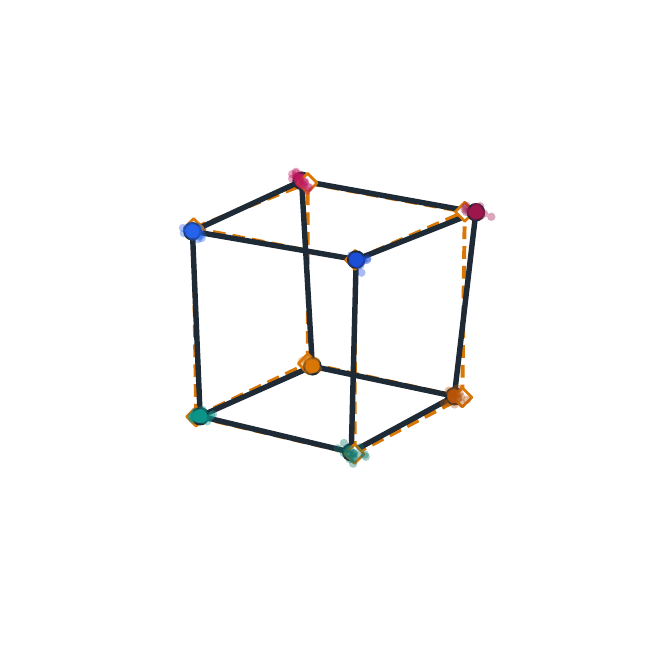} &
        \includegraphics[width=0.31\linewidth]{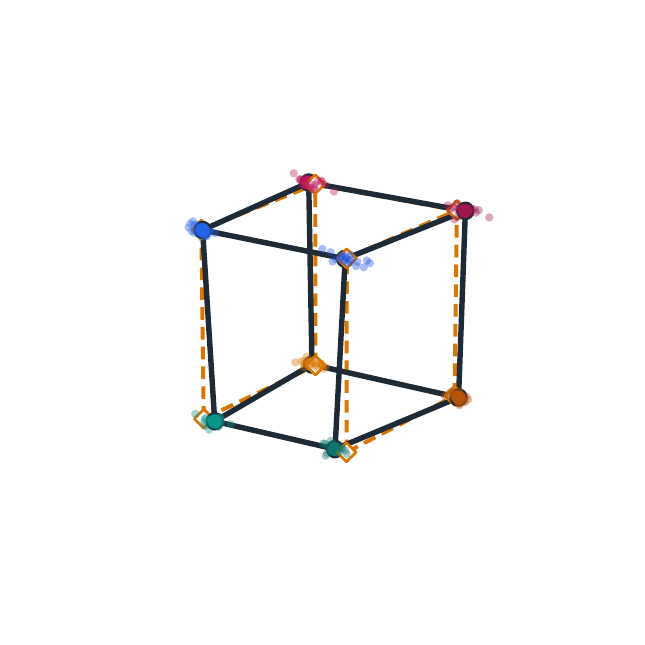} \\
        {\small {\bf (d) VICReg (3DShapes)}} &
        {\small {\bf (e) I-JEPA (3DShapes)}} &
        {\small {\bf (f) Barlow Twins (3DShapes)}}
    \end{tabular}
    \caption{{\bf Multitask centroid geometry on synthetic datasets. \enspace}
    Held-out joint centroids compared with the hyperrectangles predicted by Thm.~\ref{thm:axis_and_centroids}.}
    \label{fig:app_synth_hyperrec}
\end{figure}

\section{Covariance-Regularized SSL Objectives}
\label{app:soft_whitening}

In the main text we studied the population surrogate \eqref{eq:ssl_pop_obj_multi}, which enforces strict whitening
\(
\E[F(X)F(X)^\top]=I_r
\).
This yields an exact strict-whitening theory in which the main directional geometry is governed by the single semantic capture parameter
\(
B_r=\|\Pi_r\eta\|_{L^2(P_X)}^2
\).
Modern SSL objectives typically do not enforce exact whitening, but instead penalize covariance deviations through variance/covariance regularizers, as in Barlow Twins and VICReg. In this appendix we show that a natural covariance-regularized surrogate preserves the same spectral subspace identified by the two-view operator \(T\) in Prop.~\ref{prop:same_instance_ssl_posterior}, but the resulting geometry is no longer governed by a single scalar. Instead, three explicit quantities \((B_\gamma,C_\gamma,S_\gamma)\) appear, with the strict-whitening formulas recovered as \(\gamma\to\infty\).

\subsection{Population covariance-regularized objective}
\label{app:soft_whitening_obj}

We consider the following population surrogate:
\begin{equation}
\label{eq:ssl_pop_obj_reg}
\max_{F:\cX\to\R^r}
\ \E\!\left[\langle F(X^{(1)}),F(X^{(2)})\rangle\right]
-\frac{\gamma}{2}\left\|\E[F(X)F(X)^\top]-I_r\right\|_F^2
\quad
\text{s.t.}
\quad
\E[F(X)]=0,
\end{equation}
where \(\gamma>0\) controls the strength of the covariance penalty.
As \(\gamma\to\infty\), the penalty enforces \(\E[F(X)F(X)^\top]\to I_r\), recovering the strict whitening setting.
The centering constraint \(\E[F(X)]=0\) models the fact that practical implementations explicitly center features and, in a population analysis, can be imposed without loss of generality by working with centered features
\(
\bar F(X):=F(X)-\E[F(X)]
\)
and rewriting the objective in terms of \(\bar F\).

\subsection{Operator form and canonical modes}
\label{app:soft_whitening_operator}

Recall the two-view conditional expectation operator $(Tf)(x):=\E[f(X^{(2)})\mid X^{(1)}=x]$, $T:L^2(P_X)\to L^2(P_X)$. For any \(f,g\in L^2(P_X)\), we have
\begin{equation}
\label{eq:app_alignment_quadratic_form}
\E\big[f(X^{(1)})g(X^{(2)})\big]
=
\langle f,Tg\rangle_{L^2(P_X)}.
\end{equation}
In particular,
\[
\E\big[f(X^{(1)})f(X^{(2)})\big]
=
\langle f,Tf\rangle_{L^2(P_X)}.
\]

Let \((\lambda_j,\psi_j)_{j\ge1}\) be the orthonormal eigenbasis of \(T\) on \(L_0^2(P_X)\) from Prop.~\ref{prop:same_instance_ssl_posterior}, ordered as
\[
\lambda_1\ge\lambda_2\ge\cdots\ge 0.
\]
Thus
\[
T\psi_j=\lambda_j\psi_j,
\qquad
\langle \psi_i,\psi_j\rangle_{L^2(P_X)}=\delta_{ij},
\qquad
\E[\psi_j(X)]=0.
\]
These are exactly the canonical observable modes identified in Prop.~\ref{prop:same_instance_ssl_posterior}. This appendix keeps the same operator and asks how covariance regularization rescales its top-\(r\) \(T\)-eigenspace.

\subsection{Structure of the population optimum}
\label{app:soft_whitening_structure}

We show that covariance regularization preserves the \(T\)-selected spectral subspace and only introduces mode-dependent scalings within that subspace.

\begin{restatable}[Structure of the regularized optimum]{proposition}{regStructure}
\label{prop:regularized_structure}
Assume \(\lambda_r>0\) and \(\lambda_r>\lambda_{r+1}\).
Then any population maximizer \(F^\star_\gamma\) of \eqref{eq:ssl_pop_obj_reg} can be written as
\begin{equation}
\label{eq:Fgamma_structure}
F^\star_\gamma(x)
=
R\,\big(s_1\psi_1(x),\dots,s_r\psi_r(x)\big),
\end{equation}
where \(R\in\R^{r\times r}\) is orthogonal and \(s_1,\dots,s_r\ge0\) are scalars.
Moreover, for the quadratic penalty in \eqref{eq:ssl_pop_obj_reg}, the optimal scalings satisfy
\[
s_j^2 = 1+\frac{\lambda_j}{\gamma},
\qquad
j=1,\dots,r.
\]
\end{restatable}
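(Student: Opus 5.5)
The plan is to reduce the functional problem to a finite-dimensional spectral inequality by separating the feature covariance from the alignment term. For a centered \(F\) with coordinates \(F_1,\dots,F_r\in L_0^2(P_X)\), \eqref{eq:app_alignment_quadratic_form} gives
\[
\E\big[\langle F(X^{(1)}),F(X^{(2)})\rangle\big]=\sum_{k=1}^r\langle F_k,TF_k\rangle_{L^2(P_X)},
\]
and this sum is invariant under \(F\mapsto QF\) for orthogonal \(Q\). The penalty is invariant too, because \(\|QMQ^\top-I_r\|_F=\|M-I_r\|_F\) for \(M:=\E[FF^\top]\). Diagonalize \(M=R\,\mathrm{diag}(\sigma_1,\dots,\sigma_r)R^\top\) with \(\sigma_1\ge\cdots\ge\sigma_r\ge0\), and set \(G:=R^\top F\). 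The coordinates of \(G\) are then mutually orthogonal in \(L^2(P_X)\) with \(\|G_k\|^2=\sigma_k\). Write \(G_k=\sqrt{\sigma_k}\,g_k\) with \(g_k\) orthonormal; when \(\sigma_k=0\), complete the \(g_k\) arbitrarily. The objective becomes
\[
\sum_{k=1}^r\sigma_k\langle g_k,Tg_k\rangle-\frac{\gamma}{2}\sum_{k=1}^r(\sigma_k-1)^2 .
\]

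The key inequality is a weighted Ky Fan bound: for orthonormal \(g_1,\dots,g_r\in L_0^2\) and weights \(\sigma_1\ge\cdots\ge\sigma_r\ge0\),
\[
\sum_k\sigma_k\langle g_k,Tg_k\rangle\le\sum_k\sigma_k\lambda_k .
\]
I will prove it by Abel summation. Write \(\sum_k\sigma_k a_k=\sum_{p=1}^r(\sigma_p-\sigma_{p+1})\sum_{k\le p}a_k\) with \(\sigma_{r+1}:=0\) and \(a_k:=\langle g_k,Tg_k\rangle\). Each partial sum satisfies \(\sum_{k\le p}a_k\le\sum_{k\le p}\lambda_k\) by Ky Fan's maximum principle for the compact-like self-adjoint PSD operator \(T\) on \(L_0^2\), using the assumed eigenbasis. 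The coefficients \(\sigma_p-\sigma_{p+1}\) are nonnegative, which gives the bound.

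Substituting, the objective is at most \(\sum_k\big[\sigma_k\lambda_k-\tfrac{\gamma}{2}(\sigma_k-1)^2\big]\). This separates across \(k\), and each concave scalar term is maximized at \(\sigma_k=1+\lambda_k/\gamma\). These values are already nonincreasing in \(k\), so the ordering constraint is inactive. The bound is attained by \(F=(s_1\psi_1,\dots,s_r\psi_r)\) with \(s_j^2=1+\lambda_j/\gamma\). Hence every maximizer must have \(\sigma_j=s_j^2\) and must attain equality in the weighted Ky Fan bound.

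The main obstacle is the equality analysis, which turns ``attains the value'' into the structural form \eqref{eq:Fgamma_structure}. Since \(\sigma_r=1+\lambda_r/\gamma>0\), the top Abel term \(p=r\) has a strictly positive coefficient. Equality there forces \(\mathrm{span}\{g_1,\dots,g_r\}=\mathrm{span}\{\psi_1,\dots,\psi_r\}\), which is the Ky Fan equality case under \(\lambda_r>\lambda_{r+1}\). For \(p<r\), the coefficient \(\sigma_p-\sigma_{p+1}=(\lambda_p-\lambda_{p+1})/\gamma\) is positive exactly when \(\lambda_p>\lambda_{p+1}\). Equality there forces \(\mathrm{span}\{g_1,\dots,g_p\}\) to equal the top-\(p\) eigenspace at every such spectral gap. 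Thus each block of consecutive indices sharing a common eigenvalue \(\lambda\) has its \(g_k\) spanning exactly the \(\lambda\)-eigenspace restricted to that block. Within a block the \(\sigma_k\) are equal as well, so the \(g_k\) themselves form an orthonormal eigenbasis of that block. Because the eigenbasis in Prop.~\ref{prop:same_instance_ssl_posterior} is only determined up to rotation within eigenspaces, we may choose \(\psi_k:=g_k\); any residual signs are absorbed into \(R\). This yields \(F^\star_\gamma=R\,(s_1\psi_1,\dots,s_r\psi_r)\) with \(s_j^2=1+\lambda_j/\gamma\). If the optimum is not known a priori to exist, I will state the result for maximizers only, which is all that is claimed.
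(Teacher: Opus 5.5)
Your proposal is correct and follows essentially the same route as the paper. Both rotate to diagonalize $\E[FF^\top]$, reduce the objective to $\sum_j t_j\langle h_j,Th_j\rangle-\frac{\gamma}{2}\sum_j(t_j-1)^2$, apply the weighted Ky Fan bound, and maximize each concave scalar term at $t_j=1+\lambda_j/\gamma$. Your Abel-summation proof and gap-by-gap equality analysis supply details the paper only asserts; if $(\psi_j)$ is regarded as fixed, absorb the within-block orthogonal change into $R$ (valid because the $s_k$ are constant on each eigenvalue block, as the paper notes) rather than redefining $\psi_k:=g_k$.
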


Prop.~\ref{prop:regularized_structure} implies that the regularized optimum can be reduced to the whitened setting by an explicit population rewhitening step, and therefore inherits the same multitask geometric structure established in the main text. Let \(F^\star_\gamma\) be a population maximizer of \eqref{eq:ssl_pop_obj_reg}, and define
\[
\Sigma_\gamma:=\Cov(F^\star_\gamma(X)),
\qquad
\widetilde F_\gamma(x):=\Sigma_\gamma^{-1/2}F^\star_\gamma(x).
\]
For the maximizer in Prop.~\ref{prop:regularized_structure}, we have \(s_j^2=1+\lambda_j/\gamma>0\), hence
\(\Sigma_\gamma\succ 0\) and \(\Sigma_\gamma^{-1/2}\) is well-defined.
Then \(\widetilde F_\gamma\) is feasible for the strictly whitened objective \eqref{eq:ssl_pop_obj_multi} and is population-optimal. Indeed, by Prop.~\ref{prop:regularized_structure} we may write
\[
F^\star_\gamma(x)
=
R\,\mathrm{diag}(s_1,\dots,s_r)\,
\big(\psi_1(x),\dots,\psi_r(x)\big),
\]
so
\[
\Sigma_\gamma
=
R\,\mathrm{diag}(s_1^2,\dots,s_r^2)\,R^\top,
\qquad
\Sigma_\gamma^{-1/2}
=
R\,\mathrm{diag}(s_1^{-1},\dots,s_r^{-1})\,R^\top,
\]
and therefore
\[
\widetilde F_\gamma(x)
=
R\,\big(\psi_1(x),\dots,\psi_r(x)\big).
\]
By Prop.~\ref{prop:same_instance_ssl_posterior}, any representation of the form
\(
R(\psi_1,\dots,\psi_r)
\)
is feasible for \eqref{eq:ssl_pop_obj_multi} and achieves the optimal value \(\sum_{j=1}^r \lambda_j\), hence \(\widetilde F_\gamma\) is a whitened population optimum.

Consequently, all main-text results that rely only on the whitened-optimum structure, including the multitask near-orthogonality and factorial-centroid geometry in Thm.~\ref{thm:axis_and_centroids}, apply verbatim to \(\widetilde F_\gamma\). In the original feature coordinates of \(F^\star_\gamma\), this same structure is obtained by applying the linear map \(\Sigma_\gamma^{1/2}\), so the hyperrectangle in rewhitened probe coordinates becomes a parallelepiped in the raw \(F^\star_\gamma\) coordinates. This highlights that covariance regularization preserves the operator-selected semantic organization, but can distort Euclidean geometry unless one explicitly rewhitens features, or equivalently uses a Mahalanobis nearest-centroid rule with metric \(\Sigma_\gamma^{-1}\).

\subsection{How the CDNV formulas change}
\label{app:soft_whitening_cdnv}

Let \(Y=y(C)\in\{\pm1\}\) be balanced and define the posterior score
\[
\eta(x)=\E[Y\mid X=x].
\]
Write the posterior directly in the \(T\)-eigenbasis:
\[
\eta=\sum_{j\ge1}\beta_j\psi_j,
\qquad
\beta_j:=\langle \eta,\psi_j\rangle_{L^2(P_X)}.
\]
Then, by Cor.~\ref{cor:posterior_decomp_ssl_basis},
\[
\Pi_r\eta=\sum_{j=1}^r \beta_j\psi_j,
\qquad
B_r=\|\Pi_r\eta\|_{L^2(P_X)}^2=\sum_{j=1}^r \beta_j^2.
\]

Define the scalars
\begin{align}
\label{eq:soft_scalars}
B_\gamma
&:=\sum_{j=1}^r \beta_j^2 s_j^2,
&
C_\gamma
&:=\sum_{j=1}^r \beta_j^2 s_j^4,
&
S_\gamma
&:=\sum_{j=1}^r s_j^2.
\end{align}

\begin{restatable}{proposition}{regCDNV}
\label{prop:regularized_cdnv}
Let \(F^\star_\gamma\) be any population maximizer of \eqref{eq:ssl_pop_obj_reg}, written as in \eqref{eq:Fgamma_structure}.
Let \(\mu_\pm\) and \(\Sigma_\pm\) be the class means and class covariances defined as in the main text. Then \(\mu_-=-\mu_+\) and \(\Delta=\mu_+-\mu_-=2\mu_+\).
Moreover, when \(B_\gamma>0\), the CDNV and directional CDNV defined in \eqref{eq:problem_dcdnv_vector_binary} satisfy
\begin{equation}
\label{eq:regularized_cdnv_laws}
\tilde V_{F^\star_\gamma}
=
\frac12\left(\frac{C_\gamma}{B_\gamma^2}-1\right),
\qquad
V_{F^\star_\gamma}
=
\frac12\left(\frac{S_\gamma}{B_\gamma}-1\right).
\end{equation}
When \(B_\gamma=0\), we set both quantities to \(+\infty\) by the same convention as in the main text.
\end{restatable}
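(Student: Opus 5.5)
The plan is a direct computation: rewrite $F^\star_\gamma$ as a fixed linear map applied to the orthonormal modes $\psi_1,\dots,\psi_r$, then express both class moments through the vector $\beta=(\beta_1,\dots,\beta_r)$ and the diagonal matrix $D:=\mathrm{diag}(s_1,\dots,s_r)$.

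\emph{Step 1: reduce to the eigenmodes.} By Prop.~\ref{prop:regularized_structure}, write $F^\star_\gamma(x)=RD\Psi(x)$ with $\Psi:=(\psi_1,\dots,\psi_r)^\top$. The $\psi_j$ are orthonormal and centered in $L^2(P_X)$. Hence $\E[F^\star_\gamma(X)]=0$ and
\[
\Sigma_\gamma:=\E[F^\star_\gamma F^{\star\top}_\gamma]=RD^2R^\top .
\]
By the tower property, $\E[Y\psi_j(X)]=\E[\eta(X)\psi_j(X)]=\beta_j$. Therefore $\E[YF^\star_\gamma(X)]=RD\beta$.

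\emph{Step 2: class means.} Since $Y$ is balanced, $\E[F^\star_\gamma]=\tfrac12(\mu_++\mu_-)=0$, which gives $\mu_-=-\mu_+$. Also $\E[YF^\star_\gamma]=\tfrac12(\mu_+-\mu_-)=\mu_+$. Hence
\[
\mu_+=RD\beta,\qquad \Delta=2\mu_+=2RD\beta .
\]
Because $R$ is orthogonal, $\|\Delta\|_2^2=4\beta^\top D^2\beta=4B_\gamma$. In particular $\Delta\neq0$ exactly when $B_\gamma>0$, which is consistent with the stated convention for $B_\gamma=0$.

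\emph{Step 3: within-class covariances.} The law of total covariance with two balanced classes and means $\pm\mu_+$ gives $\Sigma_\gamma=\tfrac12(\Sigma_++\Sigma_-)+\mu_+\mu_+^\top$. Hence
\[
\Sigma_++\Sigma_-=2\bigl(RD^2R^\top-RD\beta\beta^\top DR^\top\bigr).
\]
Taking the trace gives $2(S_\gamma-B_\gamma)$, so $V_{F^\star_\gamma}=2(S_\gamma-B_\gamma)/(4B_\gamma)=\tfrac12(S_\gamma/B_\gamma-1)$.

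For the directional version, set $u=RD\beta/\sqrt{B_\gamma}$. The two quadratic forms are
\[
u^\top RD^2R^\top u=\frac{\beta^\top D^4\beta}{B_\gamma}=\frac{C_\gamma}{B_\gamma},\qquad u^\top\mu_+\mu_+^\top u=B_\gamma .
\]
So $u^\top(\Sigma_++\Sigma_-)u=2(C_\gamma/B_\gamma-B_\gamma)$. Dividing by $4B_\gamma$ yields $\tilde V_{F^\star_\gamma}=\tfrac12(C_\gamma/B_\gamma^2-1)$.

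No step is a genuine obstacle; the computation is routine once the structure from Prop.~\ref{prop:regularized_structure} is available. The points needing care are two. First, the identity $\E[Y\psi_j]=\beta_j$ uses that $\psi_j$ depends only on $X$. Second, the total-covariance decomposition uses balance, so that the between-class term is exactly $\mu_+\mu_+^\top$. As a sanity check, when $s_j\equiv1$ we get $B_\gamma=C_\gamma=B_r$ and $S_\gamma=r$, which recovers Prop.~\ref{prop:Br_probe_dcdnv}.
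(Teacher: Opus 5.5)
Your proposal is correct and follows the paper's proof essentially line for line. Like the paper, you identify $\mu_+=RD\beta$ from balance and centering, use the law of total covariance to get $\Sigma_++\Sigma_-=2(RD^2R^\top-\mu_+\mu_+^\top)$, and then take the trace for $V$ and contract along $u=\mu_+/\sqrt{B_\gamma}$ for $\tilde V$. The differences are only notational (your $D$ is the entrywise square root of the paper's $D$), plus you write out the tower-property step $\E[Y\psi_j(X)]=\beta_j$, which the paper leaves implicit.
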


For \(B_\gamma>0\), Prop.~\ref{prop:regularized_cdnv} can be plugged directly into a general CDNV-based few-shot guarantee. Namely, with \(F=F^\star_\gamma\), substituting \eqref{eq:regularized_cdnv_laws} yields the explicit \(\gamma\)-dependent bound
\begin{equation}
\label{eq:app_luthra_bound_gamma}
\mathrm{err}^{\mathrm{NCC}}_{m}(F^\star_\gamma)
\le
4\Big(\frac{C_\gamma}{B_\gamma^2}-1\Big)
+\frac{8}{\sqrt m}\sqrt{\frac12\Big(\frac{S_\gamma}{B_\gamma}-1\Big)}
+\Big(\frac{8}{\sqrt m}+\frac{4}{m}\Big)\frac12\Big(\frac{S_\gamma}{B_\gamma}-1\Big).
\end{equation}
In particular, this bound depends on \(\gamma\) only through the three scalars \(B_\gamma,C_\gamma,S_\gamma\) defined in \eqref{eq:soft_scalars}.

Thus \(\gamma\) controls a concrete tradeoff in \eqref{eq:app_luthra_bound_gamma}. The leading directional term depends on \(C_\gamma/B_\gamma^2\), while the finite-\(m\) terms depend on \(S_\gamma/B_\gamma\). Under the quadratic penalty
\(
s_j^2=1+\lambda_j/\gamma
\),
decreasing \(\gamma\) inflates all selected modes, with larger inflation on more stable modes. This increases total variance through \(S_\gamma\), but it also increases signal strength along the task-relevant direction through \(B_\gamma\) and \(C_\gamma\). The resulting few-shot behavior is therefore governed by how the posterior coefficients \(\beta_j\) overlap with the modes that are most amplified by the regularizer.

In the strict-whitening limit \(s_j\to 1\), we recover
\[
B_\gamma\to B_r,
\qquad
C_\gamma\to B_r,
\qquad
S_\gamma\to r,
\]
and therefore
\[
\tilde V_{F^\star_\gamma}\to \frac{1-B_r}{2B_r},
\qquad
V_{F^\star_\gamma}\to \frac{r-B_r}{2B_r},
\]
exactly as in Prop.~\ref{prop:Br_probe_dcdnv}.

\section{Prediction-Based SSL Objectives as Reduced-Rank Regression}
\label{app:rrr_byol_jepa}

This appendix gives a simple population-level connection between prediction-based SSL objectives, such as BYOL, SimSiam, and JEPA, and classical reduced-rank regression / partial least squares. We do not aim to model algorithmic details such as stop-gradient or EMA teachers. Instead, we study a natural whitened population surrogate and show that its optimizer selects the same \(T\)-spectral subspace as \eqref{eq:ssl_pop_obj_multi}. As a result, the main-text conclusions that depend only on the selected subspace, and therefore on the induced capture value \(B_r\), carry over directly to this prediction-based setting.

\subsection{A stylized whitened prediction surrogate}

Let \(F:\cX\to\R^r\) be a shared representation used on both views, and let \(W\in\R^{r\times r}\) be a linear predictor.
Consider the population objective
\begin{equation}
\label{eq:rrr_pop_obj}
\min_{\substack{F:\cX\to\R^r \\ W\in\R^{r\times r}}}
\ \E\Big[\big\|F(X^{(2)})-W F(X^{(1)})\big\|_2^2\Big]
\quad\text{s.t.}\quad
\E[F(X)]=0,\ \ \E[F(X)F(X)^\top]=I_r,
\end{equation}
where \((X^{(1)},X^{(2)})\) is a same-instance pair with conditionally i.i.d.\ views as in Sec.~\ref{subsec:latent_instance_model}.

Define
\begin{equation}
\label{eq:rrr_cross_cov}
C_{12}(F):=\E\big[F(X^{(1)})F(X^{(2)})^\top\big],
\qquad
C_{21}(F):=C_{12}(F)^\top.
\end{equation}

\begin{proposition}[RRR reduction to a spectral objective]
\label{prop:rrr_reduction}
Fix any feasible \(F\) in \eqref{eq:rrr_pop_obj}. The unique minimizer over \(W\) is
\[
W^\star(F)=C_{21}(F)=C_{12}(F)^\top,
\]
and the resulting minimum equals
\begin{equation}
\label{eq:rrr_reduction_value}
\min_W\ \E\Big[\big\|F(X^{(2)})-W F(X^{(1)})\big\|_2^2\Big]
=
r-\|C_{21}(F)\|_F^2.
\end{equation}
Hence minimizing \eqref{eq:rrr_pop_obj} is equivalent to maximizing \(\|C_{21}(F)\|_F^2\) over feasible \(F\).
\end{proposition}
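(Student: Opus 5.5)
The plan is to expand the squared loss as a quadratic in \(W\) and then use the whitening constraint together with the symmetry of the two views. Write \(A:=F(X^{(1)})\) and \(Z:=F(X^{(2)})\). Since \(X^{(1)}\) and \(X^{(2)}\) each have marginal law \(P_X\), feasibility of \(F\) gives \(\E[AA^\top]=\E[ZZ^\top]=I_r\). Expanding the loss then gives
\[
\E\|Z-WA\|_2^2=\Tr(\E[ZZ^\top])-2\Tr(W\,\E[AZ^\top])+\Tr(W\,\E[AA^\top]\,W^\top)=r-2\Tr(W C_{12}(F))+\|W\|_F^2 .
\]
Here \(C_{12}(F)=\E[AZ^\top]\) as in \eqref{eq:rrr_cross_cov}, and the cross term comes from \(\E[Z^\top W A]=\Tr(W\,\E[AZ^\top])\).

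Next I would complete the square, using \(\Tr(WC_{12})=\langle W,C_{12}^\top\rangle_F=\langle W,C_{21}\rangle_F\):
\[
r-2\langle W,C_{21}\rangle_F+\|W\|_F^2=r-\|C_{21}\|_F^2+\|W-C_{21}\|_F^2 .
\]
This objective is strictly convex in \(W\), with Hessian \(2I\) under whitening. So the unique minimizer is \(W^\star(F)=C_{21}(F)=C_{12}(F)^\top\), and the minimum value is \(r-\|C_{21}(F)\|_F^2\), which is \eqref{eq:rrr_reduction_value}.

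Finally, the joint minimization over \((F,W)\) equals \(\min_F\min_W\). By the display above this is \(\min_F\,(r-\|C_{21}(F)\|_F^2)\), which is equivalent to maximizing \(\|C_{21}(F)\|_F^2\) over feasible \(F\).

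The steps are routine. The point needing care is that whitening must hold for both views, not only for a single view. This is exactly where the conditionally i.i.d. two-view model of Sec.~\ref{subsec:latent_instance_model} enters: it guarantees that both views have marginal law \(P_X\). Exchangeability would additionally make \(C_{12}\) symmetric, so that \(W^\star=C_{12}\), but the claim as stated does not need this. I also need \(F(X)\in L^2\) so that all expectations are finite, and this is implied by the whitening constraint.
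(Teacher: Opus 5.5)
Your proof is correct and follows the paper's argument: expand the loss using whitening to get \(r-2\Tr(WC_{12}(F))+\|W\|_F^2\), complete the square as \(r+\|W-C_{21}(F)\|_F^2-\|C_{21}(F)\|_F^2\), and read off the unique minimizer and minimum value. Your explicit note that both views have marginal law \(P_X\), so whitening applies to \(F(X^{(1)})\) and \(F(X^{(2)})\) alike, spells out a step the paper uses without comment.
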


\begin{proof}
Using \(\E[F(X)F(X)^\top]=I_r\),
\[
\E\|F(X^{(2)})-WF(X^{(1)})\|_2^2
=r-2\Tr(WC_{12}(F))+\|W\|_F^2
=r+\|W-C_{21}(F)\|_F^2-\|C_{21}(F)\|_F^2.
\]
Thus the unique minimizer is \(W^\star(F)=C_{21}(F)=C_{12}(F)^\top\), and the minimum is \(r-\|C_{21}(F)\|_F^2\).
\end{proof}

\subsection{Selected subspace and relationship to the alignment objective}

For scalar \(f,g\in L^2(P_X)\) with \(\E[f(X)]=\E[g(X)]=0\), the same-instance model gives
\[
\E[f(X^{(2)})g(X^{(1)})]
=
\langle f,Tg\rangle_{L^2(P_X)}.
\]
Thus, if \(F=(f_1,\dots,f_r)\) is feasible and the coordinates \(\{f_k\}\) are orthonormal in \(L^2(P_X)\), then
\begin{equation}
\label{eq:rrr_matrix_L}
\big(C_{21}(F)\big)_{ij}
=
\E[f_i(X^{(2)})f_j(X^{(1)})]
=
\langle f_i,Tf_j\rangle_{L^2(P_X)}.
\end{equation}
Consequently, \(\|C_{21}(F)\|_F^2\) is the squared Hilbert--Schmidt norm of the compression \(P_UTP_U\), where \(U=\mathrm{span}\{f_1,\dots,f_r\}\subset L_0^2(P_X)\).

Let \((\lambda_j,\psi_j)_{j\ge1}\) be the orthonormal eigenbasis of \(T\) from Prop.~\ref{prop:same_instance_ssl_posterior}, with \(\lambda_1\ge\lambda_2\ge\cdots\ge 0\).

\begin{proposition}[Prediction objective selects the top-\(r\) operator subspace]
\label{prop:rrr_top_subspace}
Assume \(\lambda_r>0\) and \(\lambda_r>\lambda_{r+1}\).
Then the maximum of \(\|C_{21}(F)\|_F^2\) over feasible \(F\) equals \(\sum_{j=1}^r \lambda_j^2\), and any maximizer \(F^\star\) has coordinates spanning \(\mathrm{span}\{\psi_1,\dots,\psi_r\}\).
In particular, there exists an orthogonal \(R\in\R^{r\times r}\) such that
\begin{equation}
\label{eq:rrr_opt_form}
F^\star(x)=R\,(\psi_1(x),\dots,\psi_r(x)).
\end{equation}
\end{proposition}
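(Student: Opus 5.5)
Fix a feasible \(F=(f_1,\dots,f_r)\). Whitening makes the coordinates an orthonormal family in \(L_0^2(P_X)\), so by \eqref{eq:rrr_matrix_L} the matrix \(M:=C_{21}(F)\) has entries \(M_{ij}=\langle f_i,Tf_j\rangle\). Let \(U\) be their span and \(P_U\) the orthogonal projection onto it. Then \(M\) is the matrix of the compression \(A:=P_UTP_U\big|_U\) in this basis. Because \(T\) is self-adjoint and positive semidefinite on \(L_0^2\) (Sec.~\ref{subsec:operator_viewpoint}), \(A\) is symmetric PSD. Its eigenvalues \(\mu_1\ge\dots\ge\mu_r\ge0\) are the Ritz values of \(T\) on \(U\), and \(\|M\|_F^2=\sum_k\mu_k^2\).

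The upper bound uses Cauchy interlacing, i.e.\ the Courant--Fischer min-max principle for the compact-type spectral decomposition assumed in Prop.~\ref{prop:same_instance_ssl_posterior}. It gives \(\mu_k\le\lambda_k\) for each \(k\le r\). Concretely, \(\mu_k=\max_{V\subset U,\dim V=k}\min_{v\in V,\|v\|=1}\langle v,Tv\rangle\), and this is at most the same max over all \(k\)-dimensional subspaces of \(L_0^2\), which equals \(\lambda_k\) given the orthonormal eigenbasis. Hence \(\|C_{21}(F)\|_F^2\le\sum_{j\le r}\lambda_j^2\). Taking \(F=(\psi_1,\dots,\psi_r)\) gives \(M=\mathrm{diag}(\lambda_1,\dots,\lambda_r)\), so the bound is attained.

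For the characterization of maximizers, equality forces \(\mu_k^2=\lambda_k^2\), and therefore \(\mu_k=\lambda_k\) for every \(k\), since both sides are nonnegative and \(\mu_k\le\lambda_k\). In particular \(\operatorname{tr}A=\sum_k\langle f_k,Tf_k\rangle=\sum_{j\le r}\lambda_j\). So \(F\) also attains the optimal value of the trace objective \eqref{eq:ssl_pop_obj_multi}, because \(\E\langle F(X^{(1)}),F(X^{(2)})\rangle=\sum_k\langle f_k,Tf_k\rangle\). The uniqueness part of Prop.~\ref{prop:same_instance_ssl_posterior}, which uses \(\lambda_r>\lambda_{r+1}\), then gives \(U=\mathrm{span}\{\psi_1,\dots,\psi_r\}\). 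This reduction to the earlier proposition is the step I expect to need the most care, and it is where the gap assumption enters.

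If one prefers a self-contained argument, write \(f_k=\sum_j a_{kj}\psi_j\). Then \(\sum_k\langle f_k,Tf_k\rangle=\sum_j\lambda_j w_j\) with \(w_j=\sum_k a_{kj}^2\in[0,1]\) and \(\sum_j w_j=r\). Attaining \(\sum_{j\le r}\lambda_j\) with \(\lambda_r>\lambda_{r+1}\) forces \(w_j=0\) for \(j>r\). This puts every \(f_k\) in \(\mathrm{span}\{\psi_1,\dots,\psi_r\}\), and the dimension count gives equality of spans.

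Finally, both \(\{f_k\}\) and \(\{\psi_j\}_{j\le r}\) are orthonormal bases of the same \(r\)-dimensional space. The change-of-basis matrix \(R_{kj}=\langle f_k,\psi_j\rangle\) is therefore orthogonal, and \(F^\star=R(\psi_1,\dots,\psi_r)\), which is \eqref{eq:rrr_opt_form}. Combined with Prop.~\ref{prop:rrr_reduction}, this identifies the minimizers of \eqref{eq:rrr_pop_obj}.
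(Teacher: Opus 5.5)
Your argument is correct, but it reaches the bound by a different route than the paper. The paper proves the inequality through the Hilbert--Schmidt chain $\|P_UTP_U\|_{\mathrm{HS}}^2\le\|TP_U\|_{\mathrm{HS}}^2=\Tr(P_UT^2)\le\sum_{j\le r}\lambda_j^2$: it drops the outer projection and applies Ky Fan to $T^2$. It then settles uniqueness in one sentence by citing the eigengap, implicitly the gap $\lambda_r^2>\lambda_{r+1}^2$ of $T^2$, which uses $\lambda_{r+1}\ge0$.

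You instead use Courant--Fischer to get termwise domination $\mu_k\le\lambda_k$ of the Ritz values. This is a stronger intermediate statement: it bounds $\sum_k\phi(\mu_k)$ for any nondecreasing $\phi$ on $[0,\infty)$, not only $\sum_k\mu_k^2$. It also makes the equality case explicit. Equality forces $\mu_k=\lambda_k$, hence $\Tr A=\sum_{j\le r}\lambda_j$, so any maximizer of the prediction objective is a maximizer of \eqref{eq:ssl_pop_obj_multi} and inherits the span characterization from Prop.~\ref{prop:same_instance_ssl_posterior}. Your self-contained weight argument also works, since $\sum_j\lambda_jw_j-\sum_{j\le r}\lambda_j\le-(\lambda_r-\lambda_{r+1})\sum_{j>r}w_j$.

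This reverses the paper's logical order. The paper first gets the span from Ky Fan on $T^2$ and only afterwards deduces optimality for the alignment objective. The paper's chain is shorter. Yours requires the operator max--min characterization $\sup_{\dim V=k}\min_{v\in V,\|v\|=1}\langle v,Tv\rangle=\lambda_k$. That needs only the orthonormal eigenbasis and boundedness of $T$ (every $k$-dimensional $V$ meets $\overline{\mathrm{span}}\{\psi_j:j\ge k\}$), not compactness. In return, no step of the uniqueness argument is left implicit.
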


\begin{proof}
Let \(U=\mathrm{span}\{f_1,\dots,f_r\}\) and let \(P_U\) be the orthogonal projector onto \(U\). In the orthonormal basis \(\{f_j\}\), \(C_{21}(F)\) is the matrix of \(P_UTP_U\), and hence
\[
\|C_{21}(F)\|_F^2
=\|P_UTP_U\|_{\mathrm{HS}}^2
\le \|TP_U\|_{\mathrm{HS}}^2
=\Tr(P_UT^2)
\le \sum_{j=1}^r\lambda_j^2.
\]
The last inequality is Ky Fan's variational principle applied to the positive semidefinite operator \(T^2\). Equality is attained for \(U=\mathrm{span}\{\psi_1,\dots,\psi_r\}\), and the eigengap \(\lambda_r>\lambda_{r+1}\) makes this maximizing subspace unique.
\end{proof}

\subsection{Consequences for recoverability, CDNV, and downstream guarantees}

Prop.~\ref{prop:rrr_top_subspace} shows that the whitened prediction surrogate \eqref{eq:rrr_pop_obj} selects the same \(r\)-dimensional \(T\)-eigenspace as \eqref{eq:ssl_pop_obj_multi}. In particular, any maximizer \(F^\star\) of \eqref{eq:rrr_pop_obj} has coordinates spanning \(\mathrm{span}\{\psi_1,\dots,\psi_r\}\), so \(F^\star\) is feasible for \eqref{eq:ssl_pop_obj_multi}. Since the strictly whitened alignment objective \eqref{eq:ssl_pop_obj_multi} depends only on the chosen \(r\)-dimensional subspace and is maximized by any orthonormal basis of the top-\(r\) eigenspace, as shown in Prop.~\ref{prop:same_instance_ssl_posterior}, \(F^\star\) is also population-optimal for \eqref{eq:ssl_pop_obj_multi} and achieves the optimal value \(\sum_{j=1}^r \lambda_j\). Therefore, all main-text results that apply to whitened population optima of \eqref{eq:ssl_pop_obj_multi} apply directly to \(F^\star\).

In particular, for any downstream binary label \(Y=y(C)\) with posterior score \(\eta(x)=\E[Y\mid X=x]\), writing
\[
\eta=\sum_{j\ge1}\beta_j\psi_j,
\]
the capture value remains
\[
B_r=\sum_{j=1}^r \beta_j^2,
\]
and the closed-form directional CDNV and CDNV laws for whitened optima from Prop.~\ref{prop:Br_probe_dcdnv} hold unchanged for \(F^\star\). Consequently, the same few-shot NCC guarantees from Thm.~\ref{thm:ncc_bound_direct_via_Br} and the same multitask geometry conclusions from Thm.~\ref{thm:axis_and_centroids} follow immediately for this stylized prediction-based surrogate.

\newpage

\section{Proofs}

\subsection{Proofs of the Results in the Main Text}

\cdnv*

\begin{proof}
Let \(\mu_s:=\E[F(X)\mid Y=s]\) and \(\Sigma_s:=\Cov(F(X)\mid Y=s)\) for \(s\in\{\pm1\}\), and set \(\Delta:=\mu_+-\mu_-\) with \(u:=\Delta/\|\Delta\|_2\) when \(\Delta\neq 0\). Balancedness of \(Y\) and \(\E[F(X)]=0\) give \(\tfrac12\mu_++\tfrac12\mu_-=0\), so \(\mu_-=-\mu_+\) and \(\Delta=2\mu_+\). Moreover, \(\E[YF(X)]=\tfrac12\mu_+-\tfrac12\mu_-=\mu_+\), and by iterated expectation this also equals \(\E[\eta(X)F(X)]\).

Write \(F=(f_1,\dots,f_r)\). Centering and whitening imply that \(\{f_1,\dots,f_r\}\) is orthonormal in \(L^2(P_X)\), so \(\cS_F=\mathrm{span}\{f_1,\dots,f_r\}\) is an \(r\)-dimensional subspace and \((\mu_+)_i=\E[\eta(X)f_i(X)]=\langle\eta,f_i\rangle\). Hence \(\Pi_{\cS_F}\eta=\sum_i(\mu_+)_if_i\), and by orthonormality \(\|\mu_+\|_2^2=\|\Pi_{\cS_F}\eta\|_{L^2}^2=B(F)\), giving \(\|\Delta\|_2^2=4B(F)\). If \(B(F)=0\) then \(\Delta=0\) and both ratios are \(+\infty\); assume \(B(F)>0\) below.

By the law of total covariance and whitening, \(I_r=\tfrac12(\Sigma_++\Sigma_-)+\Cov(\mu_Y)\), where \(\mu_Y\in\{\mu_+,\mu_-\}\) uniformly. Since \(\mu_-=-\mu_+\), \(\Cov(\mu_Y)=\mu_+\mu_+^\top\), yielding
\begin{equation}
\label{eq:proof_Ssum}
\Sigma_++\Sigma_-=2(I_r-\mu_+\mu_+^\top).
\end{equation}
Taking traces in \eqref{eq:proof_Ssum} gives \(\Tr(\Sigma_+)+\Tr(\Sigma_-)=2(r-B(F))\), so \(V_F=(r-B(F))/(2B(F))\) after dividing by \(\|\Delta\|_2^2=4B(F)\). For the directional ratio, \(u=\mu_+/\|\mu_+\|_2\) gives \(u^\top(\Sigma_++\Sigma_-)u=2(1-\|\mu_+\|_2^2)=2(1-B(F))\) by \eqref{eq:proof_Ssum}, so \(\tilde V_F=(1-B(F))/(2B(F))\).
\end{proof}

\duality*

\begin{proof}
Let \(Z:=F(X)\), so that \(\E[Z]=0\), \(\E[ZZ^\top]=I_r\), and \(\E[Y]=0\). Expanding the squared loss and using these identities gives
\[
\cR(w,b):=\E\big[(Y-(w^\top Z+b))^2\big]
=
1-2w^\top\E[YZ]+\|w\|_2^2+b^2,
\]
a strictly convex quadratic in \((w,b)\) whose unique minimizer is \(w^\star=\E[YZ]\) and \(b^\star=0\). Let \(\mu_s:=\E[Z\mid Y=s]\). Balancedness gives \(\E[YZ]=\tfrac12(\mu_+-\mu_-)=\tfrac12\Delta\) and \(\E[Z]=\tfrac12\mu_++\tfrac12\mu_-=0\), so \(\mu_-=-\mu_+\) and hence \(\Delta=2\mu_+=2w^\star\).
\end{proof}

\nccBound*

\begin{proof}
If \(B(F)=0\), the stated bound is trivial since its right-hand side is at least \(1\). Assume henceforth that \(B(F)>0\). Let \(Z:=F(X)\) and \(w:=\E[YZ]\), so \(\E[Z]=0\), \(\E[ZZ^\top]=I_r\), and \(\|w\|_2^2=B(F)\). Balancedness and \(\E[Z]=0\) give \(\mu_\pm:=\E[Z\mid Y=\pm1]=\pm w\), hence \(\Delta=2w\) and \(\|\Delta\|_2^2=4B(F)\). As in the proof of Prop.~\ref{prop:Br_probe_dcdnv}, the law of total covariance yields
\begin{equation}
\label{eq:sigma_sum_generic}
\Sigma_++\Sigma_-=2(I_r-ww^\top),
\end{equation}
where \(\Sigma_s:=\Cov(Z\mid Y=s)\); taking traces gives \(\Tr(\Sigma_+)+\Tr(\Sigma_-)=2(r-B(F))\), and contracting with \(u:=w/\|w\|_2\) gives \(u^\top(\Sigma_++\Sigma_-)u=2(1-B(F))\).

{\bf NCC rule as a margin.\enspace} Draw an \(m\)-shot support set with empirical class means \(\hat\mu_\pm:=\tfrac1m\sum_{i=1}^m Z_{\pm,i}\), and set \(\hat w:=(\hat\mu_+-\hat\mu_-)/2\) and \(c:=(\hat\mu_++\hat\mu_-)/2\). The NCC rule is \(g_{S_m}^{\mathrm{NCC}}(z)=\sign(\langle\hat w,z-c\rangle)\), since its decision boundary is the perpendicular bisector of \(\hat\mu_\pm\). Define the signed margin \(M_{S_m}:=2Y\langle\hat w,Z-c\rangle\). Regardless of how ties are broken, \(\{g_{S_m}^{\mathrm{NCC}}(Z)\neq Y\}\subseteq\{M_{S_m}\le 0\}\).

{\bf Cantelli bound conditional on the support set.\enspace} Since \((Z,Y)\) is independent of \(S_m\) with \(\E[YZ]=w\), \(\E[Z]=0\), \(\E[ZZ^\top]=I_r\),
\[
\E[M_{S_m}\mid S_m]=2\langle\hat w,w\rangle,
\qquad
\Var(M_{S_m}\mid S_m)=4\big(\|\hat w\|_2^2+\langle\hat w,c\rangle^2-\langle\hat w,w\rangle^2\big).
\]
On the event \(G:=\{\langle\hat w,w\rangle>0\}\), Cantelli's inequality gives \(\Prob(M_{S_m}\le 0\mid S_m)\le 1-f(S_m)\), where
\[
f(S_m):=
\begin{cases}
\dfrac{\langle\hat w,w\rangle^2}{\|\hat w\|_2^2+\langle\hat w,c\rangle^2}, & \hat w\neq0,\\[1mm]
0, & \hat w=0.
\end{cases}
\]
Since \(0\le f(S_m)\le 1\), bounding \(f(S_m)\mathbf 1_G\ge f(S_m)-\mathbf 1_{G^c}\) and taking expectations gives
\begin{equation}
\label{eq:key_reduce}
\err_m^{\mathrm{NCC}}(F)\le 1-\E[f(S_m)]+\Prob(G^c).
\end{equation}

{\bf Lower bound on \(\E[f(S_m)]\).\enspace} When \(\hat w\neq 0\), set \(p:=\langle\hat w,w\rangle^2/\|\hat w\|_2^2=\|P_{\mathrm{span}(\hat w)}w\|_2^2\); then \(f(S_m)=p/(1+\langle\hat w/\|\hat w\|_2,c\rangle^2)\). The inequality \(x/(1+y)\ge x-y\) for \(0\le x\le 1\), \(y\ge 0\), combined with \(p\le\|w\|_2^2=B(F)\le 1\), yields \(f(S_m)\ge p-\|c\|_2^2\). Bounding the distance from \(w\) to \(\mathrm{span}(\hat w)\) by \(\|w-\hat w\|_2\) gives \(p\ge B(F)-\|\hat w-w\|_2^2\); when \(\hat w=0\) the inequality \(f(S_m)\ge B(F)-\|\hat w-w\|_2^2-\|c\|_2^2\) still holds since then \(\|w-\hat w\|_2^2=B(F)\). Writing \(\hat w-w=(\varepsilon_+-\varepsilon_-)/2\) and \(c=(\varepsilon_++\varepsilon_-)/2\) with \(\varepsilon_\pm:=\hat\mu_\pm-\mu_\pm\), class independence gives
\[
\E\|\hat w-w\|_2^2=\E\|c\|_2^2=\frac{\Tr(\Sigma_+)+\Tr(\Sigma_-)}{4m}=\frac{r-B(F)}{2m},
\]
so \(\E[f(S_m)]\ge B(F)-(r-B(F))/m\).

\emph{Bound on \(\Prob(G^c)\).} Since \(\langle\hat w,w\rangle=B(F)+\langle\hat w-w,w\rangle\), we have \(\E[\langle\hat w,w\rangle]=B(F)\), and
\[
\Var(\langle\hat w,w\rangle)=w^\top\Cov(\hat w-w)w=\frac{B(F)}{4m}\,u^\top(\Sigma_++\Sigma_-)u=\frac{B(F)(1-B(F))}{2m}
\]
using \eqref{eq:sigma_sum_generic}. Cantelli gives \(\Prob(G^c)\le(1-B(F))/(1-B(F)+2mB(F))\).

Combining this with \eqref{eq:key_reduce} and the lower bound on \(\E[f(S_m)]\) yields \(\err_m^{\mathrm{NCC}}(F)\le 1-B(F)+(r-B(F))/m+(1-B(F))/(1-B(F)+2mB(F))\).
\end{proof}

\rectangle*

\begin{proof}
Write \(F=(f_1,\dots,f_r)\). Centering and whitening make \(\{f_1,\dots,f_r\}\) orthonormal in \(L^2(P_X)\), so \(\cS_F:=\mathrm{span}\{f_1,\dots,f_r\}\) is \(r\)-dimensional with orthogonal projector \(\Pi_{\cS_F}\). For each task \(t\), let \(w_t:=\E[Y_tF(X)]\in\R^r\) with coordinates \((w_t)_\ell=\E[\eta_t(X)f_\ell(X)]=\langle\eta_t,f_\ell\rangle\), so \(\Pi_{\cS_F}\eta_t=\sum_\ell(w_t)_\ell f_\ell\) and hence
\begin{equation}
\label{eq:proof_wt_BF}
\|w_t\|_2^2=\|\Pi_{\cS_F}\eta_t\|_{L^2}^2=B^{(t)}(F).
\end{equation}
Write \(\eta_t^\perp:=(I-\Pi_{\cS_F})\eta_t\), so \(\|\eta_t^\perp\|_{L^2}^2=\|\eta_t\|_{L^2}^2-B^{(t)}(F)\).

{\bf Near-orthogonality.\enspace} By \eqref{eq:proof_wt_BF}, \(u_i^\top u_j=\langle\Pi_{\cS_F}\eta_i,\Pi_{\cS_F}\eta_j\rangle/\sqrt{B^{(i)}(F)B^{(j)}(F)}\). Since \(\langle\Pi_{\cS_F}\eta_i,\Pi_{\cS_F}\eta_j\rangle=\langle\eta_i,\eta_j\rangle-\langle\eta_i^\perp,\eta_j^\perp\rangle\), Cauchy--Schwarz on the residual term gives
\begin{equation}
\label{eq:proof_inner_split}
\big|\langle\Pi_{\cS_F}\eta_i,\Pi_{\cS_F}\eta_j\rangle\big|
\le
|\langle\eta_i,\eta_j\rangle|+\sqrt{(\|\eta_i\|_{L^2}^2-B^{(i)}(F))(\|\eta_j\|_{L^2}^2-B^{(j)}(F))}.
\end{equation}
To bound the first term, set \(\xi_t:=Y_t-\eta_t(X)\); then \(\E[\xi_t\mid X]=0\) and \(\E[\xi_t^2]=\varepsilon_t\). Expanding \(Y_iY_j=(\eta_i+\xi_i)(\eta_j+\xi_j)\) and using conditional mean zero on the cross terms, \(\rho_{ij}=\E[\eta_i(X)\eta_j(X)]+\E[\xi_i\xi_j]\), so \(|\langle\eta_i,\eta_j\rangle|\le|\rho_{ij}|+\sqrt{\varepsilon_i\varepsilon_j}\) by Cauchy--Schwarz. Substituting into \eqref{eq:proof_inner_split} yields the stated bound on \(|u_i^\top u_j|\); no independence among the \(Y_t\) was used.

{\bf Centroid geometry.\enspace} Let \(Z(X)=(Z_1(X),\dots,Z_k(X))\) and \(m_Y^{(t)}:=\E[Z_t(X)\mid Y]\). By \eqref{eq:proof_wt_BF}, \(\E[Y_tZ_t(X)]=u_t^\top w_t=\|w_t\|_2=\sqrt{B^{(t)}(F)}\), and iterated expectation gives \(\E[Y_tm_Y^{(t)}]=\sqrt{B^{(t)}(F)}\). Expanding the square and using \(Y_t^2=1\),
\[
\E\big[(m_Y^{(t)}-\sqrt{B^{(t)}(F)}\,Y_t)^2\big]
=\E[(m_Y^{(t)})^2]-B^{(t)}(F)
\le\E[Z_t^2]-B^{(t)}(F)
=1-B^{(t)}(F),
\]
where Jensen gives the inequality and whitening gives \(\E[Z_t^2]=u_t^\top I_r u_t=1\); this step also used no joint structure of the \(Y_t\). Summing over \(t\) yields the claimed centroid bound.
\end{proof}

\posterior*

\begin{proof}
Write \(F=(f_1,\dots,f_r)\). The constraints
\(\E[F(X)]=0\) and \(\E[F(X)F(X)^\top]=I_r\) are equivalent to requiring
\(\{f_1,\dots,f_r\}\) to be an orthonormal family in \(L_0^2(P_X)\).
Using \eqref{eq:app_alignment_quadratic_form}, the objective is therefore
\[
\E\!\left[\langle F(X^{(1)}),F(X^{(2)})\rangle\right]
=
\sum_{\ell=1}^r \langle f_\ell,Tf_\ell\rangle_{L^2(P_X)}.
\]

Expand \(f_\ell=\sum_{j\ge1}a_{\ell j}\psi_j\) in the orthonormal eigenbasis of
\(T\), and define \(s_j:=\sum_{\ell=1}^r a_{\ell j}^2\). Since
\(T\psi_j=\lambda_j\psi_j\),
\[
\sum_{\ell=1}^r \langle f_\ell,Tf_\ell\rangle
=
\sum_{j\ge1}\lambda_j s_j,
\qquad
0\le s_j\le1,
\qquad
\sum_{j\ge1}s_j=r.
\]
The inequalities follow because the coefficient matrix
\(A=(a_{\ell j})\) has orthonormal rows. Since
\(\lambda_1\ge\lambda_2\ge\cdots\), the maximum is attained by
\(s_1=\cdots=s_r=1\) and \(s_j=0\) for \(j>r\), with optimal value
\(\sum_{j=1}^r\lambda_j\).

Taking \(F_0=(\psi_1,\dots,\psi_r)\) attains this value, so
\(\psi_1,\dots,\psi_r\) are feasible and optimal. Finally, the eigengap
\(\lambda_r>\lambda_{r+1}\) makes the maximizing allocation unique.
Hence any optimal \(F\) has coordinate span
\(\mathrm{span}\{\psi_1,\dots,\psi_r\}\).
\end{proof}

\decomp*

\begin{proof}
By Prop.~\ref{prop:same_instance_ssl_posterior}, \(\cS_{F^\star}=\Ur\). Since \((\psi_j)_{j\ge1}\) is an orthonormal eigenbasis of \(T\) on \(L_0^2(P_X)\) and \(\eta\in L_0^2(P_X)\), write \(\eta=\sum_{j\ge1}\beta_j\psi_j\), where \(\beta_j=\langle\eta,\psi_j\rangle_{L^2(P_X)}\). Hence \(\Pi_r\eta=\sum_{j=1}^r\beta_j\psi_j\), and orthonormality gives \(B(F^\star)=\|\Pi_r\eta\|_{L^2(P_X)}^2=\sum_{j=1}^r\beta_j^2\).
\end{proof}

\subsection{Proofs for Appendix~\ref{app:soft_whitening}}

\regStructure*

\begin{proof}
Let \(G=\E[F(X)F(X)^\top]\). Both terms in \eqref{eq:ssl_pop_obj_reg} are invariant under orthogonal changes of feature coordinates, so diagonalize \(G\) and write the resulting coordinates as \(g_j=\sqrt{t_j}\,h_j\), where \(t_j\ge0\) and the nonzero \(h_j\)'s are orthonormal in \(L_0^2(P_X)\); complete them arbitrarily to an orthonormal family when some \(t_j=0\). Reorder so that \(t_1\ge\cdots\ge t_r\). The objective becomes
\[
\sum_{j=1}^r t_j\langle h_j,Th_j\rangle
-\frac{\gamma}{2}\sum_{j=1}^r(t_j-1)^2.
\]
By the weighted Ky Fan principle, \(\sum_j t_j\langle h_j,Th_j\rangle\le\sum_j t_j\lambda_j\). Hence the objective is at most
\[
\sum_{j=1}^r\left[t_j\lambda_j-\frac{\gamma}{2}(t_j-1)^2\right],
\]
whose \(j\)-th term is uniquely maximized at \(t_j=1+\lambda_j/\gamma\). Equality is attained by taking \(h_j=\psi_j\). The eigengap fixes the maximizing \(r\)-dimensional subspace, while rotations within eigenspaces of repeated eigenvalues are absorbed into an orthogonal output matrix. Thus every maximizer has the stated form with \(s_j^2=t_j=1+\lambda_j/\gamma\).
\end{proof}

\regCDNV*

\begin{proof}
Let \(D=\mathrm{diag}(s_1^2,\dots,s_r^2)\) and \(b=(s_1\beta_1,\dots,s_r\beta_r)^\top\). Then \(\Cov(F^\star_\gamma(X))=RDR^\top\), while balancedness and centering give \(\mu_+=Rb\), \(\mu_-=-\mu_+\), and \(\|\mu_+\|_2^2=B_\gamma\). By total covariance,
\[
\Sigma_++\Sigma_-=2\big(RDR^\top-\mu_+\mu_+^\top\big).
\]
Taking traces and using \(\|\Delta\|_2^2=4B_\gamma\) yields \(V_{F^\star_\gamma}=\tfrac12(S_\gamma/B_\gamma-1)\). For \(u=\mu_+/\sqrt{B_\gamma}\), we have \(u^\top RDR^\top u=C_\gamma/B_\gamma\), so contraction along \(u\) gives \(\tilde V_{F^\star_\gamma}=\tfrac12(C_\gamma/B_\gamma^2-1)\). The case \(B_\gamma=0\) follows from the stated convention.
\end{proof}

\end{document}